\documentclass{article}




\PassOptionsToPackage{sort,numbers}{natbib}
\usepackage[final]{neurips_2022}


\usepackage[utf8]{inputenc} 
\usepackage[T1]{fontenc}    
\usepackage{hyperref}       
\usepackage{url}            
\usepackage{booktabs}       
\usepackage{amsfonts}       
\usepackage{nicefrac}       
\usepackage{microtype}      
\usepackage{xcolor}         

\usepackage{array}
\usepackage{graphicx}
\usepackage{subfigure}
\usepackage{amsmath, bm}
\usepackage{dsfont}
\usepackage{amssymb}
\usepackage{multirow}
\usepackage{algorithm}
\usepackage{algpseudocode}
\usepackage[vlined,linesnumbered,ruled,algo2e]{algorithm2e}
\usepackage[linesnumbered,ruled,vlined]{algorithm2e}
\usepackage[flushleft]{threeparttable}

\usepackage{varwidth}
\usepackage{pifont}
\usepackage{makecell}
\usepackage{wrapfig}
\usepackage{enumitem}
\usepackage{caption}
\usepackage{dsfont}
\usepackage{soul}
\usepackage{varwidth}
\usepackage{pifont}
\usepackage{makecell}
\usepackage{enumitem}

\usepackage[hang,flushmargin]{footmisc}

\newcommand{\tabincell}[2]{\begin{tabular}{@{}#1@{}}#2\end{tabular}}

\usepackage{scalerel,stackengine}
\stackMath
\newcommand\reallywidehat[1]{%
\savestack{\tmpbox}{\stretchto{%
  \scaleto{%
    \scalerel*[\widthof{\ensuremath{#1}}]{\kern-.6pt\bigwedge\kern-.6pt}%
    {\rule[-\textheight/2]{1ex}{\textheight}}
  }{\textheight}%
}{0.5ex}}%
\stackon[1pt]{#1}{\tmpbox}%
}

\usepackage{listings}
\usepackage{color}

\usepackage{etoolbox}
\makeatletter
\AfterEndEnvironment{algorithm}{\let\@algcomment\relax}
\AtEndEnvironment{algorithm}{\kern2pt\hrule\relax\vskip3pt\@algcomment}
\let\@algcomment\relax
\newcommand\algcomment[1]{\def\@algcomment{\footnotesize#1}}
\renewcommand\fs@ruled{\def\@fs@cfont{\bfseries}\let\@fs@capt\floatc@ruled
  \def\@fs@pre{\hrule height.8pt depth0pt \kern2pt}%
  \def\@fs@post{}%
  \def\@fs@mid{\kern2pt\hrule\kern2pt}%
  \let\@fs@iftopcapt\iftrue}
\makeatother

\usepackage{amsmath}
\usepackage{amssymb}
\usepackage{mathtools}
\usepackage{amsthm}

\usepackage[capitalize,noabbrev]{cleveref}

\theoremstyle{plain}
\newtheorem{theorem}{Theorem}[section]

\newtheorem{lemma}{Lemma}[section]
\newtheorem{condition}{Condition}[theorem]

\theoremstyle{definition}
\newtheorem{definition}[theorem]{Definition}

\theoremstyle{remark}
\newtheorem{remark}[theorem]{Remark}

\newenvironment{customthm}[1]
  {\theorem}
  {\endtheorem}
\newenvironment{customlemma}[1]
  {\lemma}
  {\endlemma}




\newcommand{\diag}{\mat{diag}}

\newcommand{\mat}[1]{\bm{#1}}
\newcommand{\vect}[1]{\bm{#1}}
\newcommand{\norm}[1]{\left\|#1\right\|}

\newcommand{\abs}[1]{\left|#1\right|}
\newcommand{\expect}{\mathbb{E}}

\newcommand{\variance}{\text{Var}}
\newcommand{\params}{\vect{\theta}}

\newcommand{\error}{\mathcal{E}}
\newcommand{\rhobound}{\Lambda}
\newcommand{\wbound}{\mathfrak{W}}
\newcommand{\linsub}{\mathcal{W}}

\newcommand{\activate}{\rho}

\newcommand{\relu}[1]{\sigma\left(#1\right)}

\usepackage[textsize=tiny]{todonotes}

\title{Deep Architecture Connectivity Matters for Its Convergence: A Fine-Grained Analysis}


%

\author{%
  Wuyang Chen\thanks{Equal Contribution.}\\
  University of Texas at Austin\\
   \And
  Wei Huang$^*$\\
  RIKEN AIP\\
  \And
  Xinyu Gong\\
  University of Texas at Austin\\
  \And
  Boris Hanin\\
  Princeton University\\
  \And
  Zhangyang Wang\\
  University of Texas at Austin\\
}

\begin{document}

\maketitle

\begin{abstract}
Advanced deep neural networks (DNNs), designed by either human or AutoML algorithms, are growing increasingly complex. Diverse operations are connected by complicated connectivity patterns, e.g., various types of skip connections. Those topological compositions are empirically effective and observed to smooth the loss landscape and facilitate the gradient flow in general. However, it remains elusive to derive any principled understanding of their effects on the DNN capacity or trainability, and to understand why or in which aspect one specific connectivity pattern is better than another. In this work, we theoretically characterize the impact of connectivity patterns on the convergence of DNNs under gradient descent training in fine granularity. By analyzing a wide network's Neural Network Gaussian Process (NNGP), we are able to depict how the spectrum of an NNGP kernel propagates through a particular connectivity pattern, and how that affects the bound of convergence rates. As one practical implication of our results, we show that by a simple filtration on ``unpromising" connectivity patterns, we can trim down the number of models to evaluate, and significantly accelerate the large-scale neural architecture search without any overhead.
Code is available at: \url{https://github.com/VITA-Group/architecture_convergence}.
\end{abstract}

\section{Introduction}

Recent years have witnessed substantial progress in designing better deep neural network architectures.
The common objective is to build networks that are easy to optimize, of superior trade-offs between efficiency and accuracy, and are generalizable to diverse tasks and datasets.
When developing deep networks, how operations (linear transformations and non-linear activations) are connected and stacked together is vital, which is studied in network's convergence~\cite{du2019gradient,zhou2020theory,zou2020global}, complexity~\cite{poole2016exponential,rieck2018neural,hanin2021deep}, generalization~\cite{chen2019much,cao2019generalization,xiao2019disentangling}, loss landscapes~\cite{li2017visualizing,fort2019large,shevchenko2020landscape}, etc.

Although it has been widely observed that the performance of deep networks keeps being improved by advanced design options,
our understanding remains limited on how a network's properties are influenced by its architectures.
For example, in computer vision, design trends have shifted from vanilla chain-like stacked layers~\cite{lecun1998gradient,krizhevsky2012imagenet,simonyan2014very} to manually elaborated connectivity patterns (ResNet~\cite{he2016deep}, DenseNet~\cite{huang2017densely}, etc.). While people observed smoothed loss landscapes~\cite{li2017visualizing}, mitigated gradient vanishing problem~\cite{balduzzi2017shattered}, and better generalization~\cite{huang2020deep}, these findings only explain the effectiveness of adding skip connections in general, but barely lead to further ``finer-grained" insight on more sophisticated composition of skip connections beyond ResNet.
Recently, the AutoML community tries to relieve human efforts and propose to automatically discover novel networks from gigantic architecture spaces~\cite{zoph2016neural,pham2018efficient}. Despite the strong performance~\cite{tan2019efficientnet,howard2019searching,xie2019exploring,tan2021efficientnetv2}, the searched architectures are often composed of highly complex (or even randomly wired) connections, leaving it challenging to analyze theoretically.

Understanding the principles of deep architecture connectivity is of significant importance.
Scientifically, this helps answer why composing complicated skip connection patterns has been such an effective ``trick" in improving deep networks' empirical  performance.
Practically, this has direct guidance in designing more efficient and expressive architectures.
To close the gap between our theoretical understandings and practical architectures, in this work we target two concrete questions:
\vspace{-0.5em}
\begin{enumerate}
    \item[\textbf{Q1:}] Can we understand the precise roles of different connectivity patterns in deep networks?
    \item[\textbf{Q2:}] Can we summarize principles on how connectivity should be designed in deep networks?
\end{enumerate}
\vspace{-0.5em}

\begin{wrapfigure}{r}{62mm}
\vspace{-1em}
\includegraphics[scale=0.23]{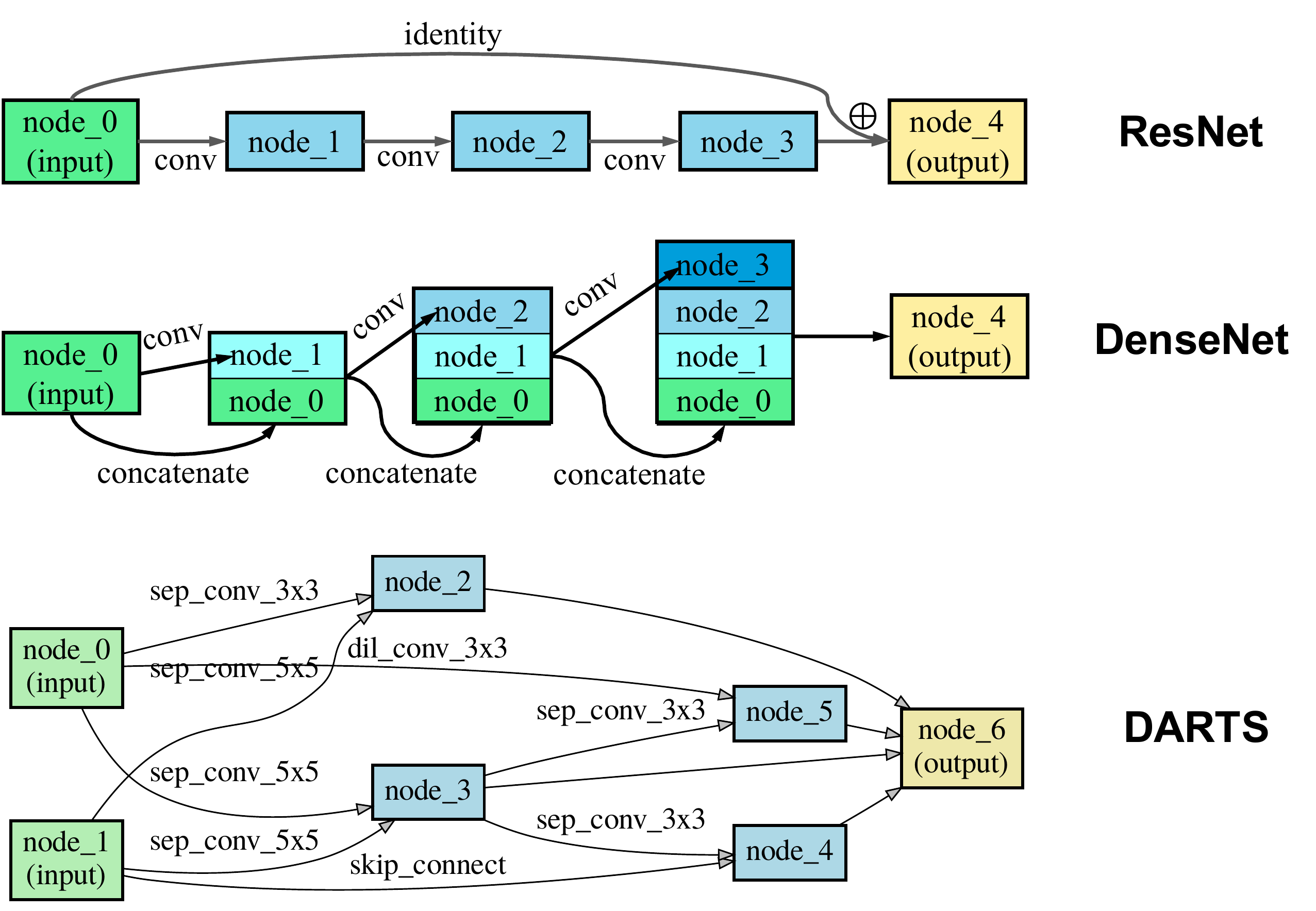}
\centering
\captionsetup{font=small}
\caption{Connectivity patterns of deep networks are growingly more complex, including ResNet~\cite{he2016deep}, DenseNet~\cite{huang2017densely}, and architectures sampled from the DARTS search space~\cite{liu2018darts} commonly used in neural architecture search (NAS).}
\label{fig:teaser}
\vspace{-1em}
\end{wrapfigure}

One standard way to study how operations and connections affect the network is to analyze the model's convergence under gradient descent~\cite{du2018gradient,allen2019convergence,du2019gradient}.
Here, we systematically study the relationship between the connectivity pattern of a neural network and the bound of its convergence rate.
A deep architecture can be viewed as a directed acyclic computational graph (DAG), where feature maps are represented as nodes and operations in different layers are directed edges linking features.
Under this formulation, by analyzing the spectrum of the Neural Network Gaussian Process (NNGP) kernel, we show that the bound of the convergence rate of the DAG is jointly determined by the number of unique paths in a DAG and the number of parameterized operations on each path. Note that, although several prior arts \cite{mellor2020neural,abbdelfattah2020zero,chen2020tenas,chen2022auto} explored deep learning theories to predict the promise of architectures, their indicators were developed for the general deep network \textit{functions}, not for fine-grained characterization for specific deep architecture \textit{topology patterns}. Therefore, their correlations for selecting better architecture topologies are only \textit{empirically observed}, but not \textit{theoretically justified}. In contrast, our fine-grained conclusion is theory-ground and also empirically verified.

Based on this conclusion, we present two intuitive and practical principles for designing deep architecture DAGs: the ``effective depth'' $\bar{d}$ and the ``effective width'' $\bar{m}$. Experiments on diverse architecture benchmarks and datasets demonstrate that $\bar{d}$ and $\bar{m}$ can jointly distinguish promising connectivity patterns from unpromising ones.
As a practical implication, our work also suggests a cheap ``plug-and-play'' method to accelerate the neural architecture search by filtering out potentially unpromising architectures at almost zero cost, before any gradient-based architecture evaluations.
Our contributions are summarized below:\vspace{-0.5em}
\begin{itemize}[leftmargin=*]
    \item We first theoretically analyze the convergence of gradient descent of diverse neural network architectures, and find the connectivity patterns largely impact their bound of convergence rate.
    \item From the theoretical analysis, we abstract two practical principles on designing the network's connectivity pattern: ``effective depth'' $\bar{d}$ and ``effective width'' $\bar{m}$.
    \item Both our convergence analysis and principles on effective depth/width are verified by experiments on diverse architectures and datasets. Our method can further significantly accelerate the neural architecture search without introducing any extra cost.
\end{itemize}
\vspace{-1em}


\section{Related works}
\vspace{-0.5em}

\subsection{Global convergence of deep networks}
\vspace{-0.5em}

Many works analyzed the convergence of networks trained with
gradient descent~\cite{jacot2018neural,zhang2019learning,hanin2019finite,chang2020provable}.
Convergence rates were originally explored for wide two-layer neural networks~\cite{venturi2018spurious,soltanolkotabi2018theoretical,song2018mean,li2018learning,du2018gradient,mei2019mean,arora2019fine}.
More recent works extended the analysis to deeper neural networks~\cite{allen2019convergence,du2019gradient,lu2020mean,zou2020gradient,zhou2020theory,zou2020global,huang2020neural} and showed that over-parameterized networks can converge to the global minimum with random initialization if the width (number of neurons) is polynomially large as the number of training samples.
In comparison, we expand such analysis to bridge the gap between theoretical understandings of general neural networks and practical selections of better ``fine-grained" architectures.

\subsection{Residual structures in deep networks}
\vspace{-0.5em}
Architectures for computer vision have evolved from plain chain-like stacked layers~\cite{lecun1998gradient,krizhevsky2012imagenet,simonyan2014very} to elaborated connectivity patterns (ResNet~\cite{he2016deep}, DenseNet~\cite{huang2017densely}, etc.). 
With the development of AutoML algorithms, novel networks of complicated operations/connections were discovered.
Despite their strong performance~\cite{tan2019efficientnet,howard2019searching,xie2019exploring,tan2021efficientnetv2}, these architectures are often composed of highly complex connections and are hard to analyze.
\cite{shu2019understanding} defined the depth and width for complex connectivity patterns, and studied the impacts of network topologies in different cases.
To better understand these residual patterns, people tried to unify the architectures with different formulations. One seminal way is to represent network structures into graphs, then randomly sample different architectures from the graph distribution and empirically correlate their generalization with graph-related metrics~\cite{xie2019exploring,you2020graph}. Other possible ways include geometric topology analysis~\cite{bu2020depth}, network science~\cite{bhardwaj2021does}, etc.
For example,~\cite{bhardwaj2021does} proposed NN-Mass by analyzing the network’s layer-wise dynamic isometry, and then empirically linked to the network’s convergence.
However, none of those works directly connect the convergence rate analysis to different residual structures.

\subsection{Theory-guided design of neural architectures}
\vspace{-0.5em}
Particularly related to our work is an emerging research direction, that tries to connect recent deep learning theories to guide the design of novel network architectures.
The main idea is to find theoretical indicators that have strong correlations with network's training or testing performance.
\cite{mellor2020neural} pioneered a training-free NAS method, which empirically leveraged sample-wise activation patterns to rank architectures.
\cite{park2020towards} leveraged the network's NNGP features to approximate its predictions.
Different training-free indicators were evaluated in~\cite{abbdelfattah2020zero}, and the ``synflow'' measure \cite{tanaka2020pruning} was leveraged as the main ranking metric.
\cite{chen2020tenas} incorporated two theory-inspired metrics with supernet pruning as the search method.
However, these works mainly adapted theoretical properties of the general deep neural network \textit{function}: their correlations with the concrete network architecture \textit{topology} are only \textit{empirically observed}, but not \textit{theoretically justified}.

\vspace{-0.5em}
\section{Topology matters: convergence analysis with connectivity patterns} \label{sec:convergence}
\vspace{-0.5em}

In this section, we study the convergence of gradient descent for deep networks, whose connectivity patterns can be formulated as small but representative direct acyclic graphs (DAGs). Our goal is to compare the convergence rate bounds of different DAGs, and further establish links to their connectivity patterns, leading to abstracting design principles.

\subsection{Problem setup and architectures notations}

\begin{wrapfigure}{r}{55mm}
\vspace{-5.5em}
\includegraphics[scale=0.3]{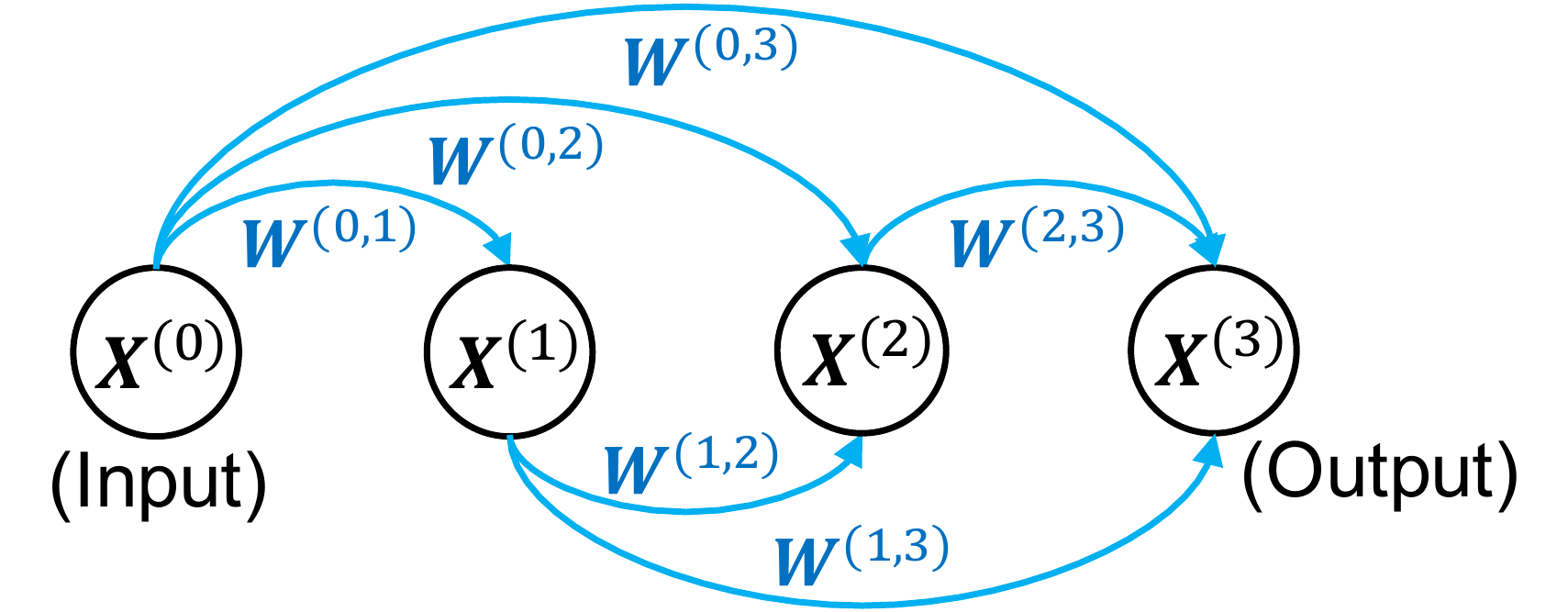}
\centering
\captionsetup{font=small}
\vspace{-0.5em}
\caption{A network represented as a direct acyclic graph (DAG). $\bm{X}^{(h)}$ is the feature map (node). $\bm{W}$ is the operation (edge). $h \in [0, H]$, here $H = 3$. If we remove some edges or set some $\bm{W}$ as skip-connections (i.e., identity mappings), how would the convergence of this DAG be affected?}
\label{fig:dag_setting}
\vspace{-0.5em}
\end{wrapfigure}

We consider a network's DAG as illustrated in Figure~\ref{fig:dag_setting}.
$\bm{X}^{(h)}~~(h \in [0, H])$ is the feature map (node) and $\bm{W}$ is the operation (edge). $\bm{X}^{(0)}$ is the input node, $\bm{X}^{(H)}$ is the output node ($H=3$ in this case), and $\bm{X}^{(1)}, \cdots, \bm{X}^{(H-1)}$ are intermediate nodes. The DAG is allowed to be fully connected: any two nodes could be connected by an operation.
Feature maps from multiple edges coming to one vertex will be directly summed up.
This DAG has many practical instances: for example, it is used in NAS-Bench-201~\cite{dong2020bench}, a popular NAS benchmark.
The forward process of the network in Figure \ref{fig:dag_setting} can be formulated as:
{\small
\begin{equation}
\begin{aligned}
    \bm{X}^{(1)} &= \rho(\bm{W}^{(0,1)} \bm{X}^{(0)}) \\
    \bm{X}^{(2)} &= \rho(\bm{W}^{(0,2)} \bm{X}^{(0)}) + \rho (\bm{W}^{(1,2)} \bm{X}^{(1)}) \\
    \bm{X}^{(3)} &= \rho(\bm{W}^{(0,3)} \bm{X}^{(0)}) + \rho(\bm{W}^{(1,3)} \bm{X}^{(1)}) + \rho(\bm{W}^{(2,3)} \bm{X}^{(2)}) \\
    \bm{u} &= \bm{a}^\top \bm{X}^{(3)}.
\end{aligned}
\end{equation}
}

Feature $\bm{X}^{(s)} \in \mathbb{R}^{m\times 1}$, where $m$ is the absolute width of an edge (i.e. number of neurons), and $s \in \{1, 2, 3\}.$
$\bm{a}$ is the final layer and $\bm{u}$ is the network's output.
We consider three candidate operations for each edge: a linear transformation followed by a non-linear activation, or a skip-connection (identity mapping), or a broken edge (zero mapping):
{\small
\begin{equation}
    \bm{W}^{(s,t)}
    \begin{cases}
    = \bm{0} & \text{zero} \\
    = \bm{I}^{m\times m} & \text{skip-connection} \\
     \sim \mathcal{N}(\bm{0}, \bm{I}^{m\times m}) & \text{linear transformation} \\
    \end{cases},
    \quad
    \rho(x) =  
    \begin{cases}
    0 & \text{zero} \\
    1 & \text{skip-connection} \\
    \sqrt{\frac{c_\sigma}{m}}\sigma(x), & \text{linear transformation} \\
    \end{cases}.
\end{equation}
}

$s, t \in \{1, 2, 3\}$, $\mathcal{N}$ is the Gaussian distribution, and $\sigma$ is the activation function.
${c_{\sigma}=\left(\expect_{x\sim N(0,1)}\left[\sigma(x)^2\right]\right)^{-1}}$ is a scaling factor to normalize the input in the initialization phase.

Consider the Neural Network Gaussian Process (NNGP) in the infinite-width limit~\cite{lee2017deep}, we define our NNGP variance as $\bm{K}^{(s)}_{ij} = \langle \bm{X}^{(s)}_i, \bm{X}^{(s)}_j \rangle$ and NNGP mean as $\bm{b}^{(s)}_i = \mathbb{E}[\bm{X}^{(s)}_i]$, $i, j \in 1, \cdots, N$ for $N$ training samples in total.
Both $\bm{K}_{ij}$ and $\bm{b}_i$ are taken the expectation over the weight distributions.

\subsection{Preliminary: bounding the network's linear convergence rate via NNGP spectrum}
\label{sec:background}

Before we analyze different connectivity patterns (Section~\ref{sec:propagation_analysis}), we first give the linear convergence of our DAG networks (Theorem~\ref{thm:linear_convergence}) and also show the guarantee of the full-rankness of $\lambda_{\min}(\bm{K}^{(H)})$ (Lemma~\ref{lem:full_rank}).
For a sufficiently wide neural network, its bound of convergence rate to the global minimum can be governed by the NNGP kernel. The linear convergence rate for a deep neural network of a DAG-like connectivity pattern is shown as follows:
\begin{theorem}[Linear Convergence of DAG]
    \label{thm:linear_convergence}
    Consider a DAG of $H$ nodes and $P_H$ end-to-end paths.
    At $k$-th gradient descent step on $N$ training samples, with MSE loss $\mathcal{L}(k)=\frac{1}{2} \big \|\bm{y}-\bm{u}(k) \big \|_{2}^{2}$, suppose the learning rate {$\eta=O\left(\frac{\lambda_{\min}\left(\bm{K}^{(H)}\right)}{(NP_H)^{2}} 2^{O(H)}\right)$ and the number of neurons per layer $m=\Omega\left(\max \left\{\frac{(NP_H)^{4}}{\lambda_{\min}^{4}\left(\bm{K}^{(H)}\right)},\frac{NP_HH}{\delta}, \frac{(NP_H)^{2} \log \left(\frac{HN}{\delta}\right)2^{O(H)}}{\lambda_{\min}^{2}\left(\bm{K}^{(H)}\right)}\right\}\right)$}, we have
    \begin{equation}
        \big \|\bm{y}-\bm{u}(k) \big \|_{2}^{2} \leq\left(1-\frac{\eta \lambda_{\min}(\bm{K}^{(H)})}{2}\right)^{k} \big\| \bm{y}-\bm{u}(0) \big \|_{2}^{2},
    \end{equation}
    where $P_H$ is number of end-to-end paths from $\bm{X}^{(0)}$ to $\bm{X}^{(H)}$.
\end{theorem}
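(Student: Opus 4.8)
The plan is to adapt the standard over-parameterization argument of \cite{du2018gradient, du2019gradient} to the DAG setting, where the crucial bookkeeping is how the number of end-to-end paths $P_H$ and the depth $H$ enter the perturbation bounds. Throughout, the central object is the empirical Gram matrix $\bm{G}(k) \in \mathbb{R}^{N\times N}$ with entries $\bm{G}_{ij}(k) = \langle \partial u_i(k)/\partial \params,\, \partial u_j(k)/\partial \params\rangle$, the inner product of the output gradients over all trainable parameters. A one-step Taylor expansion of the gradient-descent update gives
\begin{equation}
\bm{y}-\bm{u}(k+1) = \left(\bm{I}-\eta\,\bm{G}(k)\right)\bigl(\bm{y}-\bm{u}(k)\bigr) + \eta\,\bm{e}(k),
\end{equation}
where $\bm{e}(k)$ collects the second-order (discrete-step) residual that arises because $\params \mapsto \bm{u}$ is not exactly linear. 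Taking norms, the target contraction factor $1-\eta\lambda_{\min}(\bm{K}^{(H)})/2$ follows provided two facts hold at every step: (i) $\lambda_{\min}(\bm{G}(k))$ stays above a constant fraction of $\lambda_{\min}(\bm{K}^{(H)})$, and (ii) $\|\bm{e}(k)\|$ is negligible relative to $\lambda_{\min}(\bm{K}^{(H)})\,\|\bm{y}-\bm{u}(k)\|$.

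First I would establish the initialization concentration. Using the Gaussian initialization of the linear edges together with Lemma~\ref{lem:full_rank} (which guarantees $\lambda_{\min}(\bm{K}^{(H)})>0$), I would show that when $m$ is at least the stated $\tfrac{(NP_H)^2\log(HN/\delta)\,2^{O(H)}}{\lambda_{\min}^2(\bm{K}^{(H)})}$, the empirical Gram matrix concentrates around its infinite-width limit so that $\lambda_{\min}(\bm{G}(0)) \geq \tfrac{3}{4}\lambda_{\min}(\bm{K}^{(H)})$ with probability at least $1-\delta$. Here the fact that $\bm{X}^{(t)} = \sum_{s} \rho(\bm{W}^{(s,t)}\bm{X}^{(s)})$ sums over all incoming edges means the output, and hence each gradient, decomposes into a sum over the $P_H$ end-to-end paths; a union bound over paths and nodes is precisely what produces the $P_H$ and $2^{O(H)}$ factors in the width requirement.

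Next I would run a joint induction on $k$. Assuming the contraction and a weight-ball invariant hold up to step $k$, the geometric decay of the residual bounds the cumulative parameter movement by $\|\params(k)-\params(0)\| \lesssim \sqrt{N}\,\|\bm{y}-\bm{u}(0)\|/\lambda_{\min}(\bm{K}^{(H)})$, which under the stated width lies inside a radius small enough that $\|\bm{G}(k)-\bm{G}(0)\|$ is $O(\lambda_{\min}(\bm{K}^{(H)}))$, yielding $\lambda_{\min}(\bm{G}(k))\geq \tfrac12\lambda_{\min}(\bm{K}^{(H)})$ and establishing (i). For (ii), I would bound $\|\bm{e}(k)\|$ by a depth-dependent Lipschitz constant of the Jacobian times $\eta\,\|\bm{y}-\bm{u}(k)\|^2$; the choice $\eta = O\!\left(\lambda_{\min}(\bm{K}^{(H)})/((NP_H)^2\,2^{O(H)})\right)$ is exactly what forces this error to be dominated by the linear contraction, closing the induction at step $k+1$.

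The main obstacle will be part (ii) together with tracking the path/depth dependence tightly enough. Because the output is a sum of $P_H$ paths, each a composition of up to $H$ normalized activations $\rho$, the operator norms of the Jacobian and of its perturbation accumulate multiplicatively in depth --- giving the $2^{O(H)}$ factors --- and scale with the number of paths. Verifying that the precise dependences on $P_H$, $H$, $\lambda_{\min}(\bm{K}^{(H)})$, and $\delta$ in the hypotheses on $\eta$ and $m$ are \emph{simultaneously} sufficient to guarantee both the initialization concentration and the step-wise Gram stability is the delicate part; once these architecture-dependent constants are correctly propagated through the path decomposition, the remainder follows the now-standard template.
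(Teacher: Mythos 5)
Your proposal follows essentially the same route as the paper's proof: the standard over-parameterization template of Du et al., with (i) concentration of the Gram matrix at initialization to within a constant fraction of $\lambda_{\min}(\bm{K}^{(H)})$, (ii) stability of the Gram matrix under the bounded weight movement implied by the geometric decay of the residual, and (iii) suppression of the second-order term via the small learning rate, all closed by induction on $k$, with union bounds over the $P_H$ end-to-end paths producing the $P_H$ and $2^{O(H)}$ factors exactly as the paper does. The only cosmetic difference is that you work with the full-parameter Gram matrix whereas the paper restricts attention to the PSD block $\bm{G}^{(H)}(k)=\sum_{s}\bm{G}^{(s,H)}(k)$ coming from edges into the last node (whose infinite-width limit is what concentrates to $\bm{K}^{(H)}$); since the full NTK dominates that block, your step (i) goes through unchanged.
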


\begin{remark}
    In our work, we will use a fixed small learning rate $\eta$ across different network architectures, to focus our analysis on the impact of $\lambda_{\min}(\bm{K}^{(H)})$. This is motivated by the widely adopted setting in popular architecture benchmarks~\cite{radosavovic2019network,dong2020bench} that we will follow in our experiments.
\end{remark}

The above theorem shows that the convergence rate is bounded by the smallest eigenvalue of  $\bm{K}^{(H)}$, and also indicates that networks of larger least eigenvalues will more likely to converge faster.
This requires that the $\bm{K}^{(H)}$ has full rankness, which is demonstrated by the following lemma:
\begin{lemma}[Full Rankness of $\bm{K}^{(H)}$]
    \label{lem:full_rank}
    Suppose $\sigma(\cdot)$ is analytic and not a polynomial function. If no parallel data points, then $\lambda_{\min} \left(\bm{K}^{(H)}\right) > 0$.
\end{lemma}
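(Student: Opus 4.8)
The plan is to prove the stronger statement that $\bm{K}^{(H)} \succ 0$, which is equivalent to $\lambda_{\min}(\bm{K}^{(H)})>0$. The structural fact I would exploit is that distinct edges carry independent weights, so the covariance contributed to a node along its different incoming branches combines additively (the branch cross-terms reduce to outer products of the NNGP means, which I would carry along as lower-order Gram corrections). Hence the node kernels obey a recursion along the topological order of the DAG: at each node the covariance is the sum, over incoming edges, of the pushed-forward source kernel, where a zero edge contributes $\bm{0}$, a skip-connection copies the source kernel, and a parameterized edge applies the dual-activation map $\bm{K}\mapsto c_\sigma\,\widehat{\sigma}(\bm{K})$ with $\widehat{\sigma}(\bm{K})_{ij}=\mathbb{E}_{(u,v)\sim\mathcal{N}(\bm{0},\,\bm{K}_{\{i,j\}})}[\sigma(u)\sigma(v)]$. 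Since a sum of PSD covariance matrices is PSD and becomes strictly PD as soon as a single branch is strictly PD, it suffices to track one end-to-end path carrying at least one parameterized edge (the meaningful case, present whenever the architecture actually uses $\sigma$) and show that its contribution to $\bm{K}^{(H)}$ is strictly PD; all remaining contributions only help.

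The heart of the argument is a single dual-activation step. I would expand $\sigma$ in the orthonormal Hermite basis, $\sigma=\sum_{r\ge 0}\mu_r h_r$; analyticity together with the non-polynomial hypothesis guarantees that $\mu_r\ne 0$ for infinitely many $r$. Normalizing preactivations to unit variance (the role of $c_\sigma$), so that $\bm{K}_{ij}$ equals the cosine similarity $\rho_{ij}$ of the incoming features, Mehler's formula gives $\widehat{\sigma}(\bm{K})_{ij}=\sum_{r\ge 0}\mu_r^2\,\rho_{ij}^{\,r}$, a power series in the pairwise similarities with infinitely many strictly positive coefficients.

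To convert this into positive definiteness I would read $\widehat{\sigma}(\bm{K})$ as the Gram matrix of the feature vectors $\phi(\bar{\bm{x}}_i)=(\mu_r\,\bar{\bm{x}}_i^{\otimes r})_r$ ranging over all $r$ with $\mu_r\ne 0$, where $\bar{\bm{x}}_i$ is the normalized direction of the $i$-th incoming feature. It then suffices to show the $\{\phi(\bar{\bm{x}}_i)\}_{i=1}^N$ are linearly independent: if $\sum_i\alpha_i\phi(\bar{\bm{x}}_i)=\bm{0}$, then $\sum_i\alpha_i\,\bar{\bm{x}}_i^{\otimes r}=\bm{0}$ for every active $r$, and since the data are pairwise non-parallel the directions $\bar{\bm{x}}_i$ are pairwise distinct, so a Vandermonde/moment argument across the infinitely many available degrees $r$ forces $\alpha_i=0$. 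This creates strict positive definiteness already at the first parameterized edge, even though the raw input Gram $\bm{K}^{(0)}$ may be rank-deficient when $N>\dim(\bm{x})$. Downstream preservation is then cheap: by the Schur product theorem the Hadamard powers $\bm{K}^{\circ r}$ of a PD matrix (positive diagonal) are PD, so $\widehat{\sigma}(\bm{K})\succeq \mu_r^2\,\bm{K}^{\circ r}\succ 0$ for any active $r\ge 1$; skip and zero edges trivially preserve PSD-ness; and additive combination at later nodes keeps the PD branch intact. Chaining these along the chosen path yields $\bm{K}^{(H)}\succ 0$.

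The main obstacle I anticipate is the linear-independence claim underlying the base case: one must genuinely use that \emph{infinitely many} Hermite coefficients survive---a single degree $r$ separates directions only up to that tensor power's symmetries---and combine this with the non-parallel hypothesis, which is exactly the condition that excludes coincident rows $\rho_{ij}=\pm 1$ that would collapse $\widehat{\sigma}$. A secondary care point is the bookkeeping of the recursion: because $\bm{K}^{(0)}$ need not be full rank, I must verify that strict positive definiteness is \emph{created} by the first parameterized edge rather than assumed at the input, is not destroyed by intervening skip-connections, and is not diluted by the additive combination at subsequent nodes, including confirming that the mean cross-terms do not spoil the additive PSD lower bound---all of which should follow from the Schur-product and independence observations once the base case is secured.
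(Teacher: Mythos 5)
Your proposal is correct in substance but reaches the key positivity through a genuinely different engine than the paper. The DAG-level bookkeeping is the same in both: kernels combine additively over incoming edges, skips copy the source kernel, zeros drop out, and the mean cross-terms assemble into a PSD rank-one correction that cannot decrease $\lambda_{\min}$. Where you diverge is the core lemma. You expand the dual activation in the Hermite basis and run a Schoenberg-type argument: non-polynomiality gives infinitely many nonzero coefficients $\mu_r$, non-parallel data gives $|\rho_{ij}|<1$ off the diagonal, and since $\bm{K}^{\circ r}$ tends to the identity along active degrees, $\sum_r \mu_r^2\,\bm{K}^{\circ r}$ is strictly PD. The paper instead follows the Du et al.\ template: it views the kernel as an $L^2$ Gram (over $\bm{W}\sim\mathcal{N}(\bm{0},\bm{I})$) of the feature maps $\phi_{\bm{X}_i}(\bm{W})=\sigma(\bm{W}^\top\bm{X}_i)\bm{X}_i$, differentiates a hypothesized linear dependence $(N-2)$ times in $\bm{W}$, and invokes linear independence of the tensor powers $\bm{X}_i^{\otimes(N-1)}$ of pairwise non-parallel vectors; downstream nodes are reduced to the same lemma by realizing the incoming kernel as the Gram of vectors $\bm{Z}=\bm{D}^{1/2}\bm{U}^\top$ from its eigendecomposition. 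Your route buys two things the paper's does not: it needs no analyticity (only that $\sigma$ is square-integrable against the standard Gaussian and non-polynomial, so it genuinely covers ReLU, which the paper's differentiation argument and its ``analytic'' hypothesis do not), and it creates strict PD at the first parameterized edge without assuming $\bm{K}^{(0)}\succ 0$ --- note the paper's opening claim that non-parallel data makes $\bm{K}^{(0)}$ positive definite is false in general when $N>\dim(\bm{x})$.

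One care point in your downstream step: the identity behind $\widehat{\sigma}(\bm{K})\succeq\mu_r^2\,\bm{K}^{\circ r}$ is Mehler's formula, which is valid only when $\bm{K}$ has unit diagonal. In this DAG setting, multi-path sums inflate the diagonal (e.g., $\bm{K}^{(3)}=3f(\bm{K}^{(0)})$ for the parallel DAG), and for a non-homogeneous $\sigma$ the scale cannot simply be factored out of the dual activation. The fix is routine but should be stated: write $\widehat{\sigma}(\bm{K})=\sum_{r}\bm{D}_r\,\bm{C}^{\circ r}\bm{D}_r$, where $\bm{C}$ is the correlation matrix and $\bm{D}_r=\mathrm{diag}\bigl(\mu_r(\sqrt{\bm{K}_{ii}})\bigr)$ collects the Hermite coefficients of the dilated activations $x\mapsto\sigma(\sqrt{\bm{K}_{ii}}\,x)$. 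If $\bm{\alpha}^\top\widehat{\sigma}(\bm{K})\bm{\alpha}=0$, then since $\bm{C}^{\circ r}\succ 0$ for every $r\ge 1$ whenever $\bm{K}\succ 0$, each term forces $\bm{D}_r\bm{\alpha}=\bm{0}$, and non-polynomiality of every dilated activation then forces $\bm{\alpha}=\bm{0}$. With that patch (or equivalently by passing to the paper's $L^2$-Gram reduction at downstream nodes), your induction closes, and the two proofs differ only in this core lemma.
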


\subsection{How does NNGP propagate through DAG?}
\label{sec:propagation_analysis}

Now we are ready to link the DAG's connectivity pattern to the bound of its convergence rate.
Although different DAGs are all of the linear convergence under gradient descent, they are very likely to have different bounds of convergence rates.
Finding the exact mapping from $\lambda_{\min}(\bm{K}^{(0)})$ to $\lambda_{\min}(\bm{K}^{(H)})$ will lead us to a fine-grained comparison between different connectivity patterns.

First, for fully-connected operations, we can obtain the propagation of the NNGP variance and mean between two consecutive layers:

\begin{lemma}[Propagation of $\bm{K}$]
\label{lem:propagation}
Define the propagation as $\bm{K}^{(l)} = f(\bm{K}^{(l-1)})$ and $\bm{b}^{(l)} = g(\bm{b}^{(l-1)})$. When the edge operation is a linear transformation, we have:
{\small
\begin{equation}
\begin{aligned}
     \bm{K}^{(l)}_{ii} & = f(\bm{K}^{(l-1)}_{ii}) = \int  \mathcal{D}_z c_\sigma \sigma^2 \bigg(\sqrt{\bm{K}_{ii}^{(l-1)}}z \bigg)  \\ 
 \bm{K}^{(l)}_{ij} & = f(\bm{K}^{(l-1)}_{ij}) = \int \mathcal{D}_{z_1} \mathcal{D}_{z_2} c_\sigma  \sigma\bigg( \sqrt{\bm{K}^{(l-1)}_{ii} } z_1 \bigg) \sigma \bigg(\sqrt{\bm{K}^{(l-1)}_{jj}}( \bm{C}_{ij}^{(l-1)} z_1 + \sqrt{1-(\bm{C}_{ij}^{(l-1)})^2} z_2) \bigg)   \\
\bm{C}_{ij}^{(l)} & = \bm{K}_{ij}^{(l)}/\sqrt{ \bm{K}_{ii}^{(l)} \bm{K}_{jj}^{(l)}}\\ 
\bm{b}^{(l)}_{i} & = g(\bm{b}^{(l-1)}_{i}) = \int  \mathcal{D}_{z} \sqrt{c_\sigma} \sigma \big(\sqrt{ \bm{K}_{ii}^{(l)}} z \big)
\end{aligned}
\end{equation}
}
where $z$, $z_1$ and $z_2$ are independent
standard Gaussian random variables. Besides,
$\int \mathcal{D}_z = \frac{1}{\sqrt{2\pi}} \int dz e^{-\frac{1}{2}z^2}$ is the measure for a normal distribution.
\end{lemma}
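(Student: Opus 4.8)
The plan is to derive the recursion by conditioning on the previous layer and then passing to the infinite-width limit. For an edge carrying a linear transformation we have $\bm{X}^{(l)}_i = \sqrt{c_\sigma/m}\,\sigma(\bm{W}^{(l-1,l)}\bm{X}^{(l-1)}_i)$, where $\bm{W}^{(l-1,l)}$ has i.i.d.\ $\mathcal{N}(0,1)$ entries drawn independently of everything in the earlier layers. First I would fix $\bm{X}^{(l-1)}$ and examine the pre-activation $\bm{h}^{(l)}_i := \bm{W}^{(l-1,l)}\bm{X}^{(l-1)}_i$. Because distinct rows of $\bm{W}^{(l-1,l)}$ are independent, the coordinates of $\bm{h}^{(l)}_i$ are i.i.d., and for each coordinate $p$ the pair $([\bm{h}^{(l)}_i]_p,[\bm{h}^{(l)}_j]_p)$ is conditionally centered jointly Gaussian with $\mathrm{Cov} = \langle\bm{X}^{(l-1)}_i,\bm{X}^{(l-1)}_j\rangle$ (using $\E[\bm{W}_{pq}\bm{W}_{pr}]=\delta_{qr}$); in particular the conditional variance of each coordinate equals $\|\bm{X}^{(l-1)}_i\|^2$.

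Next I would take the width $m\to\infty$. By the law of large numbers the empirical inner products $\langle\bm{X}^{(l-1)}_i,\bm{X}^{(l-1)}_j\rangle$ concentrate on their expectations $\bm{K}^{(l-1)}_{ij}$, so the pre-activations $\{\bm{h}^{(l)}_i\}_i$ converge to a centered Gaussian process indexed by the samples with covariance kernel $\bm{K}^{(l-1)}$. Applying the defining identity $\bm{K}^{(l)}_{ij}=\langle\bm{X}^{(l)}_i,\bm{X}^{(l)}_j\rangle=\frac{c_\sigma}{m}\sum_p \sigma([\bm{h}^{(l)}_i]_p)\sigma([\bm{h}^{(l)}_j]_p)$, the per-coordinate law of large numbers makes the $\frac1m\sum_p$ an expectation while the $c_\sigma$ survives, yielding $\bm{K}^{(l)}_{ij}=c_\sigma\,\E[\sigma(u)\sigma(v)]$, where $(u,v)$ is the centered bivariate Gaussian with variances $\bm{K}^{(l-1)}_{ii},\bm{K}^{(l-1)}_{jj}$ and covariance $\bm{K}^{(l-1)}_{ij}$.

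It then remains to evaluate this Gaussian expectation. I would whiten the pair by writing $u=\sqrt{\bm{K}^{(l-1)}_{ii}}\,z_1$ and $v=\sqrt{\bm{K}^{(l-1)}_{jj}}\big(\bm{C}^{(l-1)}_{ij}z_1+\sqrt{1-(\bm{C}^{(l-1)}_{ij})^2}\,z_2\big)$ with $z_1,z_2$ independent standard Gaussians and $\bm{C}^{(l-1)}_{ij}=\bm{K}^{(l-1)}_{ij}/\sqrt{\bm{K}^{(l-1)}_{ii}\bm{K}^{(l-1)}_{jj}}$ the correlation; a one-line check that this reproduces the target variances and covariance gives the stated double integral for $\bm{K}^{(l)}_{ij}$, and setting $i=j$ (so $\bm{C}^{(l-1)}_{ii}=1$) collapses it to the single-integral diagonal formula. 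The mean $\bm{b}^{(l)}_i$ follows analogously by taking the expectation of a single activated Gaussian coordinate, producing the one-dimensional integral of $\sigma$ against the standard normal measure $\mathcal{D}_z$.

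The only genuinely delicate step is the infinite-width passage: the heuristic law-of-large-numbers argument must be upgraded to a bona fide Gaussian-process limit, i.e.\ the joint convergence of all sample coordinates together with concentration of the kernel around its expectation. This is the standard NNGP machinery, which I would invoke rather than reprove by citing the sequential-limit construction of \cite{lee2017deep}. Everything downstream---the covariance bookkeeping for the pre-activations and the whitening reparametrization---is routine Gaussian calculus.
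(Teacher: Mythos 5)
Your proposal is correct and follows essentially the same route as the paper: both reduce the recursion to the NNGP property of \cite{lee2017deep} (conditionally Gaussian pre-activations with covariance $\bm{K}^{(l-1)}$ in the infinite-width limit) and then express the resulting bivariate Gaussian expectation via the correlation-based whitening $u=\sqrt{\bm{K}^{(l-1)}_{ii}}z_1$, $v=\sqrt{\bm{K}^{(l-1)}_{jj}}(\bm{C}^{(l-1)}_{ij}z_1+\sqrt{1-(\bm{C}^{(l-1)}_{ij})^2}z_2)$. Your write-up is in fact more explicit than the paper's, which simply invokes the cited NNGP result and notes that the diagonal term involves one Gaussian variable while the off-diagonal term involves two.
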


\begin{remark}
    Since $\bm{b}^{(l)}_{i}$ is a constant, it will not affect our analysis and we will omit it below.
\end{remark}
\begin{remark}
    With centered normalized inputs, $\bm{K}^{(0)}_{ii} = 1$.
\end{remark}

Our last lemma to prepare states that we can bound the smallest eigenvalue of the NNGP kernel of $N \times N$ by its $2 \times 2$ principal submatrix case.
\begin{lemma}
\label{lem:lambda_min_2x2}
For a positive definite symmetric matrix $\bm{K} \in \mathbb{R}^{N \times N}$, the smallest eigenvalue is bounded by the smallest eigenvalue of its $2 \times 2$ principal sub-matrix.
\begin{equation}
\lambda_{\min}(\bm{K}) \le \min_{i \neq j} \lambda_{\min} \begin{bmatrix}
     \bm{K}_{ii}  & \bm{K}_{ij}\\
     \bm{K}_{ji} & \bm{K}_{jj} 
 \end{bmatrix}
\end{equation}
\end{lemma}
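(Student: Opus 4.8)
The plan is to invoke the variational (Courant--Fischer) characterization of the smallest eigenvalue and reduce the $N$-dimensional minimization to the two-dimensional coordinate subspace spanned by a chosen pair of indices. Concretely, since $\bm{K}$ is real symmetric, its smallest eigenvalue admits the Rayleigh-quotient form $\lambda_{\min}(\bm{K}) = \min_{\bm{v} \neq \bm{0}} \frac{\bm{v}^\top \bm{K} \bm{v}}{\bm{v}^\top \bm{v}}$. The entire argument rests on the observation that restricting this minimization to any subspace can only raise the attained minimum value, so any well-chosen test vector furnishes an upper bound on $\lambda_{\min}(\bm{K})$.

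First I would fix an arbitrary pair of indices $i \neq j$ and write $\bm{B}$ for the corresponding $2 \times 2$ principal submatrix with entries $\bm{K}_{ii}, \bm{K}_{ij}, \bm{K}_{ji}, \bm{K}_{jj}$. Let $\bm{w}^\star = (w_i, w_j)^\top \in \mathbb{R}^2$ be a unit eigenvector of $\bm{B}$ achieving $\lambda_{\min}(\bm{B})$, so that $\bm{w}^{\star\top} \bm{B} \bm{w}^\star = \lambda_{\min}(\bm{B})$. Next I would lift $\bm{w}^\star$ to a test vector $\bm{v} \in \mathbb{R}^N$ by zero-padding: set $v_i = w_i$, $v_j = w_j$, and $v_k = 0$ for every $k \notin \{i, j\}$.

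The key computation is then immediate. Because $\bm{v}$ is supported on the two coordinates $\{i, j\}$, the quadratic form collapses to $\bm{v}^\top \bm{K} \bm{v} = \bm{w}^{\star\top} \bm{B} \bm{w}^\star = \lambda_{\min}(\bm{B})$, while $\bm{v}^\top \bm{v} = \bm{w}^{\star\top} \bm{w}^\star = 1$. Feeding this particular $\bm{v}$ into the Rayleigh quotient and using that $\lambda_{\min}(\bm{K})$ is the minimum over all nonzero vectors yields $\lambda_{\min}(\bm{K}) \le \bm{v}^\top \bm{K} \bm{v} = \lambda_{\min}(\bm{B})$. Since the pair $i \neq j$ was arbitrary, taking the minimum over all such pairs gives the claimed bound.

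There is no substantive obstacle here: the statement is the smallest-eigenvalue instance of the Cauchy interlacing theorem for principal submatrices, and the zero-padding embedding is exactly what makes it elementary. The only points requiring care are that $\bm{K}$ be symmetric (guaranteed, since it is an NNGP Gram kernel, so the Rayleigh-quotient characterization applies), and that one track the \emph{direction} of the inequality correctly --- passing to the smaller coordinate subspace can only \emph{increase} the attained minimum, so the submatrix eigenvalue is an upper bound for $\lambda_{\min}(\bm{K})$ rather than a lower one.
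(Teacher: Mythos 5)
Your proof is correct, and it takes a genuinely different (and more elementary) route than the paper. The paper's proof invokes the Cauchy Interlace Theorem as a black box and applies it iteratively $n-2$ times, deleting one row/column at a time to obtain the chain $\lambda_2^{(2)} \geq \lambda_3^{(3)} \geq \cdots \geq \lambda_n^{(n)}$, from which the claimed bound follows. You instead prove the needed inequality directly from the Courant--Fischer/Rayleigh-quotient characterization: zero-pad the minimizing unit eigenvector of the $2\times 2$ principal submatrix $\bm{B}$ to a test vector $\bm{v} \in \mathbb{R}^N$, observe that $\bm{v}^\top \bm{K}\bm{v} = \lambda_{\min}(\bm{B})$ and $\|\bm{v}\|_2 = 1$, and conclude $\lambda_{\min}(\bm{K}) \le \lambda_{\min}(\bm{B})$. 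What your approach buys: it is self-contained (no appeal to interlacing, whose standard proof is itself Courant--Fischer), it visibly requires only symmetry of $\bm{K}$ rather than positive definiteness, and it generalizes verbatim to principal submatrices of any size. What the paper's approach buys: by quoting the full interlacing theorem it gets the entire ladder of eigenvalue inequalities between all intermediate principal submatrices, not just the single endpoint inequality --- though none of that extra information is used in the paper. Your one flagged subtlety (the direction of the inequality when restricting to a subspace) is handled correctly: restricting the minimization can only raise the minimum, so the submatrix eigenvalue upper-bounds $\lambda_{\min}(\bm{K})$.
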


\begin{figure*}[t!]
\includegraphics[scale=0.26]{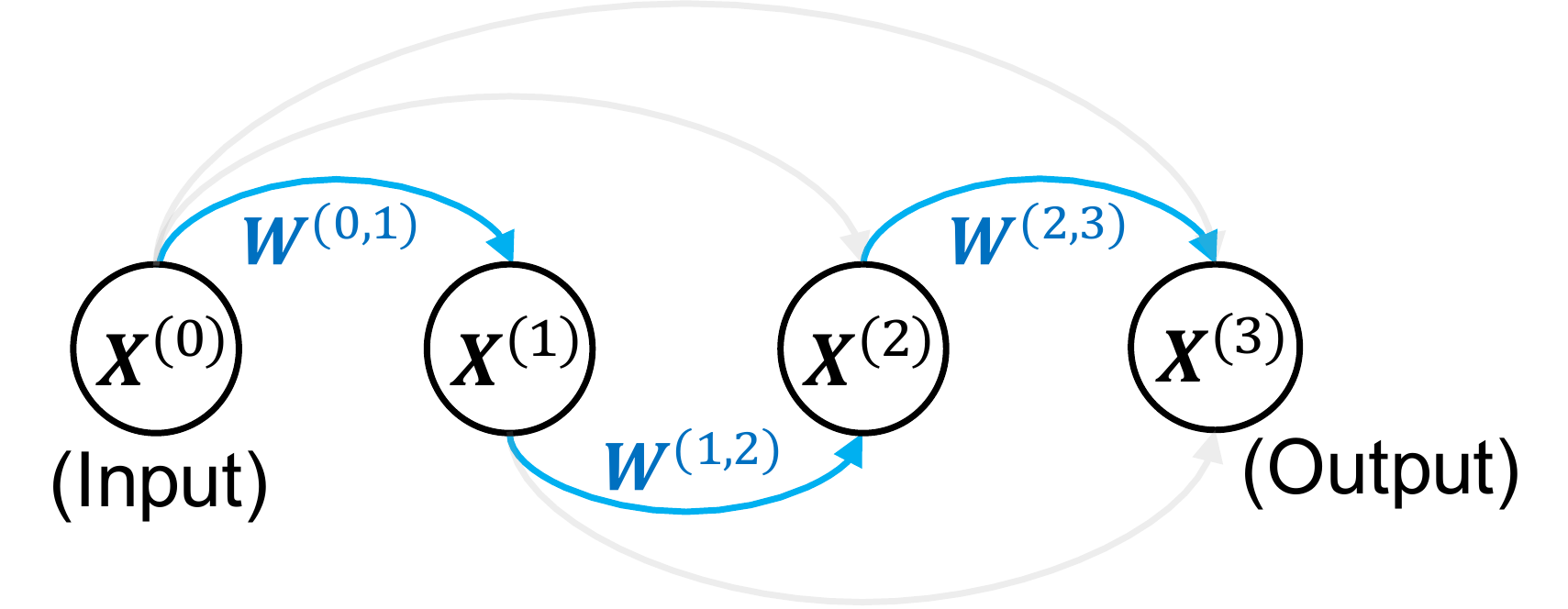}
\includegraphics[scale=0.26]{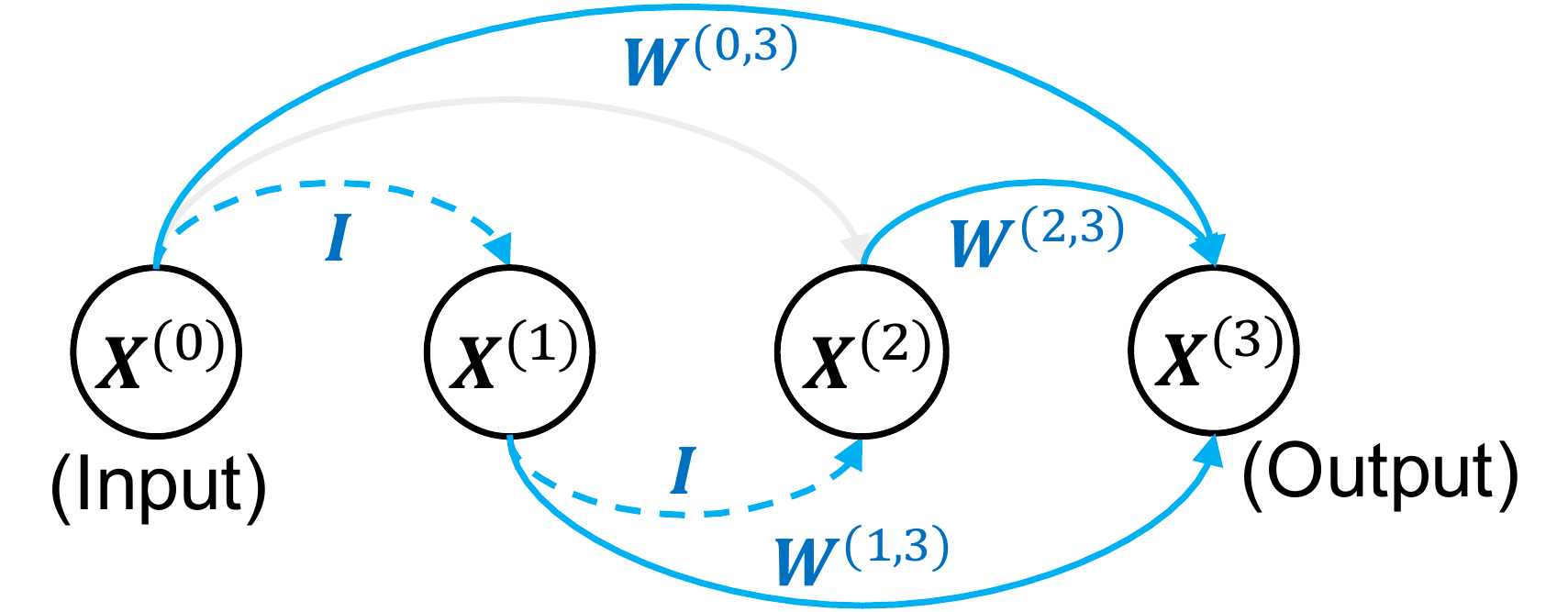}
\includegraphics[scale=0.26]{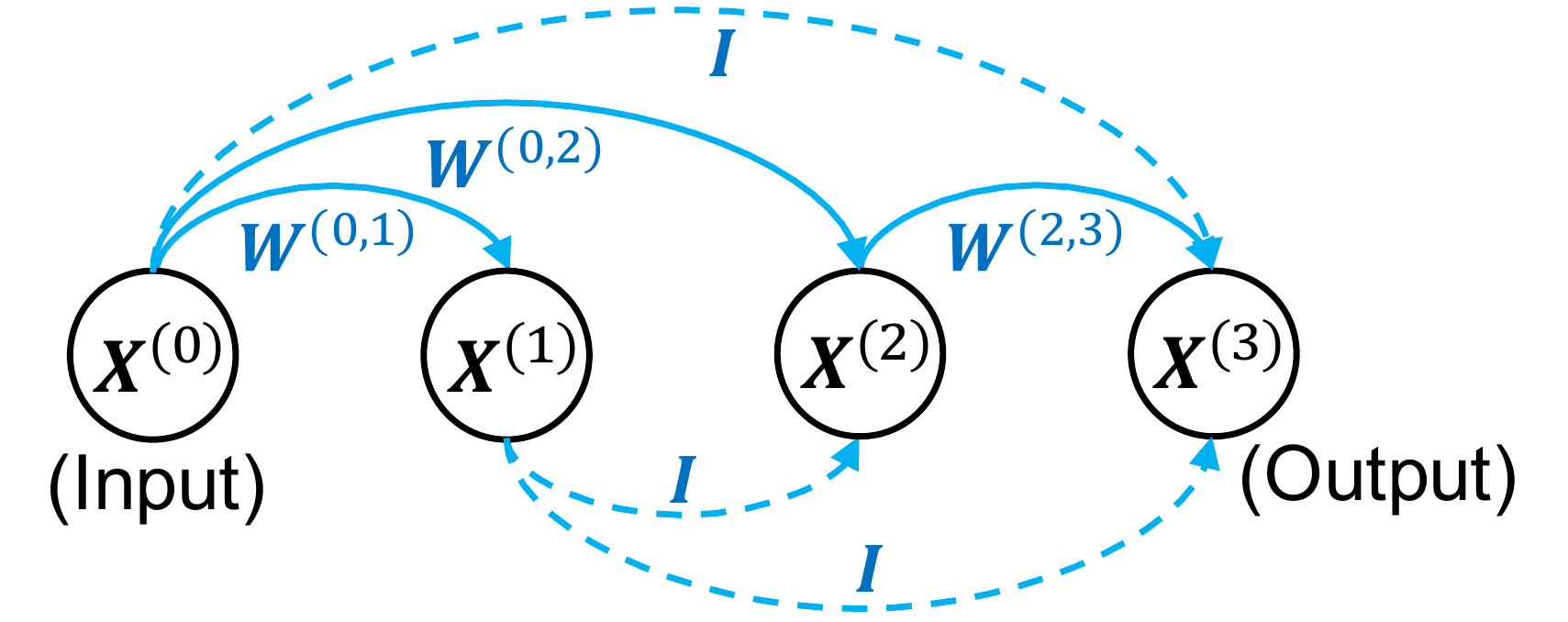}
\vspace{-0.5em}
\centering
\captionsetup{font=small}
\caption{Three example DAGs. Solid blue arrows are parameterized operations (e.g. fully-connected layer). Dashed arrows are non-parameterized operations (e.g. skip-connections). Removed edges are in light grey. See Section~\ref{sec:propagation_analysis} for their convergence analysis. Left: sequential connectivity (DAG\#1). Middle: parallel connectivity (DAG\#2). Right: mixed of sequential and parallel connectivity (DAG\#3).}
\label{fig:dag_examples}
\vspace{-1em}
\end{figure*}

Now we can analyze the smallest eigenvalue $\lambda_{\min}(\bm{K}^{(H)})$ for different DAGs. Note that here we adopt ReLU activation
for three reasons: 1) Analyzing ReLU with the initialization at the edge-of-chaos~\cite{hayou2019impact} exhibits promising results; 2) Previous works also imply that the convergence rate of ReLU-based networks depends on $\lambda_0$~\cite{du2018gradient}; 3) ReLU is the most commonly used activation in practice. Specifically, we set $c_\sigma = 2$ for ReLU~\cite{du2019gradient,hayou2019impact}.
In Figure~\ref{fig:dag_examples}, we show three representative DAGs ($H = 3$). 
For a fair comparison, each DAG has three linear transformation layers, i.e., they have the same number of parameters and mainly differ in how nodes are connected. After these three case studies, we will show a more general rule for the propagation of $\bm{K}^{(0)}$.

\paragraph{DAG\#1 (sequential connections).} In this case, $\bm{W}^{(0,1)}, \bm{W}^{(1,2)}, \bm{W}^{(2,3)} = $ linear transformation, and $\bm{W}^{(0,2)}, \bm{W}^{(0,3)}, \bm{W}^{(1,2)} = $ zero (broken edge).
There is only \ul{one unique path} connecting from $\bm{X}^{(0)}$ to $\bm{X}^{(3)}$, with \ul{three parameterized operations} on it.
\begin{equation}
  \begin{aligned}
        \bm{K}^{(1)} = f(\bm{K}^{(0)}) \quad \quad
        \bm{K}^{(2)} = f(\bm{K}^{(1)}) \quad \quad
        \bm{K}^{(3)} = f(\bm{K}^{(2)})
    \end{aligned}
\end{equation}

By Lemma~\ref{lem:lambda_min_2x2}, we calculate the upper bound of the least eigenvalue of $\bm{K}^{(3)}$, denoted as $\lambda_{\text{dag1}}$:
\begin{equation}
\begin{aligned}
& \lambda_{\min} \begin{bmatrix}
     \bm{K}^{(3)}_{ii}  & \bm{K}^{(3)}_{ij}\\
     \bm{K}^{(3)}_{ji} & \bm{K}^{(3)}_{jj} 
 \end{bmatrix}  = \lambda_{\min} \begin{bmatrix}
     \bm{K}^{(2)}_{ii}  & f(\bm{K}^{(2)}_{ij}) \\
     f(\bm{K}^{(2)}_{ji})   & \bm{K}^{(2)}_{jj} 
 \end{bmatrix} 
  = \lambda_{\min} \begin{bmatrix}
     \bm{K}^{(1)}_{ii}  & f(f(\bm{K}^{(1)}_{ij})) \\
     f(f(\bm{K}^{(1)}_{ji}))   & \bm{K}^{(1)}_{jj} 
 \end{bmatrix} \\
 & = \lambda_{\min} \begin{bmatrix}
     \bm{K}^{(0)}_{ii}  & f(f(f(\bm{K}^{(0)}_{ij}))) \\
     f(f(f(\bm{K}^{(0)}_{j,i})))   & \bm{K}^{(0)}_{jj} 
 \end{bmatrix} 
 = 1 - f^3(\bm{K}^{(0)}_{ij}) \equiv \lambda_{\text{dag1}}, \label{eq:lambda_min_dag1}
 \end{aligned}
\end{equation}
where we denote the function composition $f^3(\bm{K}^{(0)}_{ij}) = f(f(f(\bm{K}^{(0)}_{ij})))$.

\paragraph{DAG\#2 (parallel connections).} In this case, $\bm{W}^{(0,3)}, \bm{W}^{(1,3)}, \bm{W}^{(2,3)} = $ linear transformation, $\bm{W}^{(0,1)}, \bm{W}^{(1,2)} = $ skip-connection, and  $\bm{W}^{(0,2)} =$ zero (broken edge).
There are \ul{three unique paths} connecting from $\bm{X}^{(0)}$ to $\bm{X}^{(3)}$, with \ul{one parameterized operation} on each path.

\begin{equation}
\begin{aligned}
\bm{K}^{(2)} & = \bm{K}^{(1)} = \bm{K}^{(0)} \\
\bm{K}^{(3)} & = f(\bm{K}^{(0)}) + f(\bm{K}^{(1)}) +f(\bm{K}^{(2)}) = 3f(\bm{K}^{(0)})
\\
\end{aligned} 
\end{equation}
where $\bm{C}$ is a constant matrix with all entries being the same. Therefore:
\begin{equation}
\begin{aligned}
& \lambda_{\min} \begin{bmatrix}
     \bm{K}^{(3)}_{ii}  & \bm{K}^{(3)}_{ij}\\
     \bm{K}^{(3)}_{ji} & \bm{K}^{(3)}_{jj} 
 \end{bmatrix}  = \lambda_{\min} \begin{bmatrix}
     3 \bm{K}^{(0)}_{ii} & 3 f(\bm{K}^{(0)}_{ij})  \\
     3 f(\bm{K}^{(0)}_{ji}) & 3 \bm{K}^{(0)}_{jj}
 \end{bmatrix} \\
 & = 3 \lambda_{\min} \begin{bmatrix}
     \bm{K}^{(0)}_{ii}  & f(\bm{K}^{(0)}_{ij}) \\
     f(\bm{K}^{(0)}_{ji})  & \bm{K}^{(0)}_{jj} 
 \end{bmatrix} = 3(1 - f(\bm{K}^{(0)}_{ij})) \equiv \lambda_{\text{dag2}}. \label{eq:lambda_min_dag2}
 \end{aligned}
\end{equation}

\paragraph{DAG\#3 (mixture of sequential and parallel connections).} In this case, $\bm{W}^{(0,1)}, \bm{W}^{(0,2)}, \bm{W}^{(2,3)} = $ linear transformation, $\bm{W}^{(0,3)}, \bm{W}^{(1,2)}, \bm{W}^{(1,3)} = $ skip connection.
There are \ul{four unique paths} connecting from $\bm{X}^{(0)}$ to $\bm{X}^{(3)}$, with \ul{0/1/2 parameterized operations} on each path.

\begin{equation}
\begin{aligned}
\bm{K}^{(1)} &= f(\bm{K}^{(0)}) \quad \quad
\bm{K}^{(2)} = \bm{K}^{(1)} + f(\bm{K}^{(0)})\\
\bm{K}^{(3)} & = f(\bm{K}^{(2)})+ \bm{K}^{(1)} + \bm{K}^{(0)} = \bm{K}^{(0)} + f(\bm{K}^{(0)}) + f(2f(\bm{K}^{(0)}))  
\end{aligned}
\end{equation}
Then we have:
\begin{equation}
\begin{aligned}
& \lambda_{\min} \begin{bmatrix}
     \bm{K}^{(3)}_{ii}  & \bm{K}^{(3)}_{ij}\\
     \bm{K}^{(3)}_{j,i} & \bm{K}^{(3)}_{jj} 
 \end{bmatrix}  = \lambda_{\min} \begin{bmatrix}
     4 \bm{K}^{(0)}_{ii} &  \tilde{f}(\bm{K}^{(0)}_{ij}) \\
    \tilde{f}(\bm{K}^{(0)}_{j,i}) & 4 \bm{K}^{(0)}_{jj}
 \end{bmatrix} 
= 4 - \tilde{f}(\bm{K}^{(0)}_{j,i}) \equiv \lambda_{\text{dag3}}, \label{eq:lambda_min_dag3}
 \end{aligned}
\end{equation}
where $\tilde{f}(\bm{K}^{(0)}_{ij}) = \bm{K}^{(0)}_{ij} + f(\bm{K}^{(0)}_{ij}) + f(2f(\bm{K}^{(0)}_{ij})) $

\textbf{Conclusion:} by comparing Eq.~\ref{eq:lambda_min_dag1},~\ref{eq:lambda_min_dag2}, and~\ref{eq:lambda_min_dag3}, 
we can show the rank of three upper bounds of least eigenvalues as: $\lambda_{\text{dag1}} < \lambda_{\text{dag2}} < \lambda_{\text{dag3}}$.
Therefore, the bound of the convergence rate of DAG\#3 is better than DAG\#2, and DAG\#1 is the worst. See our Appendix~\ref{appendix:dag_lambda_min_conclusion} in supplement for detailed analysis.

\paragraph{General rules of propagation from $\bm{K}^{(0)}$ to $\bm{K}^{(H)}$.}
\vspace{-0.5em}
\begin{itemize}[leftmargin=*]
    \item Diagonal elements of $\bm{K}^{(H)}$ is determined by the \ul{number of unique paths} from $\bm{X}^{(0)}$ to $\bm{X}^{(H)}$.
    \item Off-diagonal elements of $\bm{K}^{(H)}$ is determined by the \ul{number of incoming edges of $\bm{X}^{(H)}$} and the \ul{number of parameterized operations} on each path.
\end{itemize}
Thus, we could simplify our rules as:
\vspace{-0.5em}
\begin{equation}
    \lambda_{\min}(\bm{K}^{(H)}) \leq \min_{i \neq j} \bigg( P - \sum_{p=1}^P f^{d_p} \big(\bm{K}^{(0)}_{ij} \big) \bigg),
    \label{eq:rule}
\end{equation}
where $P$ is number of end-to-end paths from $\bm{X}^{(0)}$ to $\bm{X}^{(H)}$, $d_p$ is the number of linear transformation operations on the $p$-th path, and $f^{d_p}$ indicates a $d_p$-power composition of $f$.
This summary is based on the propagation of the NNGP variance. 
With this rule, we can quickly link the relationship between the smallest eigenvalue of $\bm{K}^{(H)}$ and $\bm{K}^{(0)}$.
\vspace{-0.5em}

\section{Practical principle of connectivity: effective depth and effective width} \label{sec:effective_depth_width}
\vspace{-0.5em}

Eq.~\ref{eq:rule} precisely characterizes the impact of a network's topology on its convergence.
Inspired by above results, we observe two factors that control the $\lambda_{\min}(\bm{K}^{(H)})$: (1) $P$, the ``width'' of a DAG; (2) $d_p (p \in [1, P])$, the ``depth'' of a DAG.
However, directly comparing convergences via Eq.~\ref{eq:rule} is non-trivial (see our Appendix~\ref{appendix:dag_lambda_min_conclusion}), because: (1) the complicated form of the NNGP propagation ``$f$'' (Eq.~\ref{lem:propagation}); (2) ``$P$'' and ``$d_p$''s are not truly free variables, as they are still coupled together in the discrete topology space of a fully connected DAG.

Motivated by these two challenges, we propose a practical version of our principle. We simplify the analysis of Eq.~\ref{eq:rule} by reducing ``$P$'' and ``$d_p$''s to only two intuitive variables highly related to the network’s connectivity patterns: the effective depth and effective width.
\begin{definition}[Effective Depth and Width]
    \label{def:depth_width}
    Consider the directed acyclic graph (DAG) of a network. Suppose there are $P$ unique paths connecting the starting and the ending vertex. Denote the number of parameterized operations on the $p$-th path as $d_p, p \in [1, P]$.
    We define:
    \begin{equation}
        \begin{aligned}
            \text{Effective Depth} \quad &\bar{d} = \frac{\sum_{p=1}^P d_p}{P}, \quad\quad
            \text{Effective Width} &\bar{m} = \frac{\sum_{p=1}^P \mathds{1}(d_p > 0)}{\bar{d}}, \\
        \end{aligned}
    \end{equation}
\end{definition}
where $\mathds{1}(d_p > 0) = 1$ if $d_p > 0$ otherwise is $0$. 

\begin{remark}
    \label{remark:op_types}
    In our experiments below, we consider fully-connected or convolutional layers as parameterized operations, ignoring their detailed configurations (kernel sizes, dilations, groups, etc.). We consider skip-connections and pooling layers as non-parameterized operations.
\end{remark}

The effective depth considers the averaging effects of multiple parallel paths, and the effective width considers the amount of unique information flowing from the starting vertex to the end, normalized by the overall depth.
We will demonstrate later in Section~\ref{sec:exp_nasbench_d_m} that performances of networks from diverse architecture spaces show positive correlations with these two aspects.
While $\bar{d}$ and $\bar{m}$ may be loose to predict the best architecture, in Section~\ref{sec:nas} we will show that they are very stable in distinguishing bad ones.
Therefore, using these two principles, we can quickly determine if an architecture is potentially unpromising at almost zero cost, by only analyzing its connectivity without any forward or backward calculations.
\vspace{-0.5em}

\section{Experiments}

\subsection{Empirical convergence confirms our analysis} \label{sec:exp_convergence}

\begin{wrapfigure}{r}{57mm}
\vspace{-7em}
\includegraphics[scale=0.31]{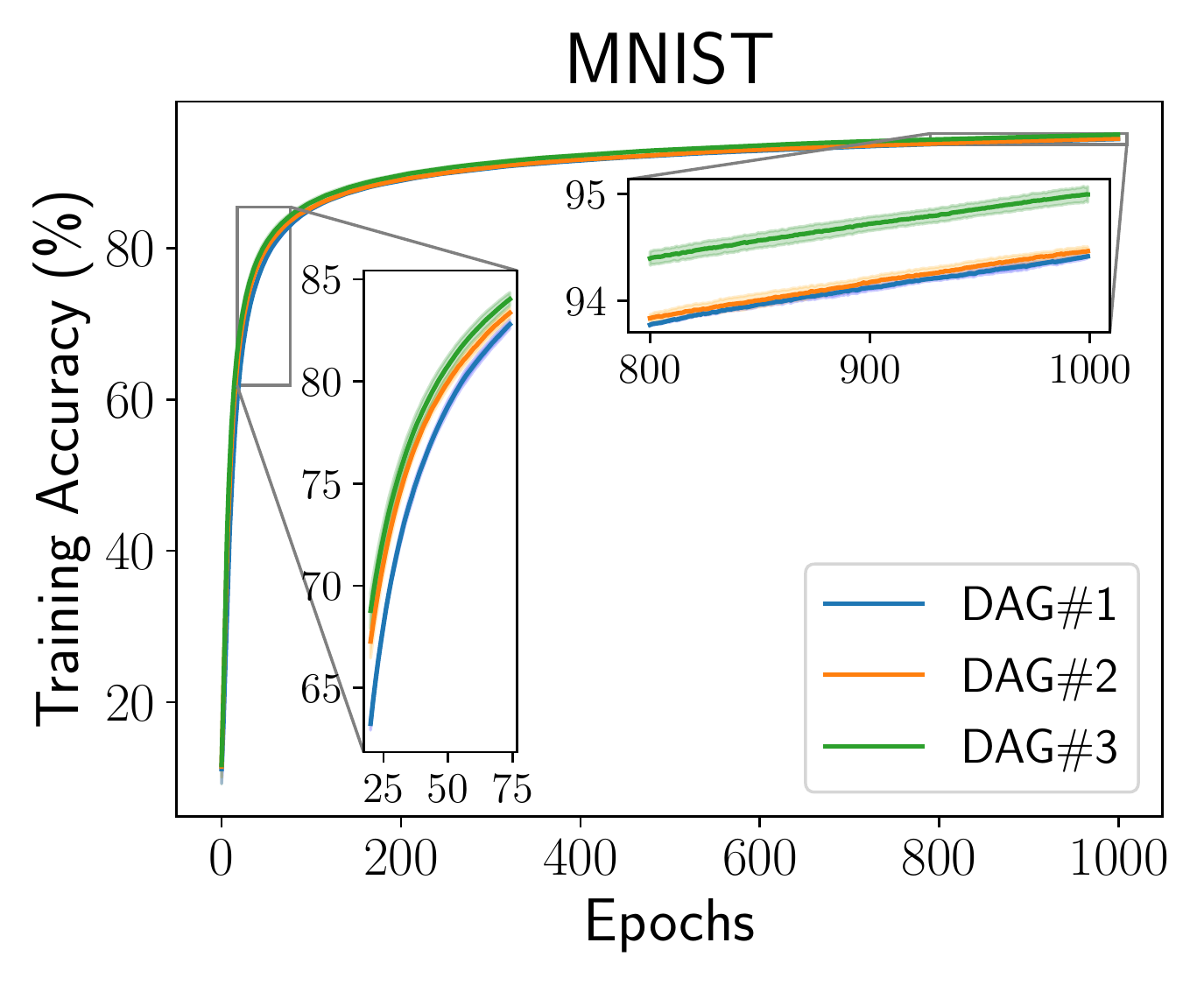}
\includegraphics[scale=0.31]{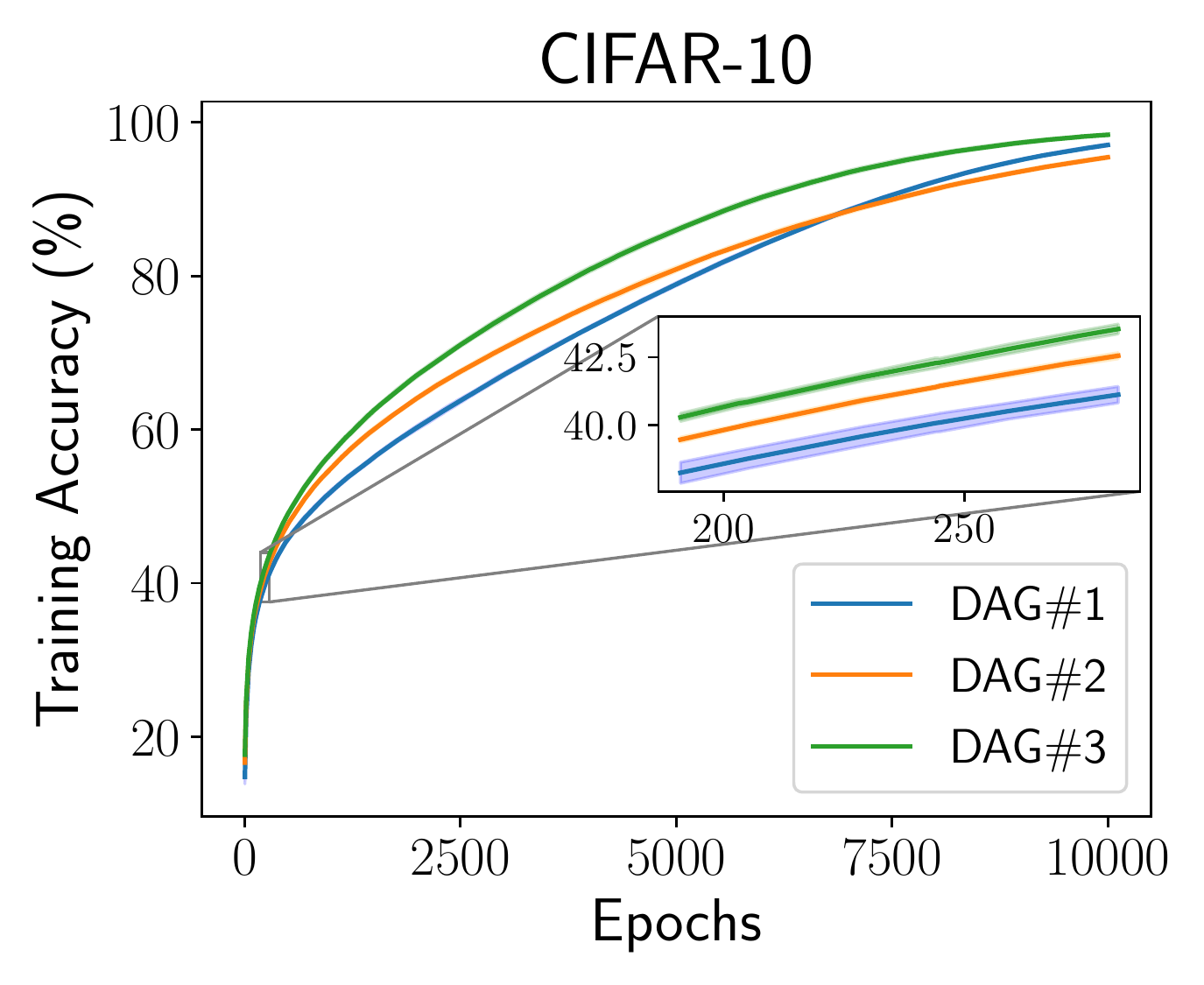}
\vspace{-1em}
\centering
\captionsetup{font=small}
\caption{Empirical convergence of three DAG cases (Figure~\ref{fig:dag_examples}) can verify our theoretical analysis in Section~\ref{sec:convergence}. Small standard deviations of three runs are shown in shadows.}
\label{fig:exp_convergence}
\vspace{-1em}
\end{wrapfigure}

We first experimentally verify our convergence analysis in Section~\ref{sec:propagation_analysis}.
In all cases we use ReLU nonlinearities with Kaiming normal initialization~\cite{he2015delving}.
We build the same three computational graphs of fully-connected layers in Figure~\ref{fig:dag_examples}.
Three networks have hidden layers of a constant width of 1024.
We train the network using SGD with a mini-batch of size 128. The learning rate is fixed at $1\times 10^{-5}$. No augmentation, weight decay, learning rate decay, or momentum is adopted.

Based on our analysis, on both MNIST and CIFAR-10, the convergence rate of DAG\#1 (Figure~\ref{fig:dag_examples} left) is worse than DAG\#2 (Figure~\ref{fig:dag_examples} middle), and is further worse than DAG\#3 (Figure~\ref{fig:dag_examples} right). The experimental observation is consistent with this analysis: the training accuracy of DAG\#3 increases the fastest, and DAG\#2 is faster than DAG\#1.
Although on CIFAR-10 DAG\#1 slightly outperforms DAG\#2 in the end, DAG\#2 still benefits from faster convergence in most early epochs.
This result confirms that by our convergence analysis, we can effectively compare the convergence of different connectivity patterns.

\subsection{Extreme $\bar{d}$ or $\bar{m}$ leads to bad performance on architecture benchmarks} \label{sec:exp_nasbench_d_m}

In this experiment, our goal is to verify the two principles we proposed in Section~\ref{sec:effective_depth_width}. Specifically, we will leverage a large number of network architectures of random connectivities, calculate their $\bar{d}$ and $\bar{m}$, and compare their performance.
We consider popular architecture benchmarks that are widely studied in the community of neural architecture search (NAS). Licenses are publicly available.
\begin{itemize}[leftmargin=*]
    \item The \textit{NAS-Bench-201}~\cite{dong2020bench} provides 15,625 architectures that are stacked by repeated DAGs of four nodes (exactly the same DAG we considered in Section~\ref{sec:convergence} and Figure~\ref{fig:dag_setting}). It contains architecture's performance on three datasets (CIFAR-10, CIFAR-100, ImageNet-16-120 \citep{imagenet16}) evaluated under a unified protocol (i.e. same learning rate, batch size, etc., for all architectures). The operation space contains \textit{zero}, \textit{skip-connection}, \textit{conv}$1\times 1$, \textit{conv}$3\times 3$ \textit{convolution}, and \textit{average pooling} $3\times 3$.
    
    \item Large-scale architecture spaces: NASNet~\cite{zoph2018learning}, Amoeba~\cite{real2019regularized}, PNAS~\cite{liu2018progressive}, ENAS~\cite{pham2018efficient}, DARTS~\cite{liu2018darts}. These spaces have more operation types and more complicated rules on allowed DAG connectivity patterns, see Figure~\ref{fig:teaser} bottom as an example. We refer to their papers for details.
    A benchmark is further proposed in~\cite{radosavovic2019network}: about 5000 architectures are randomly selected from each space above, and in total we have 24827 architectures pretrained on CIFAR-10. We will use this prepared data\footnote{Data is publicly available at {\small \url{https://github.com/facebookresearch/nds}}, only test accuracy available.}.
\end{itemize}

We show our results in Figure~\ref{fig:dag_performance}. From all four plots\footnote{To fairly compare different connectivity patterns, we fix the total number of convolutional layers per model as 3 (out of 6 edges) on NAS-Bench-201, and 5 (out of 10 edges) on the union of large-scale architecture spaces.} (across different architecture spaces and datasets), it can be observed that,
architectures of extreme depths or widths suffer from bad performance.
We also calculate the multi-correlation\footnote{See Appendix~\ref{appendix:corr_multi} for its definition.} ($R$) between the accuracy and the joint of both $\bar{d}$ and $\bar{m}$, and show as legends in Figure~\ref{fig:dag_performance}. All four plots show positive correlations.
It is worth noting that here each dot represents a subset of architectures that share the same $\bar{d}$ and $\bar{m}$, with
possible different fine-grained topology or layer types,
indicating the generalizability of our proposed principles.
The variance of performance in each subset is represented by the radius of the dot.

\begin{remark}
Our notion of ``extreme $\bar{d}$ or $\bar{m}$'' targets a \ul{complete} space of DAGs: given the total number of nodes, any two nodes in the graph can be connected, and there are no topological restrictions or disabled edges (i.e., one cannot introduce any prior to the distribution of graph topologies).
\label{remark:complete_dag}
\end{remark}

\begin{figure}[t!]
\includegraphics[scale=0.22]{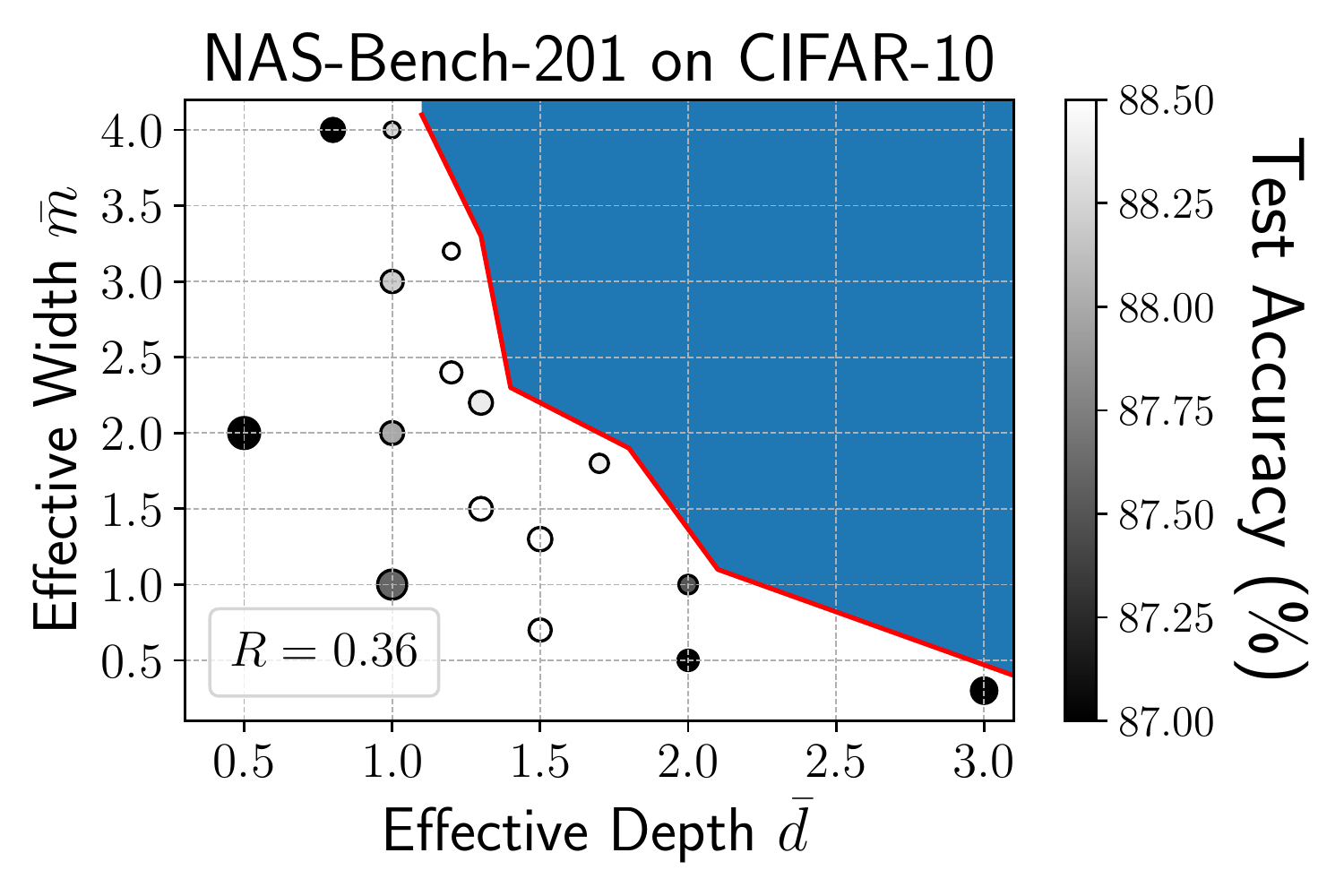}
\includegraphics[scale=0.22]{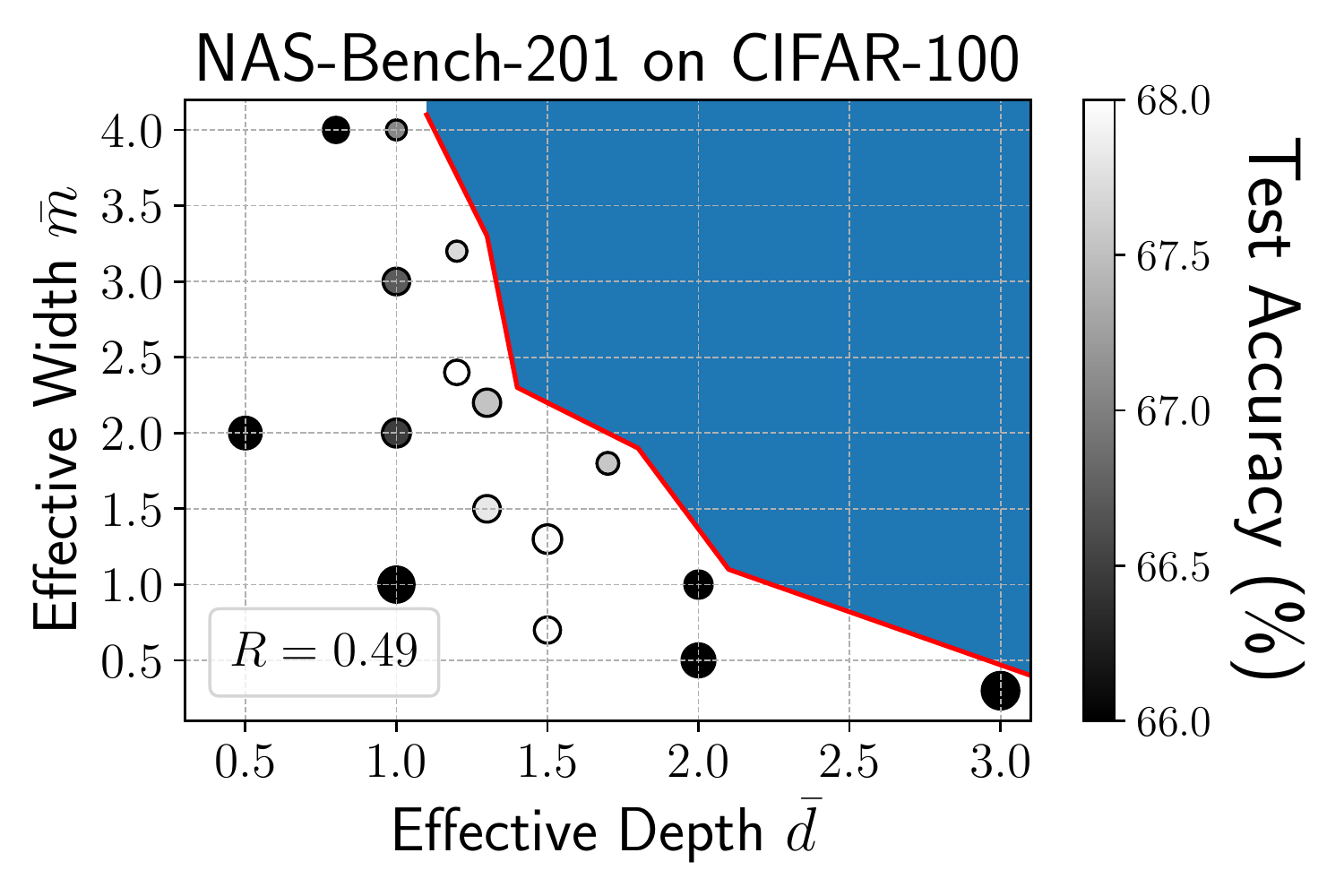}
\includegraphics[scale=0.22]{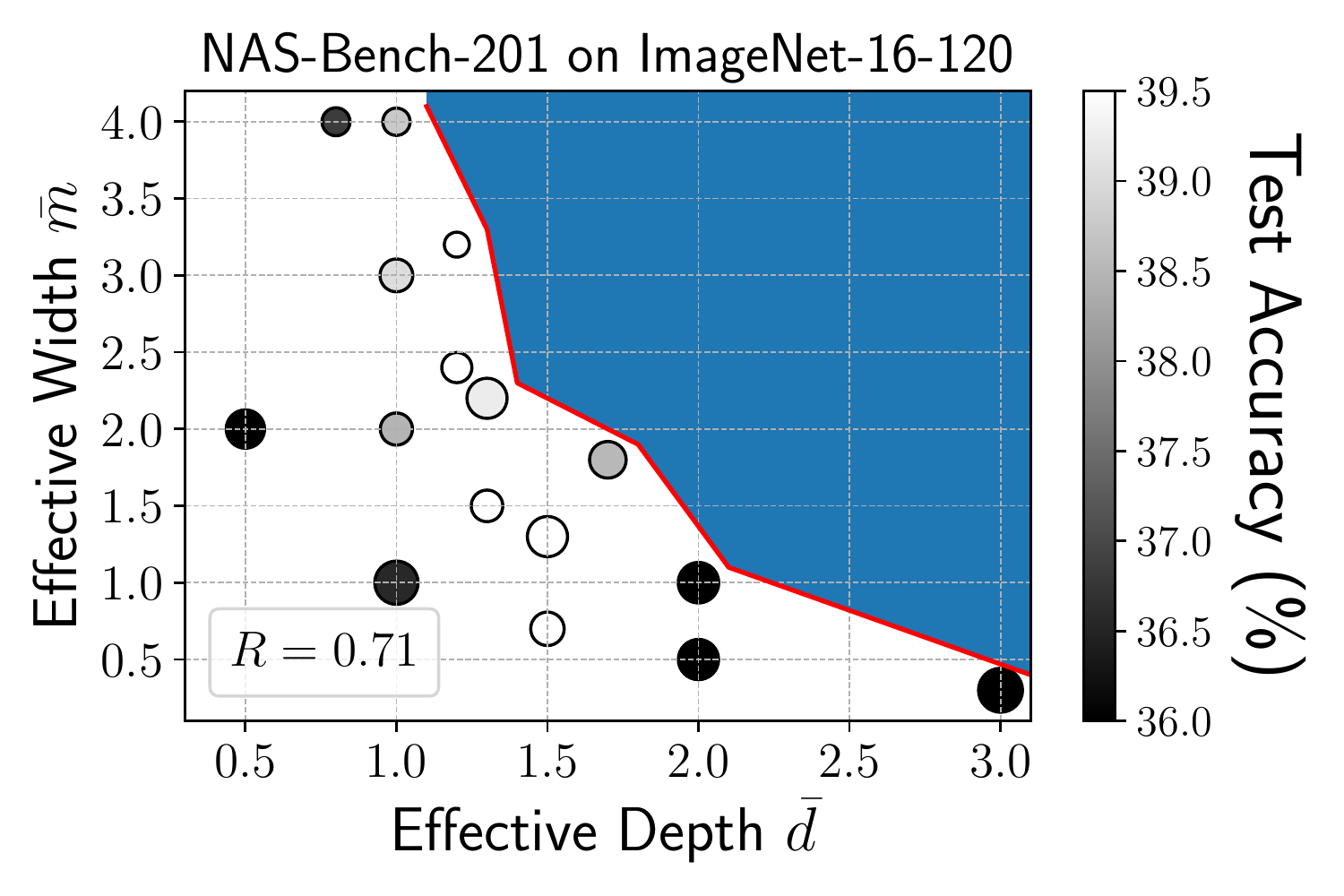}
\includegraphics[scale=0.22]{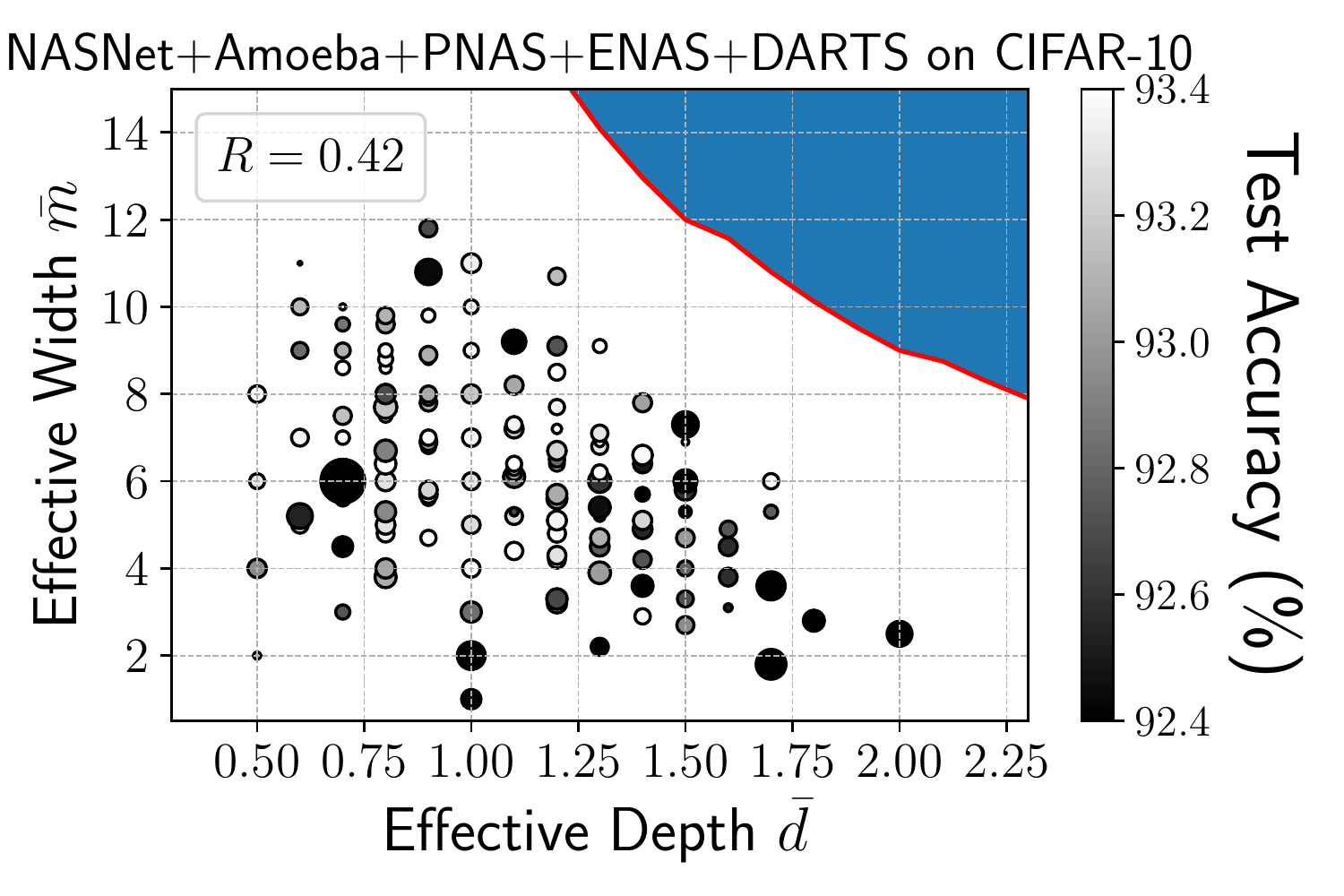}
\vspace{-0.5em}
\centering
\captionsetup{font=small}
\caption{In a complete architecture space, being extremely large or small on either the effective depth ($\bar{d}$) or the effective width ($\bar{m}$) leads a connectivity pattern to bad performance (black) and large variance (large circle size). Each dot represents a subset of architectures of the same $\bar{d}$ and $\bar{m}$. The left three plots are results on CIFAR-10, CIFAR-100, and ImageNet-16-120 from the NAS-Bench-201~\cite{dong2020bench}. The rightmost plot is on CIFAR-10 from a union of multiple large-scale spaces: NASNet~\cite{zoph2018learning}, Amoeba~\cite{real2019regularized}, PNAS~\cite{liu2018progressive}, ENAS~\cite{pham2018efficient}, DARTS~\cite{liu2018darts}. \textcolor{blue}{Blue} areas indicate invalid DAG regions: a DAG cannot be both deep and wide at the same time.}
\label{fig:dag_performance}
\vspace{-1em}
\end{figure}

\subsection{A ``Plug-and-Play'' method for accelerating NAS: bypassing extreme $\bar{d}$ and $\bar{m}$}
\label{sec:nas}

Inspired by the bad performance from extreme $\bar{d}$ or $\bar{m}$ discussed in Section~\ref{sec:exp_nasbench_d_m}, we are motivated to further contribute a ``plug-and-play'' tool to accelerate practical NAS applications.

\paragraph{Method.}
In NAS, the bottleneck of search efficiency is the evaluation cost during the search.
Our core idea is to use $\bar{d}$ and $\bar{m}$ to pre-select architectures that are very likely to be unpromising before we spend extra cost to evaluate them.
Training-based NAS is slow: one has to perform gradient descent on each sampled architecture and use the loss or accuracy to rank different networks.
We instead choose to improve two recent training-free NAS methods: NAS-WOT~\cite{mellor2020neural} and TE-NAS~\cite{chen2020tenas}. This is because our goal here is to accelerate NAS, and these two are state-of-the-art NAS works of extremely low search time cost.
We plan to show that our method can even further accelerate these two training-free NAS works. The rationale is that, our $\bar{d}$ and $\bar{m}$ are even much cheaper: we only do a simple calculation on a network's DAG structure, but NAS-WOT and TE-NAS still have to perform forward or backward on a mini-batch of real data.
Although our $\bar{d}$ and $\bar{m}$ are only inspired from the optimization perspective, not the complexity or generalization, our method mainly filters out bad architectures at a coarse level, but does not promote elites in a fine-grained way.
\vspace{-0.5em}
\begin{enumerate}[align=left]
    \item[\textit{NAS-WOT}] proposed to rank architectures by their local linear maps. Given a mini-batch of data, each point can be identified by its activation pattern through the network, and a network that can assign more unique activation patterns will be considered promising. NAS-WOT uses random sampling to search the architectures and rank them based on this training-free score.\\
    \textbf{Our version}: We will skip potentially unpromising architecture before leaving them for NAS-WOT to calculate their scores.
    \item[\textit{TE-NAS}] proposed to rank architectures by combining both the trainability (condition number of NTK) and expressivity (number of linear regions). TE-NAS progressively pruned a super network to a single-path network, by removing unpromising edges of low scores.\\
    \textbf{Our version}:
    We will skip potentially unpromising architecture before leaving them for TE-NAS to prune.
\end{enumerate}
\vspace{-0.5em}
We provide a pseudocode algorithm in Appendix~\ref{appendix:algorithm} to demonstrate the usage of our method.

\paragraph{How to choose $\bar{d}$ and $\bar{m}$?}
Although $\bar{d}$ and $\bar{m}$ are hyperparameters, it is worth noting that they can be determined in a highly principled way.
In practice,
given a search space,
we only calculate the center $(\bar{d}^*, \bar{m}^*) = (\frac{\bar{d}_{\max}+\bar{d}_{\min}}{2}, \frac{\bar{m}_{\max}+\bar{m}_{\min}}{2})$ and the radius $(r_{\bar{d}}, r_{\bar{m}}) = (\frac{\bar{d}_{\max}-\bar{d}_{\min}}{2}, \frac{\bar{m}_{\max}-\bar{m}_{\min}}{2})$.
We by default only keep architectures within half of the radius from the center for evaluation:
$|\bar{d} - \bar{d}^*| \leq \frac{1}{2} r_{\bar{d}}$ and $|\bar{m} - \bar{m}^*| \leq \frac{1}{2} r_{\bar{m}}$.
This principle leads to $(\bar{d}^*, \bar{m}^*) = (1.5, 10.5)$ with $\frac{1}{2}(r_{\bar{d}}, r_{\bar{m}}) = (0.7, 4.8)$ on DARTS, and $(\bar{d}^*, \bar{m}^*) = (1.6, 2.2)$ with $\frac{1}{2}(r_{\bar{d}}, r_{\bar{m}}) = (0.7, 0.9)$ on NAS-Bench-201.
\vspace{-0.5em}

\paragraph{Implementation settings.} We train searched architectures for 250 epochs using SGD, with a learning rate as 0.5, a cosine scheduler, momentum as 0.9, weight decay as $3\times 10^{-5}$, and a batch size as 768.
This setting follows previous works~\cite{abbdelfattah2020zero,mellor2020neural,zhang2021neural,lopes2021guided,ye2022beta,hou2021single,chen2020drnas,chen2020tenas}.
\vspace{-0.5em}

\begin{table*}[!t]
    \centering
    \caption{NAS Search Performance in DARTS space on ImageNet. Our standard deviations over three random runs are included in parentheses.}
    \scriptsize
    \resizebox{0.9\textwidth}{!}{
    \begin{threeparttable}
    \begin{tabular}{lccccc}
    \toprule
    
    \multirow{2}*{\textbf{Architecture}} & \multicolumn{2}{c}{\textbf{Test Error(\%)}} & \multirow{2}*{\textbf{Params. (M)}} &
    \multirow{2}*{\textbf{Search Cost (GPU days)}} & 
    \multirow{2}*{\textbf{Search Method}} \\ \cline{2-3}
    
    & top-1 & top-5 & & & \\ \midrule
    
    NASNet-A \cite{zoph2018learning} & 26.0 & 8.4 & 5.3 & 2000 & RL \\ 
    AmoebaNet-C \cite{real2019regularized} & 24.3 & 7.6 & 6.4 & 3150 & evolution \\ 
    PNAS \cite{liu2018progressive} & 25.8 & 8.1 & 5.1 & 225 & SMBO \\ 
    MnasNet-92 \cite{tan2019mnasnet}\tnote{$\dagger$} & 25.2 & 8.0 & 4.4 &  288 (TPU) & RL \\ 
    \midrule
    DARTS (2nd) \cite{liu2018darts} & 26.7 & 8.7 & 4.7 & 4.0\tnote{$\ddagger$} & gradient \\
    SNAS (mild) \cite{xie2018snas} & 27.3 & 9.2 & 4.3 & 1.5 & gradient \\ 
    GDAS \cite{dong2019searching} & 26.0 & 8.5 & 5.3 & 0.21 & gradient \\ 
    BayesNAS \cite{zhou2019bayesnas} & 26.5 & 8.9 & 3.9 & 0.2 & gradient \\ 
    P-DARTS \cite{chen2019progressive} & 24.4 & 7.4 & 4.9 & 0.3 & gradient \\ 
    PC-DARTS \cite{xu2019pc} & 25.1 & 7.8 & 5.3 & 0.1\tnote{$\ddagger$} & gradient \\
    PC-DARTS (ImageNet) \cite{xu2019pc}\tnote{$\dagger$} & 24.2 & 7.3 & 5.3 & 3.8 & gradient \\
    ProxylessNAS (GPU) \cite{cai2018proxylessnas}\tnote{$\dagger$} & 24.9 & 7.5 & 7.1 & 8.3 & gradient \\ 
    SGAS (Cri 1. avg) \cite{li2020sgas} & 24.42 (0.16) & 7.29 (0.09) & 5.3 & 0.25 & gradient \\
    DrNAS \cite{chen2020drnas}\tnote{$\dagger$} & 23.7 & 7.1 & 5.7 & 4.6 & gradient \\
    \midrule
    NAS-WOT~\cite{mellor2020neural}\tnote{$\dagger$}~~\tnote{$\star$} & 26.2 & 8.2 & 4.4 & 0.0036\tnote{$\ddagger$} & training-free \\ 
    NAS-WOT \textbf{+ DAG (ours)}\tnote{$\dagger$} & 25.9 (0.4) & 8.2 & 4.4 & 0.003\tnote{$\ddagger$} & training-free \\
    TE-NAS \cite{chen2020tenas}\tnote{$\dagger$} & 24.5 & 7.5 & 5.4 & 0.17\tnote{$\ddagger$} & training-free \\
    TE-NAS \textbf{+ DAG (ours)}\tnote{$\dagger$} & 24.2 (0.3) & 7.4 & 6.1 & 0.1\tnote{$\ddagger$} & training-free \\
    \bottomrule
    \end{tabular}
    \begin{tablenotes}
        \item[$\dagger$] Architectures searched on ImageNet. Other methods searched on CIFAR-10 and then transfered to the ImageNet.
        \item[$\star$] Our reproduced result, the original work did not provide results on the ImageNet.
        \item[$\ddagger$] Recorded on a single GTX 1080Ti GPU.
    \end{tablenotes}
    \end{threeparttable}}
    \label{tab:imagenet}
\vspace{-1.5em}
\end{table*}

\begin{table}[t!]
    \centering
    \caption{Search Performance from NAS-Bench-201. ``optimal'' indicates the best test accuracy achievable in the space. Our standard deviations over three random runs are included in parentheses.}
    \resizebox{0.8\textwidth}{!}{
    \begin{tabular}{lcccccc}
    \toprule
    \textbf{Architecture} & \textbf{CIFAR-10} & \textbf{CIFAR-100} & \textbf{ImageNet-16-120} & \textbf{\tabincell{c}{Search Cost\\(GPU sec.)}} & \textbf{\tabincell{c}{Search\\Method}} \\ \midrule
    ResNet \cite{he2016deep} & 93.97 & 70.86 & 43.63 & - & - \\ \midrule
    RSPS \cite{li2020random} & $87.66(1.69)$ & $58.33(4.34)$ & $31.14(3.88)$ & 8007.13 & random \\
    ENAS \cite{pham2018efficient} & $54.30(0.00)$ & $15.61(0.00)$ & $16.32(0.00)$ & 13314.51 & RL \\
    DARTS (1st) \cite{liu2018darts} & $54.30(0.00)$ & $15.61(0.00)$ & $16.32(0.00)$ & 10889.87 & gradient \\
    DARTS (2nd) \cite{liu2018darts} & $54.30(0.00)$ & $15.61(0.00)$ & $16.32(0.00)$ & 29901.67 & gradient \\
    GDAS \cite{dong2019searching} & $93.61(0.09)$ & $70.70(0.30)$ & $41.84 (0.90)$ & 28925.91 & gradient \\
    \midrule
    WOT~\cite{mellor2020neural} & 92.81 (0.99) &  69.48 (1.70) & 43.10 (3.16) & 30.01 & training-free\\
    WOT + DAG (ours) & 92.98 (0.78) & 69.86 (1.24) & 43.44 (2.64) & 17.95 (-40.2\%) & training-free\\
    TE-NAS~\cite{chen2020tenas} & 93.9 (0.47) & 71.24 (0.56) & 42.38 (0.46) & 1558 & training-free\\
    TE-NAS + DAG (ours) & 93.6 (0.37) & 71.26 (0.77) & 44.38 (0.76) & 1077 (-30.9\%) & training-free\\
    FP-NAS~\cite{fpnas} & 77.4 (16.6) & 64.7 (5.3) & 26.7 (10.4) & 4837 & gradient\\
    FP-NAS + DAG (ours) & 93.3 (0.3) & 70.8 (0.4) & 44.5 (1.4) & 2612 (-46.0\%) & gradient \\
    \midrule
    \textbf{Optimal} & 94.37 & 73.51 & 47.31 & - & - \\
    \bottomrule
    \end{tabular}\label{table:nasbench201}
    }
\vspace{-1.5em}
\end{table}

\paragraph{DARTS space on ImageNet.}
As shown in Table~\ref{tab:imagenet}, for both two training-free NAS methods, by adopting our pre-filtration strategy, we can further reduce the search time cost and achieve better search results. For NAS-WOT, we can save 16.7\% search time cost and reduce 0.3\% top-1 error. For TE-NAS, we can significantly save 41.2\% search time cost, and still improve the top-1 error by 0.4\%.
FLOPs of our search architectures are 0.68G (TE-NAS + DAG) and 0.56G (WOT + DAG).

\begin{wraptable}{r}{65mm}
\vspace{-0.5em}
    \centering
    \captionsetup{font=small}
    \caption{Our method is robust to choices of $\bar{d}$ and $\bar{m}$ (WOT~\cite{mellor2020neural} on NAS-Bench-201).}
    \vspace{-0.5em}
    \scriptsize
    \resizebox{0.45\textwidth}{!}{
    {\renewcommand{\arraystretch}{1.4}
    \begin{tabular}{cc}
        \toprule
        Ranges of accepted $\bar{d}$ and $\bar{m}$ & CIFAR-100 \\ \midrule
        $|\bar{d} - \bar{d}^*| \leq r_{\bar{d}}$, $|\bar{m} - \bar{m}^*| \leq r_{\bar{m}}$ (baseline) & 69.48 (1.70) \\ \midrule
        $|\bar{d} - \bar{d}^*| \leq \frac{3}{4} r_{\bar{d}}$, $|\bar{m} - \bar{m}^*| \leq \frac{3}{4} r_{\bar{m}}$ & 69.51 (1.70) \\
        $|\bar{d} - \bar{d}^*| \leq \frac{1}{2} r_{\bar{d}}$, $|\bar{m} - \bar{m}^*| \leq \frac{1}{2} r_{\bar{m}}$ & 69.86 (1.24) \\
        $|\bar{d} - \bar{d}^*| \leq \frac{1}{4} r_{\bar{d}}$, $|\bar{m} - \bar{m}^*| \leq \frac{1}{4} r_{\bar{m}}$ & 69.97 (1.19) \\ \bottomrule
    \end{tabular}}}
    \label{tab:201_dm_ranges}
\end{wraptable}

\paragraph{NAS-Bench-201 Space.}
As shown in Table~\ref{table:nasbench201}, for both NAS-WOT and TE-NAS, we reduce over 30\% search time cost with strong accuracy.
We also include a training-based method FP-NAS~\cite{fpnas}, where we even achieve 46\% seach cost reduction with better performance.
Moreover, we show that our method is robust to the choices of $\bar{d}$ and $\bar{m}$. In the ablation study in Table~\ref{tab:201_dm_ranges}, by changing different ranges of $\bar{d}$ and $\bar{m}$, our method remains strong over the WOT baseline.

\section{Conclusion}

In this work, we show that it is possible to conduct fine-grained convergence analysis on networks of complex connectivity patterns. By analyzing how an NNGP kernel propagates through the networks, we can fairly compare different networks' bounds of convergence rates. This theoretical analysis and comparison are empirically verified on MNIST and CIFAR-10. To make our convergence analysis more practical and general, we propose two intuitive principles on how to design a network's connectivity patterns: the effective depth and the effective width. Experiments on diverse architecture benchmarks and datasets demonstrate that networks with an extreme depth or width show bad performance, indicating that both the depth and width are important. Finally, we apply our principles to the large-scale neural architecture search application, and our method can largely accelerate the search cost of two training-free efficient NAS works with faster and better search performance. Our work bridge the gap between the Deep Learning theory and the application part, making the theoretical analysis more practical in architecture designs.

\section*{Acknowledgement}

B. Hanin and Z. Wang are supported by NSF Scale-MoDL (award numbers: 2133806, 2133861).

{
\small
 \bibliographystyle{plain}
\bibliography{neurips_2022}
}

\section*{Checklist}

The checklist follows the references.  Please
read the checklist guidelines carefully for information on how to answer these
questions.  For each question, change the default \answerTODO{} to \answerYes{},
\answerNo{}, or \answerNA{}.  You are strongly encouraged to include a {\bf
justification to your answer}, either by referencing the appropriate section of
your paper or providing a brief inline description.  For example:
\begin{itemize}
  \item Did you include the license to the code and datasets? \answerYes{See Section~\ref{gen_inst}.}
  \item Did you include the license to the code and datasets? \answerNo{The code and the data are proprietary.}
  \item Did you include the license to the code and datasets? \answerNA{}
\end{itemize}
Please do not modify the questions and only use the provided macros for your
answers.  Note that the Checklist section does not count towards the page
limit.  In your paper, please delete this instructions block and only keep the
Checklist section heading above along with the questions/answers below.

\begin{enumerate}

\item For all authors...
\begin{enumerate}
  \item Do the main claims made in the abstract and introduction accurately reflect the paper's contributions and scope?
    \answerYes{}
  \item Did you describe the limitations of your work?
    \answerYes{In Remark~\ref{remark:complete_dag}, our claim ``Extreme $\bar{d}$ or $\bar{m}$ leads to bad performance'' requires architectures from a complete space of DAGs, without introducing any prior on the DAG topology.}
  \item Did you discuss any potential negative societal impacts of your work?
    \answerNo{Our work does not have negative societal impacts}
  \item Have you read the ethics review guidelines and ensured that your paper conforms to them?
    \answerYes{}
\end{enumerate}

\item If you are including theoretical results...
\begin{enumerate}
  \item Did you state the full set of assumptions of all theoretical results?
    \answerYes{}
        \item Did you include complete proofs of all theoretical results?
    \answerYes{We include our proofs in supplement.}
\end{enumerate}

\item If you ran experiments...
\begin{enumerate}
  \item Did you include the code, data, and instructions needed to reproduce the main experimental results (either in the supplemental material or as a URL)?
    \answerYes{We include our code and Readme of instructions in supplement. Data is publicly available online.}
  \item Did you specify all the training details (e.g., data splits, hyperparameters, how they were chosen)?
    \answerYes{See Section~\ref{sec:nas}.}
        \item Did you report error bars (e.g., with respect to the random seed after running experiments multiple times)?
    \answerYes{We run experiments for three random seeds.}
        \item Did you include the total amount of compute and the type of resources used (e.g., type of GPUs, internal cluster, or cloud provider)?
    \answerYes{See Table~\ref{tab:imagenet}.}
\end{enumerate}

\item If you are using existing assets (e.g., code, data, models) or curating/releasing new assets...
\begin{enumerate}
  \item If your work uses existing assets, did you cite the creators?
    \answerYes{}
  \item Did you mention the license of the assets?
    \answerYes{Licenses are publicly available}
  \item Did you include any new assets either in the supplemental material or as a URL?
    \answerNA{}
  \item Did you discuss whether and how consent was obtained from people whose data you're using/curating?
    \answerNo{}
  \item Did you discuss whether the data you are using/curating contains personally identifiable information or offensive content?
    \answerNA{}
\end{enumerate}

\item If you used crowdsourcing or conducted research with human subjects...
\begin{enumerate}
  \item Did you include the full text of instructions given to participants and screenshots, if applicable?
    \answerNA{}
  \item Did you describe any potential participant risks, with links to Institutional Review Board (IRB) approvals, if applicable?
    \answerNA{}
  \item Did you include the estimated hourly wage paid to participants and the total amount spent on participant compensation?
    \answerNA{}
\end{enumerate}

\end{enumerate}


\newpage

\appendix

\section{Pseudocode of our ``Plug-and-Play'' method}
\label{appendix:algorithm}

Here we provide a pseudocode of using our ``Plug-and-Play'' method in the simplified WOT algorithm~\cite{mellor2020neural}.
Black lines are from original WOT.
Red lines are our method, which is extremely easy to use and cheap to calculate.

\begin{algorithm2e}[h!]
\caption{Pseudocode of ``WOT + ours'' in a PyTorch-like style.}
\label{alg:code}
\algcomment{\fontsize{7.2pt}{0em}\selectfont \texttt{bmm}: batch matrix multiplication; \texttt{mm}: matrix multiplication; \texttt{cat}: concatenation.
}
\definecolor{codeblue}{rgb}{0.25,0.5,0.5}
\lstset{
  backgroundcolor=\color{white},
  basicstyle=\fontsize{7.2pt}{7.2pt}\ttfamily\selectfont,
  columns=fullflexible,
  breaklines=true,
  captionpos=b,
  commentstyle=\fontsize{7.2pt}{7.2pt}\color{codeblue},
  keywordstyle=\fontsize{7.2pt}{7.2pt},
  escapechar={|},
}
\begin{lstlisting}[language=python]
# depth_center, r_depth: center and radius of depths defined in Section|~\ref{sec:nas}|
# width_center, r_width: center and radius of widths defined in Section|~\ref{sec:nas}|
# search_space: a given space of architectures, like DARTS or NAS-Bench-201
# N: number of sampled architectures to evaluate
# WOT: the Jacobian-based scoring method in|~\cite{mellor2020neural}|.

archs = random_sample(search_space, N) # randomly sample N architectures to evaluate
scores = []
for arch in archs:
    |\color{red}depth, width = get\_depth\_width(arch)| # get DAG's effective depth and width
    |\color{red}if not (abs(depth - depth\_center) <= 0.5 * r\_depth and abs(width - width\_center) <= 0.5 * r\_width):|
        # this DAG has large depth or large width, bypass it
        |\color{red}scores.append(-inf)|
        |\color{red}continue|
    score = WOT(arch)
    scores.append(score)
return archs[argmax(scores)]
\end{lstlisting}
\end{algorithm2e}

\section{Proofs for Section~\ref{sec:propagation_analysis}}

\begin{customlemma}{\ref{lem:propagation}}[Propagation of $\bm{K}$]
Define the propagation as $\bm{K}^{(l)} = f(\bm{K}^{(l-1)})$ and $\bm{b}^{(l)} =  g(\bm{b}^{(l-1)})$. When the edge operation is a linear transformation, we have,
{\small
\begin{equation}
\begin{aligned}
   \bm{K}^{(l)}_{ii} & = f(\bm{K}^{(l-1)}_{ii}) = \int  \mathcal{D}_z c_\sigma \sigma^2 \bigg(\sqrt{\bm{K}_{ii}^{(l-1)}}z \bigg)  \\ 
 \bm{K}^{(l)}_{ij} & = f(\bm{K}^{(l-1)}_{ij}) = \int \mathcal{D}_{z_1} \mathcal{D}_{z_2} c_\sigma  \sigma\bigg( \sqrt{\bm{K}^{(l-1)}_{ii} } z_1 \bigg) \sigma \bigg(\sqrt{\bm{K}^{(l-1)}_{jj}}( \bm{C}_{ij}^{(l-1)} z_1 + \sqrt{1-(\bm{C}_{ij}^{(l-1)})^2} z_2) \bigg)   \\
\bm{C}_{ij}^{(l)} & = \bm{K}_{ij}^{(l)}/\sqrt{ \bm{K}_{ii}^{(l)} \bm{K}_{jj}^{(l)}}\\ 
\bm{b}^{(l)}_{i} & = g(\bm{b}^{(l-1)}_{i}) = \int  \mathcal{D}_{z} \sqrt{c_\sigma} \sigma \big(\sqrt{ \bm{K}_{ii}^{(l)}} z \big)
\end{aligned}
\end{equation}
}
where $z$, $z_1$ and $z_2$ are independent
standard Gaussian random variables. Besides,
$\int \mathcal{D}_z = \frac{1}{\sqrt{2\pi}} \int dz e^{-\frac{1}{2}z^2}$ is the measure for a normal distribution.
\end{customlemma}

\begin{proof}[Proof of Lemma~\ref{lem:propagation}]

According to the property of Gaussian process for neural networks \cite{lee2017deep}, for each layer $l$, we have
$$
\begin{aligned}
\bm{K}_{ii}^{(l)} & =  \int \mathcal{D}_z  c_\sigma  \sigma^2\bigg(\sqrt{\bm{K}_{ii}^{(l-1)}}z \bigg) \\
\bm{K}_{ij}^{(l)} & = \int  \mathcal{D}_{z_1} \mathcal{D}_{z_2} c_\sigma \sigma(u) \sigma(v) 
\end{aligned}
$$
where $u = \sqrt{\bm{K}_{ii}^{(l-1)}} z_1 $ and $v =\sqrt{\bm{K}_{jj}^{(l-1)}} \bigg({\bm{C}_{ij}^{(l-1)}} z_1 + \sqrt{1-(\bm{C}_{ij}^{(l-1)})^2} z_2 \bigg) $, with $\bm{C}_{ij}^{(l)}  = \bm{K}_{ij}^{(l)}/\sqrt{ \bm{K}_{ii}^{(l)} \bm{K}_{jj}^{(l)}}$. Besides,
$\int \mathcal{D}_z = \frac{1}{\sqrt{2\pi}} \int dz e^{-\frac{1}{2}z^2}$ is the measure for a normal distribution. 

Note that the diagonal entry $\bm{K}_{ii}^{(l)}$ of NNGP is a self-correlation term thus governed by one random variance. On the other hand, $\bm{K}_{ij}^{(l)}$ is the correlation between $\bm{X}_{i}^{(l)} $ and $\bm{X}_{j}^{(l)}$, therefore it contains two independent random variables. Finally, we calculate the bias term:
\begin{align*}
  \bm{b}^{(l)}_{i} & = g(\bm{b}^{(l-1)}_{i}) =   \sqrt{c_\sigma} \int \mathcal{D}_{z} \sigma(\sqrt{ \bm{K}_{ii}^{(l)}} z)
\end{align*}

\end{proof}

\begin{customlemma}{\ref{lem:lambda_min_2x2}}
For a positive definite symmetric matrix $\bm{K} \in \mathbb{R}^{N \times N}$, the smallest eigenvalue is bounded by the smallest eigenvalue of its $2 \times 2$ principal sub-matrix.
\begin{equation}
\lambda_{\min}(\bm{K}) \le \min_{i \neq j} \lambda_{\min} \begin{bmatrix}
     \bm{K}_{ii}  & \bm{K}_{ij}\\
     \bm{K}_{ji} & \bm{K}_{jj} 
 \end{bmatrix}
\end{equation}
\end{customlemma}

\begin{proof}[Proof of Lemma~\ref{lem:lambda_min_2x2}]

We first have the \textbf{Cauchy Interlace Theorem}:
Suppose $\bm{A} \in \mathbb{R}^{n\times n}$ is symmetric. Let $\bm{A}^{(n-1)} \in \mathbb{R}^{(n-1)\times(n-1)}$ be a principal submatrix of $\bm{A}$ (obtained by removing both $i$-th row and $i$-th column for any $i \in [1, n]$). Suppose $\bm{A}$ has eigenvalues $\lambda_1^{(n)} \geq \lambda_2^{(n)} \geq \cdots \geq \lambda_n^{(n)}$ and $\bm{A}^{(n-1)}$ has eigenvalues $\lambda_1^{(n-1)} \geq \lambda_2^{(n-1)} \geq \cdots \geq \lambda_{n-1}^{(n-1)}$. Then:
$$
\lambda_1^{(n)} \geq \lambda_1^{(n-1)} \geq \lambda_2^{(n)} \geq \lambda_2^{(n-1)} \geq \cdots \geq \lambda_{n-1}^{(n-1)} \geq \lambda_n^{(n)}.
$$
By repeating this theorem for $n-2$ times, we can have:
$$
\lambda_2^{(2)} \geq \lambda_3^{(3)} \geq \cdots \geq \lambda_{n-1}^{(n-1)} \geq \lambda_n^{(n)}.
$$
This means we can upper bound $\lambda_n^{(n)}$ by the minimal smallest eigenvalue of any $2\times 2$ principal submatrix of $\bm{A}$.

\end{proof}

\section{Detailed Analysis on three DAGs in Figure~\ref{fig:dag_examples} and Section~\ref{sec:propagation_analysis}.}
\label{appendix:dag_lambda_min_conclusion}

We take the condition of ReLU that $\sigma(x) = \max \{0,x \}$ and $c_\sigma = 2$ into Lemma \ref{lem:propagation} and obtain:
$$
\begin{aligned}
 \bm{K}^{(l)}_{ii} & = \int c_\sigma \mathcal{D}_z \sigma^2 \bigg(\sqrt{\bm{K}_{ii}^{(l-1)}}z \bigg) =\bm{K}^{(l-1)}_{ii} \\
 \bm{K}^{(l)}_{ij}  
 & =\frac{2\bm{C}^{(l-1)}_{ij} \arcsin{\bm{C}^{(l-1)}_{ij}}+2\sqrt{1-(\bm{C}^{(l-1)}_{ij})^2}+ \pi \bm{C}^{(l-1)}_{ij}}{2 \pi}  \cdot  \sqrt{\bm{K}^{(l)}_{ii} \bm{K}^{(l)}_{jj} }\\
\bm{b}^l_{i} & = \int \sqrt{c_\sigma \mathcal{D}_z} \sigma(\sqrt{\bm{K}_{ii}^{(l-1)}}z) =  \frac{\sqrt{2}}{2} .
\end{aligned}
$$

We provide more properties regarding the propagation of normalized non-diagonal elements in NNGP by studying the function
$$
\bm{C}^{(l)}_{ij}  
  = h(\bm{C}^{(l-1)}_{ij}) = \frac{2\bm{C}^{(l-1)}_{ij} \arcsin{\bm{C}^{(l-1)}_{ij}}+2\sqrt{1-(\bm{C}^{(l-1)}_{ij})^2}+ \pi \bm{C}^{(l-1)}_{ij}}{2 \pi}.
$$
It is known that $f$ is differentiable and satisfies:
$$
h'(\bm{C}^{(l-1)}_{ij}) =  \frac{1}{\pi} \arcsin(\bm{C}^{(l-1)}_{ij}) + \frac{1}{2}; ~ h''(\bm{C}^{(l-1)}_{ij}) =  \frac{1}{\pi \sqrt{1-(\bm{C}^{(l-1)}_{ij})^2 }}.
$$
Therefor we have $h(\bm{C}^{(l-1)}_{ii}) > \bm{C}^{(l-1)}_{ii}$, for $\bm{C}^{(l-1)}_{ii} \in (0,1)$. Finally, we show the Taylor expansion for $h(\bm{C}^{(l-1)}_{ii})$ when $\bm{C}^{(l-1)}_{ii}$ is close to 1:
$$
h(\bm{C}^{(l)}_{ij}) \overset{\bm{C}^{(l)}_{ij} \rightarrow 1-}{=}  \bm{C}^{(l)}_{ij}+ \frac{2\sqrt{2}}{3 \pi}(1- \bm{C}^{(l)}_{ij})^{3/2} + O((1-\bm{C}^{(l)}_{ij})^{5/2}) .
$$
This can be obtained by expanding each term in $h(\bm{C}_{ij}^{(l-1)})$.

Before our analysis, we re-state some assumptions and facts:
\begin{itemize}[leftmargin=*]
    \item $h(\cdot)$ is a monotonically increasing function in $[0, 1)$, and         $\lim_{\bm{C}_{ij}^{(l-1)} \rightarrow 1-} h(\bm{C}_{ij}^{(l-1)}) = 1$. Please refer to the blue curve in Figure 2(b) in~\cite{hayou2019impact} for the hehavior of $h(\cdot)$.
    \item We assume no parallel data points in the training data, thus $\bm{C}_{ij}^{(0)} \in [0, 1)$ and $\bm{K}_{ij}^0 \in [0, 1)$. This further indicates $h(\bm{C}_{ij}^{(0)}), h(\bm{K}_{ij}^{(0)}) \in [1/\pi, 1)$.
    \item $f(\bm{K}_{ij}^{(l-1)}) > \bm{K}_{ij}^{(l-1)}$.
\end{itemize}

We first show that $\lambda_{\text{dag1}} < \lambda_{\text{dag2}}$, namely, the upper bounds of the least eigenvalues of DAG\#1 is smaller than that of DAG\#2 in Figure~\ref{fig:dag_examples}:
$$
\begin{aligned}
\lambda_{\text{dag2}} &= 3(1 - f(\bm{K}^{(0)}_{ij})) > 1 - f(\bm{K}^{(0)}_{ij}) > 1 - f^3(\bm{K}^{(0)}_{ij}) = \lambda_{\text{dag1}}.
\end{aligned}
$$

Before we compare $\lambda_{\text{dag2}}$ and $\lambda_{\text{dag3}}$, we first try to simplify $\lambda_{\text{dag3}}$. Specifically, we have:
$$
\begin{aligned}
f(\bm{K}^{(0)}_{ij}) &= h(\bm{C}^{(0)}_{ij})\sqrt{\bm{K}^{(1)}_{ii}\bm{K}^{(1)}_{jj}} = h(\bm{C}^{(0)}_{ij})
\end{aligned}
$$
and
$$
\begin{aligned}
f(2f(\bm{K}^{(0)}_{ij})) &= h(\frac{2f(\bm{K}^{(0)}_{ij})}{\sqrt{2f(\bm{K}^{(0)}_{ii}) \cdot 2f(\bm{K}^{(0)}_{jj})}})\sqrt{2\bm{K}^{(1)}_{ii} \cdot 2\bm{K}^{(1)}_{jj}} \\
&= h(f(\bm{K}^{(0)}_{ij})) \cdot 2= 2h(h(\bm{C}^{(0)}_{ij}))
\end{aligned}
$$
Now we compare $\lambda_{\text{dag2}}$ and $\lambda_{\text{dag3}}$:
$$
\begin{aligned}
\lambda_{\text{dag3}} - \lambda_{\text{dag2}} &= 4 - \left( \bm{K}^{(0)}_{ij} + f(\bm{K}^{(0)}_{ij}) + f(2f(\bm{K}^{(0)}_{ij})) \right) - 3(1 - f(\bm{K}^{(0)}_{ij})) \\
&= 1 - \bm{K}^{(0)}_{ij} + 2f(\bm{K}^{(0)}_{ij}) - f(2f(\bm{K}^{(0)}_{ij})) \\
&= 1 - \bm{K}^{(0)}_{ij} + 2\left( h(\bm{C}^{(0)}_{ij}) - h(h(\bm{C}^{(0)}_{ij})) \right).
\end{aligned}
$$
As $\bm{K}^{(0)}_{ij} \rightarrow 1-$, we have $\bm{C}^{(0)}_{ij} \rightarrow 1-$ and $h(\bm{C}_{ij}^{(l-1)}) \rightarrow 1-$. Thus $\lim_{\bm{K}^{(0)}_{ij} \rightarrow 1-} \lambda_{\text{dag3}} - \lambda_{\text{dag2}} = 0$.
Moreover, take the derivative of $\lambda_{\text{dag3}} - \lambda_{\text{dag2}}$ w.r.t. $\bm{K}^{(0)}_{ij}$, we have:
$$
\begin{aligned}
\frac{\partial (\lambda_{\text{dag3}} - \lambda_{\text{dag2}})}{\partial \bm{K}^{(0)}_{ij}} &= -1 + 2 h'(\bm{K}^{(0)}_{ij}) (1 - h'(h(\bm{K}^{(0)}_{ij}))) \\
&< -1 + 2 \cdot 1 \cdot (1 - \frac{1}{2}) = 0.
\end{aligned}
$$
The inequality considers the fact that $h' \in [\frac{1}{2}, 1)$. Therefore, $\lambda_{\text{dag3}} - \lambda_{\text{dag2}}$ is a monotonically decreasing function w.r.t. $\bm{K}^{(0)}_{ij}$ on $[0, 1)$. Thus, we have $\lambda_{\text{dag3}} - \lambda_{\text{dag2}} > 0$ on $[0, 1)$.

In conclusion, we have $\lambda_{\text{dag3}} > \lambda_{\text{dag2}} > \lambda_{\text{dag1}}$. Therefore, the convergence rates are ranked as: DAG\#3 $>$ DAG\#2 $>$ DAG\#1.

\section{Coefficient of Multiple Correlation $R$} \label{appendix:corr_multi}

Our correlation coefficient $R$ between the accuracy and the joint of both $\bar{d}$ and $\bar{m}$ is calculated as blow:

\begin{equation}
\begin{aligned}
    R^{2} &= \bm{c}^{\top} \bm{R}_{\bar{d}, \bar{m}}^{-1} \bm{c} \\
    \bm{c} &= [r_{\bar{d}, y}, r_{\bar{m}, y}]^T \\
    \bm{R}_{\bar{d}, \bar{m}} &= \begin{bmatrix}
r_{x_{\bar{d}} x_{\bar{d}}} & r_{x_{\bar{d}} x_{\bar{m}}} \\
r_{x_{\bar{m}} x_{\bar{d}}} & r_{x_{\bar{m}} x_{\bar{m}}}
\end{bmatrix}
\end{aligned}
\end{equation}

\section{Proof of Theorem~\ref{thm:linear_convergence}}

\begin{customthm}{\ref{thm:linear_convergence}}[Linear Convergence of DAG]
    At $k$-th iteration of gradient descent on $N$ training data samples, we have a network parameterized by $\bm{\theta}(k)$ with its output as $\bm{u}(k)$. The network's connectivity pattern can be formulated as a DAG of $H$ nodes and $P$ end-to-end paths.
    With MSE loss $\mathcal{L}(\bm{\theta}(k))=\frac{1}{2}\|\bm{y}-\bm{u}(k)\|_{2}^{2}$, suppose the learning rate {$\eta=O\left(\frac{\lambda_{\min}\left(\bm{K}^{(H)}\right)}{(NP)^{2}} 2^{O(H)}\right)$ and the number of neurons (width) per layer $m=\Omega\left(\max \left\{\frac{(NP_H)^{4}}{\lambda_{\min}^{4}\left(\bm{K}^{(H)}\right)},\frac{NP_HH}{\delta}, \frac{(NP_H)^{2} \log \left(\frac{HN}{\delta}\right)2^{O(H)}}{\lambda_{\min}^{2}\left(\bm{K}^{(H)}\right)}\right\}\right)$}, we have
    \begin{equation}
        \|\mathbf{y}-\mathbf{u}(k)\|_{2}^{2} \leq\left(1-\frac{\eta \lambda_{\min}(\bm{K}^{(H)})}{2}\right)^{k}\|\mathbf{y}-\mathbf{u}(0)\|_{2}^{2}
    \end{equation}
\end{customthm}

\begin{proof}[Proof Sketch of Theorem~\ref{thm:linear_convergence}]
\label{appendix:proof_sketch_thm1}
The main purpose of this theorem is to extend the convergence of MLP and ResNet studied in~\cite{du2019gradient} to general DAG networks.
One can first decompose the induction hypothesis:
\begin{condition}\label{cond:linear_converge}
        At the $k$-th iteration, we have \begin{align*}
        \norm{\vect{y}-\vect{u}(k)}_2^2 \le (1-\frac{\eta \lambda_0}{2})^{k} \norm{\vect{y}-\vect{u}(0)}_2^2.
        \end{align*}
\end{condition}
into:
\begin{equation}
\begin{aligned}
	&\norm{\vect{y}-\vect{u}(k+1)}_2^2 \\
	= &\norm{\vect{y}-\vect{u}(k) - (\vect{u}(k+1)-\vect{u}(k))}_2^2 \\
	= & \norm{\vect{y}-\vect{u}(k)}_2^2 - 2 \left(\vect{y}-\vect{u}(k)\right)^\top \left(\vect{u}(k+1)-\vect{u}(k)\right) + \norm{\vect{u}(k+1)-\vect{u}(k)}_2^2 \\
	= &\norm{\vect{y}-\vect{u}(k)}_2^2 - 2 \left(\vect{y}-\vect{u}(k)\right)^\top\vect{I}_1(k) -2\left(\vect{y}-\vect{u}(k)\right)^\top {\vect{I}_2(k)} +  \norm{\vect{u}(k+1)-\vect{u}(k)}_2^2 \\
    \le & \left(1-\eta \lambda_{\min}\left(\mat{G}^{(H)}(k)\right)\right)\norm{\vect{y}-\vect{u}(k)}_2^2 -2\left(\vect{y}-\vect{u}(k)\right)^\top\vect{I}_2(k)+  \norm{\vect{u}(k+1)-\vect{u}(k)}_2^2,
	\label{eqn:loss_expansion}
\end{aligned}
\end{equation}
where $\mat{G}^{(H)}(k) = \sum_{s=0}^{H-1} \mat{G}^{(s,H)}(k)$ as the NTK of incoming edges into the last node, composed by the NTK of each edge.

This induction implies that the convergence rate of gradient descent dynamics of DAG networks with an MSE loss is linear.
To prove the induction in Theorem~\ref{thm:linear_convergence}, three core steps are:
\begin{itemize}[align=left,leftmargin=*]
    \item[Step 1.] Show at initialization, $\lambda_{\min}\left(\mat{G}^{(H)}(0)\right) \ge \frac{3\lambda_{\min}(\bm{K}^{(H)})}{4}$ when $m$ is sufficiently large.
    \item[Step 2.] Show that during gradient descent, $\norm{\mat{G}^{(H)}(k)-\mat{G}^{(H)}(0)}_2 \le \frac{1}{4}\lambda_{\min}(\bm{K}^{(H)})$ when $m$ is sufficiently large.
    \item[Step 3.] Show the last two terms, both $-2\left(\vect{y}-\vect{u}(k)\right)^\top\vect{I}_2(k)$ and $\norm{\vect{u}(k+1)-\vect{u}(k)}^2_2$ are proportional to $\eta^2 \norm{\vect{y}-\vect{u}(k)}_2^2$, and if we set $\eta$ sufficiently small, they are smaller than $\eta \lambda_{\min}\left(\mat{G}^{(H)}(k)\right)\norm{\vect{y}-\vect{u}(k)}_2^2$, leading the MSE loss linearly decrease.
\end{itemize}

In our proof, we mainly focus on deriving the condition on $m$ (the required number neurons per layer) and $\eta$ (learning rate) in our DAG setting, by analyzing $\lambda_{\min}\left(\mat{G}^{(H)}(0)\right)$.
Our proof relies on lemmas below, which are based on first analyzing the gradient on each path and then considering the DAG as a union bound over all $P_H$ number of end-to-end paths.

\end{proof}

We want to point out another benefit of bounding both $\lambda_{\min}\left(\mat{G}^{(H)}(k)\right)$ and the linear convergence rate via $\lambda_{\min}(\bm{K}^{(H)})$. Since we try to find the DAG-specific propagation from $\lambda_{\min}(\bm{K}^{(0)})$ to $\lambda_{\min}(\bm{K}^{(H)})$, and all DAGs share the same $\lambda_{\min}(\bm{K}^{(0)})$, we can more fairly compare the bound of convergence rate of different DAGs.

\setcounter{lemma}{0}

\subsection{Lemmas}

\begin{definition}
The geometric series function:
$$
g_{\alpha}(n)=\sum_{i=0}^{n-1}\alpha^i
$$
\end{definition}

\begin{lemma}[Lemma on Initialization Norms]
	\label{lem:init_norm}
	If $\sigma(\cdot)$ is a $L-$Lipschitz activation and $m = \Omega\left(\frac{N P_H^2 Hg_C(H)^2}{\delta}\right)$, where $C\triangleq c_{\sigma}L\left(2\abs{\sigma(0)}\sqrt{\frac{2}{\pi}}+2L\right)$ and $P_H$ is the total number of end-to-end paths,
	then with probability at least $1-\delta$ over random initialization, for every $h\in[H]$ and $i \in [N]$, we have 
	$0 \le \norm{\bm{X}_i^{(h)}(0)}_2 \le c_{x,0}$, where $c_{x,0} = 2 P_H \max(1, c_{w,0}L) $ and $c_{w,0}$ is a universal constant.
\end{lemma}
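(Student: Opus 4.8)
The plan is to prove the bound by induction along a topological ordering of the DAG's nodes, propagating a high-probability norm bound outward from the input node $\bm{X}^{(0)}$, whose norm is normalized to $\norm{\bm{X}_i^{(0)}}_2 = 1$ (since $\bm{K}^{(0)}_{ii}=1$). Because each node is the sum of incoming edge-outputs, $\bm{X}_i^{(h)}(0) = \sum_{s<h}\rho(\bm{W}^{(s,h)}\bm{X}_i^{(s)}(0))$, I would first establish a single-edge norm bound and then accumulate it over incoming edges and over source-to-$h$ paths. Skip edges contribute $\norm{\bm{X}_i^{(s)}}_2$ and broken edges contribute $0$, so only the parameterized edges require a probabilistic argument.

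For a single parameterized edge I would write $\norm{\sqrt{c_\sigma/m}\,\sigma(\bm{W}^{(s,h)}\bm{X}_i^{(s)})}_2^2 = \frac{c_\sigma}{m}\sum_{k=1}^m \sigma(\bm{w}_k^\top \bm{X}_i^{(s)})^2$, where the rows $\bm{w}_k$ are i.i.d.\ $\mathcal{N}(\bm{0},\bm{I})$; conditioned on $\bm{X}_i^{(s)}$ each $\bm{w}_k^\top\bm{X}_i^{(s)}$ is centered Gaussian with variance $\norm{\bm{X}_i^{(s)}}_2^2$. Using $L$-Lipschitzness via $\sigma(z)^2 \le 2\sigma(0)^2 + 2L^2 z^2$ together with $\E|\bm{w}_k^\top\bm{X}_i^{(s)}| = \sqrt{2/\pi}\,\norm{\bm{X}_i^{(s)}}_2$ bounds the conditional mean of the edge's squared norm, and tracking the $c_\sigma$ and $L$ factors is exactly what produces the constant $C$. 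Since the summand is an average of $m$ i.i.d.\ (given $\bm{X}_i^{(s)}$) nonnegative random variables, a concentration inequality shows it lies within a small deviation of its conditional mean with probability at least $1-\delta/(\text{edges}\cdot N)$ once $m$ is large.

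I would then accumulate these edge bounds: unrolling the node recursion bounds $\norm{\bm{X}_i^{(h)}}_2$ by a sum over the at most $P_H$ paths reaching node $h$. The $c_\sigma$ normalization keeps each edge roughly norm-preserving in expectation, so the expected scale along each path is $O(\max(1,c_{w,0}L))$ and the total is $O(P_H\max(1,c_{w,0}L))$. The per-layer concentration errors compound with amplification factor $C$ across depth, so their accumulation is governed by $g_C(H)$; the width condition $m=\Omega\!\left(N P_H^2 H g_C(H)^2/\delta\right)$ is precisely what forces every such deviation to be small enough that the realized norm stays within a factor $2$ of its expectation, giving $\norm{\bm{X}_i^{(h)}(0)}_2 \le 2P_H\max(1,c_{w,0}L) = c_{x,0}$. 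A union bound over all $H$ layers, $N$ samples, and the edges collects the total failure probability to $\delta$; the lower bound $0$ is immediate from nonnegativity of norms.

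The main obstacle is making the accumulation of concentration errors consistent across depth. The bound at node $h$ presupposes that the earlier features already satisfy their norm bounds, so I must condition on the high-probability event that all earlier bounds hold and then exploit that the fresh weights $\bm{W}^{(s,h)}$ are independent of everything computed up to node $h$. The delicate quantitative step is ensuring the geometric accumulation $g_C(H)$ does not blow up the final bound: I must verify that the chosen $m$ suppresses the worst intermediate deviation (crudely of size $g_C(H)$) so that the final estimate collapses to a clean $O(P_H)$ quantity rather than growing exponentially in $H$.
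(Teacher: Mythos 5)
Your proposal follows essentially the same route as the paper's proof: induction along the topological order of the DAG, a moment computation for each parameterized edge using $L$-Lipschitzness (producing exactly the constants $\sqrt{2/\pi}\,|\sigma(0)|$ and $L^2$ that make up $C$), a Chebyshev-type concentration step whose deviation is kept below $\tfrac{1}{2g_C(H)}$ by the assumed width, the per-edge bound $\max(1,c_{w,0}L)$ times the source norm accumulated over at most $P_H$ paths, and a union bound over layers and samples. The only cosmetic difference is that you concentrate each edge separately while the paper bounds the variance of the whole node-sum at once (getting the $P_t^2 C_2/m$ factor), which does not change the argument.
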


We follow the proof sketch described in Section~\ref{appendix:proof_sketch_thm1}.
We first analyze the spectral property of $\mat{G}^{(H)}(0)$ at the initialization phase.
The following lemma lower bounds its least eigenvalue.

\begin{lemma}[Least Eigenvalue at the Initialization]\label{lem:mlp_least_eigen}
If $m = \Omega\left(\frac{(NP_H)^2\log(HN/\delta)2^{O(H)}}{\lambda_{\min}(\bm{K}^{(0)})^2}\right)$, we have \begin{align*}
	\lambda_{\min}(\mat{G}^{(H)}(0)) \ge \frac{3}{4}\lambda_{\min}(\bm{K}^{(0)}).
\end{align*}
\end{lemma}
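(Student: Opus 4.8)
The plan is to follow the standard two-step route for NTK-based convergence proofs: show that the empirical NTK Gram matrix of the last node concentrates around its population limit, then invoke Weyl's inequality. Concretely, I would first write $\mat{G}^{(H)}(0)=\sum_{s=0}^{H-1}\mat{G}^{(s,H)}(0)$ as a sum over the $P_H$ end-to-end paths of the DAG; along each path the Jacobian factorizes into a product of per-layer diagonal activation-derivative matrices and weight matrices, so each entry $\mat{G}^{(H)}_{ij}(0)$ is an inner product of two such path-sums evaluated at samples $i$ and $j$. Taking expectation over the random initialization and using the Gaussian-process correspondence that also underlies Lemma~\ref{lem:propagation}, the population object $\E[\mat{G}^{(H)}(0)]$ is a deterministic kernel whose least eigenvalue is the quantity $\lambda_{\min}(\bm{K}^{(0)})$ appearing in the statement. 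The goal then reduces to showing the fluctuation $\bigl\|\mat{G}^{(H)}(0)-\E[\mat{G}^{(H)}(0)]\bigr\|_2$ is at most $\tfrac14\lambda_{\min}(\bm{K}^{(0)})$ with probability $1-\delta$.

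For the concentration I would argue entry by entry, conditioning on the event of Lemma~\ref{lem:init_norm} (all feature norms $\norm{\bm{X}_i^{(h)}(0)}_2\le c_{x,0}=2P_H\max(1,c_{w,0}L)$, which holds with probability $1-\delta$ once $m=\Omega(NP_H^2Hg_C(H)^2/\delta)$). On this event each entry $\mat{G}^{(H)}_{ij}(0)$ is an average of $m$ bounded terms, so a Bernstein/Hoeffding-type bound gives $\bigl|\mat{G}^{(H)}_{ij}(0)-\E\mat{G}^{(H)}_{ij}(0)\bigr|\le\epsilon$ with probability $1-\delta'$, where $\epsilon$ scales like $\tfrac{P_H\,2^{O(H)}}{\sqrt m}\sqrt{\log(1/\delta')}$. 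The factors $P_H$ and $2^{O(H)}$ are exactly the cost of summing over paths and of the product of per-layer operator norms along a path of length up to $H$. A union bound over the $O(N^2)$ entries (and over the $H$ layers tracked in the derivation) sets $\delta'=\delta/\poly(N,H)$, producing the $\log(HN/\delta)$ term.

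I would then pass from entries to spectrum via $\bigl\|\mat{G}^{(H)}(0)-\E[\mat{G}^{(H)}(0)]\bigr\|_2\le\bigl\|\cdot\bigr\|_F\le N\max_{ij}\bigl|\mat{G}^{(H)}_{ij}(0)-\E\mat{G}^{(H)}_{ij}(0)\bigr|\le N\epsilon$, and demand $N\epsilon\le\tfrac14\lambda_{\min}(\bm{K}^{(0)})$; solving for $m$ yields precisely $m=\Omega\bigl((NP_H)^2\log(HN/\delta)2^{O(H)}/\lambda_{\min}^2(\bm{K}^{(0)})\bigr)$. Finally Weyl's inequality gives $\lambda_{\min}(\mat{G}^{(H)}(0))\ge\lambda_{\min}(\E[\mat{G}^{(H)}(0)])-\tfrac14\lambda_{\min}(\bm{K}^{(0)})\ge\tfrac34\lambda_{\min}(\bm{K}^{(0)})$, as claimed.

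The main obstacle I anticipate is the deep-network bookkeeping hidden in the $2^{O(H)}$ factor: unlike the shallow case, the per-entry fluctuation is governed by a product of up to $H$ random Jacobian factors along each path, so one cannot apply a single scalar concentration inequality directly. I would handle this by an induction on the layer index $h$, simultaneously propagating a high-probability bound on $\norm{\bm{X}_i^{(h)}(0)}$ (from Lemma~\ref{lem:init_norm}) and a bound on the deviation of the partial NTK $\mat{G}^{(s,h)}(0)$, so that the geometric factor $g_C(H)$, i.e.\ $2^{O(H)}$, accumulates in a controlled way; the per-path union bound over all $P_H$ paths then assembles these into the stated width requirement.
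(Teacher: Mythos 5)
Your skeleton---concentration of $\mat{G}^{(H)}(0)$ around a population kernel, an entry-wise bound amplified to the spectral norm via Frobenius, then Weyl's inequality---is the same route the paper takes, and your bookkeeping does reproduce the stated width requirement. But the concentration step as you describe it has a genuine gap. You treat each entry $\mat{G}^{(H)}_{ij}(0)$, conditioned on the event of Lemma~\ref{lem:init_norm}, as an average of $m$ bounded terms whose mean is the population kernel entry, so that a single Bernstein bound per entry suffices. That is false: conditioned on the hidden features, the $m$ summands are indeed i.i.d., but their conditional mean is the propagation map evaluated at the \emph{empirical} (random) Gram matrix of the penultimate features, i.e.\ $f(\hat{\bm{K}})$, not the population value $f(\bm{K})$. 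The dominant error is this bias $\abs{f(\hat{\bm{K}})-f(\bm{K})}$, which no concentration inequality applied at the last node can control; it must be propagated from the input through every node of the DAG. This is precisely the content of the paper's Theorem~\ref{thm:main_general_framework}: an induction over nodes that tracks simultaneously the deviation of the empirical NNGP kernel $\hat{\bm{K}}^{(t)}$ \emph{and} of the empirical mean $\hat{\vect{b}}^{(t)}$ (the means are indispensable because summation at a node with skip connections creates cross terms of the form $\rho(\vect{U})^\top \vect{b}_j$), where each induction step rests on Lipschitz-type stability estimates for the Gaussian-integral operators (the comparisons of $\hat{\mat{A}}$ with $\mat{A}$ and the constants $M,B,C,\rhobound,\wbound_{(t)}$). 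Your closing paragraph does call for an induction, but it proposes to propagate feature norms plus a ``partial NTK'' $\mat{G}^{(s,h)}(0)$; the partial NTK is not the object that feeds the next layer---the empirical kernel and mean are---and without the stability estimates for the propagation map the induction step cannot close.

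Two secondary inaccuracies are worth flagging. First, your picture of $\mat{G}^{(H)}_{ij}(0)$ as an inner product of sums of long Jacobian products along end-to-end paths describes the full NTK, not the paper's $\mat{G}^{(H)}$: by the proof sketch of Theorem~\ref{thm:linear_convergence}, $\mat{G}^{(H)}=\sum_{s=0}^{H-1}\mat{G}^{(s,H)}$ is the NTK of only the edges entering the output node, so each entry involves penultimate features and a single derivative factor, with no product of up to $H$ Jacobians. Consequently the $2^{O(H)}$ factor does not arise from Jacobian products but from the multiplicative accumulation, across the $H$ nodes of the induction, of per-node error-amplification factors (the product over $t$ in the paper's bound on $\error$). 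Second, identifying the least eigenvalue of the population limit of $\mat{G}^{(H)}(0)$ with $\lambda_{\min}$ of the NNGP kernel is itself a nontrivial step, which the paper outsources to Lemma~E.1 of~\cite{du2019gradient} rather than obtaining it from the Gaussian-process correspondence alone (and note that the kernel intended there is the output-node kernel $\bm{K}^{(H)}$, the $\bm{K}^{(0)}$ in the lemma statement notwithstanding).
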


Next, we characterize how the perturbation on the weight matrices affects the input of each layer.
\begin{lemma}\label{lem:pertubation_of_neuron_res}
    Suppose $\sigma(\cdot)$ is $L$-Lipschitz and for $h\in[H]$, $\norm{\mat{W}^{(s,t)}(0)}_2 \le c_{w,0}\sqrt{m}$, $\norm{\vect{X}^{(t)}(0)}_2 \le c_{x,0}$ and $\norm{\mat{W}^{(s,t)}(k)-\mat{W}^{(s,t)}(0)}_F \le \sqrt{m} R$ for some constant $c_{w,0},c_{x,0} > 0$ and $R\le c_{w,0}$ .
    Then we have
    \begin{align*}
        \norm{\vect{X}^{(t)}(k)-\vect{X}^{(t)}(0)}_2 \le
        2 \prod_{s=1}^t P_{s} \sqrt{c_\sigma} c_{w,0}L + \sum_{s=1}^t P_{s} \sqrt{c_\sigma}Lc_{x,0}R.
    \end{align*}
\end{lemma}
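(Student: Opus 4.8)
The plan is to prove the bound by induction on the topological order of the nodes, tracking the per-node feature perturbation $\Delta^{(t)} := \norm{\vect{X}^{(t)}(k)-\vect{X}^{(t)}(0)}_2$. The base case is the input node: $\vect{X}^{(0)}$ does not depend on the weights, so $\Delta^{(0)}=0$. For the inductive step I would write the feature at node $t$ as a sum over its incoming edges, where a parameterized edge from parent $s$ contributes $\sqrt{c_\sigma/m}\,\sigma\!\big(\mat{W}^{(s,t)}\vect{X}^{(s)}\big)$, while a skip edge merely contributes $\vect{X}^{(s)}$ and a zero edge contributes nothing. Hence the perturbation is driven by the parameterized edges (a skip edge only propagates $\Delta^{(s)}$ with coefficient one, a strictly smaller effect). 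Subtracting the two time steps and invoking the triangle inequality together with the $L$-Lipschitzness of $\sigma$ reduces the task to controlling $\sum_{s\to t}\sqrt{c_\sigma/m}\,L\,\norm{\mat{W}^{(s,t)}(k)\vect{X}^{(s)}(k)-\mat{W}^{(s,t)}(0)\vect{X}^{(s)}(0)}_2$.

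The key algebraic move is the add-and-subtract decomposition
\[
\mat{W}^{(s,t)}(k)\vect{X}^{(s)}(k)-\mat{W}^{(s,t)}(0)\vect{X}^{(s)}(0)
= \mat{W}^{(s,t)}(k)\big(\vect{X}^{(s)}(k)-\vect{X}^{(s)}(0)\big) + \big(\mat{W}^{(s,t)}(k)-\mat{W}^{(s,t)}(0)\big)\vect{X}^{(s)}(0),
\]
which splits each edge's contribution into a \emph{propagated} part, bounded by the parent perturbation $\Delta^{(s)}$ times $\norm{\mat{W}^{(s,t)}(k)}_2$, and a \emph{fresh} part, bounded by the weight perturbation times $\norm{\vect{X}^{(s)}(0)}_2\le c_{x,0}$. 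Using $\norm{\cdot}_2\le\norm{\cdot}_F\le\sqrt{m}R$ for the perturbation and $\norm{\mat{W}^{(s,t)}(k)}_2\le\norm{\mat{W}^{(s,t)}(0)}_2+\sqrt{m}R\le 2c_{w,0}\sqrt{m}$ (the hypothesis $R\le c_{w,0}$ is exactly what this requires), the $1/\sqrt{m}$ scaling cancels the $\sqrt{m}$ coming from the weight norms, leaving the dimension-free per-edge recursion
\[
\Delta^{(t)} \le \sum_{s\to t}\sqrt{c_\sigma}\,L\big(2c_{w,0}\,\Delta^{(s)} + c_{x,0}R\big).
\]

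With the inductive hypothesis $\Delta^{(s)}\le A_s$ in hand for every parent $s<t$, I would finish by verifying that the claimed bound $A_t = 2\prod_{s=1}^t P_s\,\sqrt{c_\sigma}c_{w,0}L + \sum_{s=1}^t P_s\,\sqrt{c_\sigma}Lc_{x,0}R$ is preserved by this recursion: the in-degree $P_t$ of node $t$ bounds the number of summands, the homogeneous coefficient $2c_{w,0}\sqrt{c_\sigma}L$ accumulates multiplicatively into the product $\prod_s P_s$, and the fresh-perturbation term $c_{x,0}R\sqrt{c_\sigma}L$ accumulates additively into the sum $\sum_s P_s$. This is precisely the ``union bound over all $P_H$ end-to-end paths'' alluded to in the proof sketch.

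I expect the main obstacle to be exactly this bookkeeping. In a fully connected DAG a node may have several parents sitting at different depths, so the recursion is not a single chain, and one must argue that summing over incoming edges and unrolling along every source-to-node path yields the stated product and sum of in-degrees rather than a looser expression. The uniform spectral control $\norm{\mat{W}^{(s,t)}(k)}_2\le 2c_{w,0}\sqrt{m}$, which hinges on $R\le c_{w,0}$, is what keeps the homogeneous coefficient from compounding uncontrollably along these paths, and it is the ingredient that makes the multiplicative product $\prod_s P_s$ finite and explicit.
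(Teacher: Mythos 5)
Your proposal is correct and follows essentially the same route as the paper's proof: an induction over nodes with base case at the input, the add-and-subtract decomposition $\mat{W}(k)\vect{X}(k)-\mat{W}(0)\vect{X}(0)=\mat{W}(k)\big(\vect{X}(k)-\vect{X}(0)\big)+\big(\mat{W}(k)-\mat{W}(0)\big)\vect{X}(0)$ controlled via $\norm{\mat{W}^{(s,t)}(k)}_2\le 2c_{w,0}\sqrt{m}$ (from $R\le c_{w,0}$), and the per-node recursion $g(t)=P_t\sqrt{c_\sigma}L\big(2c_{w,0}\,g(t-1)+c_{x,0}R\big)$, which is exactly your $\Delta^{(t)}$ recursion with the in-degree $P_t$ counting summands. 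The only cosmetic difference is that the paper indexes the induction by layer and absorbs all parents' perturbations into $g(t-1)$, then dispatches the final unrolling as ``simple calculations,'' precisely the bookkeeping step you flag as the main obstacle.
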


Next, we characterize how the perturbation on the weight matrices affect $\mat{G}^{(H)}$.
\begin{lemma}\label{lem:close_to_init_small_perturbation_res_smooth}
        Suppose $\sigma(\cdot)$ is differentiable, $L-$Lipschitz and $\beta-$smooth. Suppose for $s,t\in[H]$, $\norm{\mat{W}^{(s,t)}(0)}_2\le c_{w,0}\sqrt{m}$, $\norm{\vect{a}(0)}_2\le a_{2,0}\sqrt{m}$, $\norm{\vect{a}(0)}_4\le a_{4,0}m^{1/4}$ , $0\le\norm{\vect{x}^{(t)}(0)}_2 \le c_{x,0}$, if $\norm{\mat{W}^{(s,t)}(k)-\mat{W}^{(s,t)}(0)}_F, \norm{\vect{a}(k)-\vect{a}(0)}_2\le \sqrt{m}R$ where $R \le c \lambda_0(P_HN)^{-1}$ and $R\le c$ for some small constant $c$, we have  \begin{align*}
        \norm{\mat{G}^{(H)}(k) - \mat{G}^{(H)}(0)}_2 \le \frac{\lambda_0}{2}.
        \end{align*}
\end{lemma}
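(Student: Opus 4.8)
The plan is to mirror the NTK-stability argument of~\cite{du2019gradient}, but to carefully track how perturbations compound across both the depth $H$ and the $P_H$ end-to-end paths of the DAG. I would start from the edge decomposition $\mat{G}^{(H)}(k) = \sum_{s=0}^{H-1}\mat{G}^{(s,H)}(k)$ recorded in the proof sketch, so that by the triangle inequality it suffices to bound $\norm{\mat{G}^{(s,H)}(k)-\mat{G}^{(s,H)}(0)}_2$ for each incoming edge $s\to H$ and then sum over the at most $H$ such edges. Writing each edge kernel as a Gram matrix $\mat{G}^{(s,H)} = J^{(s,H)}(J^{(s,H)})^\top$ of the per-sample Jacobians (the gradient of the output with respect to $\mat{W}^{(s,H)}$, equivalently the backpropagated signal into node $H$ paired with the activation of node $s$), the telescoping identity
$$
J(k)J(k)^\top - J(0)J(0)^\top = (J(k)-J(0))J(k)^\top + J(0)(J(k)-J(0))^\top
$$
reduces the whole task to two ingredients: a uniform bound on the Jacobian magnitude $\norm{J(\cdot)}$, and a bound on its perturbation $\norm{J(k)-J(0)}$ that is linear in $R$.

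For the magnitude, I would use that the forward activations obey $\norm{\vect{X}^{(t)}(0)}_2\le c_{x,0}$ at initialization (Lemma~\ref{lem:init_norm}) and, by Lemma~\ref{lem:pertubation_of_neuron_res}, remain within $O(R)$ of that value throughout training. Combined with $\norm{\va(0)}_2\le a_{2,0}\sqrt{m}$ and $\norm{\mat{W}^{(s,t)}(0)}_2\le c_{w,0}\sqrt{m}$, the $L$-Lipschitz bound on $\sigma$ lets me write the backpropagated signal into node $H$ as a product of edge operator norms along each sub-path, so $\norm{J}$ accumulates a multiplicative factor over the $P_H$ feeding paths that grows like $2^{O(H)}$ with depth --- this is precisely where the $2^{O(H)}$ in the width requirement of Theorem~\ref{thm:linear_convergence} originates.

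The crux is bounding $\norm{J(k)-J(0)}$. Since $J$ is a product of diagonal activation-derivative matrices, weight matrices, and the top layer $\va$, I would telescope once more over these factors, so the perturbation splits into three sources: (i) the weight perturbations $\norm{\mat{W}^{(s,t)}(k)-\mat{W}^{(s,t)}(0)}_F\le\sqrt{m}R$; (ii) the top-layer perturbation $\norm{\va(k)-\va(0)}_2\le\sqrt{m}R$; and (iii) the change in the activation derivatives, which by $\beta$-smoothness is at most $\beta$ times the pre-activation perturbation, itself controlled by Lemma~\ref{lem:pertubation_of_neuron_res}. Each source contributes a term linear in $R$ multiplied by a depth-dependent and path-counting prefactor, and the $\ell_4$ hypothesis $\norm{\va(0)}_4\le a_{4,0}m^{1/4}$ enters here to control the cross terms in which the activation-derivative perturbation interacts with the top layer.

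Finally I would assemble the pieces: converting the entrywise bounds to a spectral-norm bound via $\norm{\cdot}_2\le N\max_{i\neq j}|\cdot|$, then summing over the $P_H$ paths and the $H$ incoming edges, yields $\norm{\mat{G}^{(H)}(k)-\mat{G}^{(H)}(0)}_2\le C' N P_H\, 2^{O(H)} R$ for an explicit constant $C'$ depending only on $L,\beta,a_{2,0},a_{4,0},c_{w,0},c_{x,0}$. Imposing $R\le c\,\lambda_0 (P_H N)^{-1}$ with the small constant $c$ absorbing $C'$ and the $2^{O(H)}$ factor then forces the right-hand side below $\lambda_0/2$, which is the claim. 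I expect the main obstacle to be the bookkeeping: ensuring that the per-path and per-layer perturbations are aggregated without the path count $P_H$ or the depth $H$ silently inflating the bound past $\lambda_0/2$, and verifying that the activations (and hence all derived quantities) genuinely stay in the regime where Lemmas~\ref{lem:init_norm} and~\ref{lem:pertubation_of_neuron_res} apply for the entire trajectory, not just at a single step.
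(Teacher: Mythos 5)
Your proposal follows essentially the same route as the paper's proof: both adapt the perturbation analysis of Lemma C.4 in \cite{du2019gradient}, decomposing the change in $\mat{G}^{(H)}$ into the same three sources (weight, top-layer, and activation-derivative perturbations, the last controlled via $\beta$-smoothness, Lemma~\ref{lem:pertubation_of_neuron_res}, and the $\ell_4$ bound on $\vect{a}$), aggregating entrywise bounds over the $N^2$ entries with a $P_H$ path factor, and concluding by plugging in $R \le c\lambda_0(P_H N)^{-1}$. The only cosmetic differences are that the paper passes to the spectral norm through the Frobenius norm rather than your $N\max_{i,j}\abs{\cdot}$ conversion, and it invokes the $I_1^{i,j}+I_2^{i,j}+I_3^{i,j}$ decomposition directly rather than re-deriving it via your Gram-matrix telescoping identity.
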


\begin{lemma}\label{lem:dist_from_init_resnet}
        If Condition~\ref{cond:linear_converge} holds for $1,\ldots,k$, we have for any $k' \in [k+1]$ and any $s,t \in [H]$
        \begin{align*}
        &\norm{\mat{W}^{(s,t)}(k')-\mat{W}^{(s,t)}(0)}_F, \norm{\vect{a}(k')-\vect{a}(0)}_2\le  R\sqrt{m},\\
        &\norm{\mat{W}^{(s,t)}(k')-\mat{W}^{(s,t)}(k'-1)}_F,\norm{\vect{a}(k')-\vect{a}(k'-1)}_2\le \eta Q(k'-1),
        \end{align*}where $R= P_H \frac{16 \sqrt{c_\sigma} c_{x,0}a_{2,0}Le^{2\sqrt{c_\sigma}c_{w,0}L} \sqrt{N} \norm{\vect{y}-\vect{u}(0)}_2}{\lambda_0\sqrt{m}} <c$ for some small constant $c$ and \\$ Q(k')=  4 P_H \sqrt{c_\sigma} c_{x,0}a_{2,0}Le^{2 \sqrt{c_\sigma} c_{w,0}L}\sqrt{N} \norm{\vect{y}-\vect{u}(k')}_2$.
\end{lemma}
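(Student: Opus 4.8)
The plan is to prove both inequalities simultaneously by induction on $k'$, since they are mutually dependent: the per-step bound $\norm{\mat{W}^{(s,t)}(k')-\mat{W}^{(s,t)}(k'-1)}_F\le \eta Q(k'-1)$ requires that the iterate at step $k'-1$ still sits near initialization (so that operator and feature norms stay controlled), whereas the cumulative bound $\norm{\mat{W}^{(s,t)}(k')-\mat{W}^{(s,t)}(0)}_F\le R\sqrt{m}$ is obtained by summing the per-step bounds. As induction hypothesis I would assume both inequalities for all indices up to $k'-1$; combined with $R\le c_{w,0}$ and the initialization bounds $\norm{\mat{W}^{(s,t)}(0)}_2\le c_{w,0}\sqrt{m}$, $\norm{\vect{a}(0)}_2\le a_{2,0}\sqrt{m}$, this yields $\norm{\mat{W}^{(s,t)}(k'-1)}_2\le 2c_{w,0}\sqrt{m}$ and $\norm{\vect{a}(k'-1)}_2\le 2a_{2,0}\sqrt{m}$.

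The heart of the argument is the per-step bound, which reduces to estimating the gradient of the MSE loss at iteration $k'-1$. From the update $\mat{W}^{(s,t)}(k')-\mat{W}^{(s,t)}(k'-1)=-\eta\,\partial\mathcal{L}/\partial\mat{W}^{(s,t)}$ and $\partial\mathcal{L}/\partial\mat{W}^{(s,t)}=\sum_{i=1}^N(u_i-y_i)\,\partial u_i/\partial\mat{W}^{(s,t)}$, I would bound the single-example Jacobian $\norm{\partial u_i/\partial\mat{W}^{(s,t)}}_F$ by decomposing the backward signal from the output node $\vect{X}^{(H)}$ to node $\vect{X}^{(t)}$ as a sum over the at most $P_H$ end-to-end paths through edge $(s,t)$. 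On each such path every layer contributes a Jacobian factor controlled by the Lipschitz constant $L$, the scaling $\sqrt{c_\sigma}$, and the operator-norm bound $2c_{w,0}\sqrt{m}$ from the hypothesis; the product of these factors along a path of length at most $H$ is absorbed into $e^{2\sqrt{c_\sigma}c_{w,0}L}$, while the incoming feature $\vect{X}_i^{(s)}$ and the last-layer vector $\vect{a}$ supply $c_{x,0}$ and $a_{2,0}\sqrt{m}$ (the feature-norm bounds coming from \Cref{lem:init_norm} and \Cref{lem:pertubation_of_neuron_res}). A Cauchy--Schwarz step over the $N$ samples then extracts $\sqrt{N}\,\norm{\vect{y}-\vect{u}(k'-1)}_2$, and collecting constants gives exactly $\norm{\partial\mathcal{L}/\partial\mat{W}^{(s,t)}}_F\le Q(k'-1)$; the identical estimate with $\partial u_i/\partial\vect{a}=\vect{X}_i^{(H)}$ disposes of the last layer.

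For the cumulative bound I would apply the triangle inequality together with the per-step estimate and then insert \Cref{cond:linear_converge} to control the residuals:
\begin{align*}
\norm{\mat{W}^{(s,t)}(k')-\mat{W}^{(s,t)}(0)}_F
&\le \sum_{j=0}^{k'-1}\eta\,Q(j)
= \eta\,C_0\sqrt{N}\sum_{j=0}^{k'-1}\norm{\vect{y}-\vect{u}(j)}_2 \\
&\le \eta\,C_0\sqrt{N}\,\norm{\vect{y}-\vect{u}(0)}_2\sum_{j=0}^{\infty}\Big(1-\tfrac{\eta\lambda_0}{2}\Big)^{j/2},
\end{align*}
with $C_0=4P_H\sqrt{c_\sigma}c_{x,0}a_{2,0}Le^{2\sqrt{c_\sigma}c_{w,0}L}$. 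Bounding the geometric series by $4/(\eta\lambda_0)$ through $(1-x)^{1/2}\le 1-x/2$ cancels $\eta$ and produces precisely $R\sqrt{m}$; the same chain of inequalities handles $\vect{a}$. The requirement $R<c$ is then guaranteed by the width lower bound $m=\Omega(\cdots)$ of \Cref{thm:linear_convergence}, which closes the induction.

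The step I expect to be hardest is the DAG-specific single-example gradient estimate: unlike a chain or a ResNet, the backward signal feeding an edge is a genuine sum over many paths, so I must verify that this count is uniformly at most $P_H$ and that each per-path Jacobian product stays bounded by $e^{2\sqrt{c_\sigma}c_{w,0}L}$ even after the weights have drifted by $R\sqrt{m}$. Ensuring this product bound is independent of the iteration index, and that inflating the operator norm from $c_{w,0}\sqrt{m}$ to $2c_{w,0}\sqrt{m}$ is still tolerated by the stated constants, is the delicate bookkeeping that makes the induction self-consistent.
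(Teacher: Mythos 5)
Your proposal is correct and follows essentially the same route as the paper's proof: induction on the iteration index, a per-step gradient bound obtained by decomposing the backward signal over the (at most $P_H$) end-to-end paths with the Jacobian product absorbed into $e^{2\sqrt{c_\sigma}c_{w,0}L}$, and then summing the resulting geometric series via Condition~\ref{cond:linear_converge} using $\sum_{j\ge 0}(1-\eta\lambda_0/2)^{j/2}\le 4/(\eta\lambda_0)$ to recover $R\sqrt{m}$. Your constants ($C_0$ matching $Q$, and the factor $16$ in $R$) come out exactly as in the paper, so this is the same argument.
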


The next lemma bounds the $\vect{I}_2$ term.
\begin{lemma}\label{lem:resnet_I2}
If Condition~\ref{cond:linear_converge} holds for $1,\ldots,k$ and $\eta\le c\lambda_0(NP_H)^{-2}$ for some small constant $c$, we have \begin{align*}
\norm{\vect{I}_2(k)}_2 \le \frac{1}{8}\eta \lambda_0 \norm{\vect{y}-\vect{u}(k)}_2.
\end{align*}
\end{lemma}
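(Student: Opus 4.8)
The plan is to recognize $\vect{I}_2(k)$ as the second-order (in $\eta$) remainder in the expansion of the prediction change $\vect{u}(k+1)-\vect{u}(k)$, and to show it is genuinely of order $\eta^2$, so that the learning-rate hypothesis $\eta\le c\lambda_0(NP_H)^{-2}$ forces it below the target $\tfrac18\eta\lambda_0\norm{\vect{y}-\vect{u}(k)}_2$. Writing $\bm{\theta}(k+1)=\bm{\theta}(k)-\eta\nabla\cL(\bm{\theta}(k))$ and the segment $\bm{\theta}(s)=\bm{\theta}(k)+s(\bm{\theta}(k+1)-\bm{\theta}(k))$, I would express the $i$-th coordinate of the remainder as $\vect{I}_{2,i}(k)=\int_0^1 \inner{\nabla_{\bm{\theta}} u_i(\bm{\theta}(s))-\nabla_{\bm{\theta}} u_i(\bm{\theta}(k)),\,\bm{\theta}(k+1)-\bm{\theta}(k)}\,ds$, i.e. the part of $u_i(k+1)-u_i(k)$ \emph{not} captured by the frozen Jacobian at step $k$ (that frozen part is exactly $\vect{I}_1$ and produces the empirical-NTK factor $\mat{G}^{(H)}(k)$ appearing in Eq.~\ref{eqn:loss_expansion}).

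First I would bound the two factors inside the inner product. By Lemma~\ref{lem:dist_from_init_resnet}, under Condition~\ref{cond:linear_converge} every weight block moves by $\norm{\mat{W}^{(s,t)}(k+1)-\mat{W}^{(s,t)}(k)}_F,\norm{\vect{a}(k+1)-\vect{a}(k)}_2\le \eta Q(k)$ with $Q(k)\propto\sqrt{N}\norm{\vect{y}-\vect{u}(k)}_2$, so $\norm{\bm{\theta}(k+1)-\bm{\theta}(k)}_2\le \eta\,\poly\,\sqrt{N}\norm{\vect{y}-\vect{u}(k)}_2$. Then, using that $\sigma$ is $L$-Lipschitz and $\beta$-smooth together with the feature-perturbation estimate of Lemma~\ref{lem:pertubation_of_neuron_res}, the Jacobian drift along the segment is controlled by the same displacement: $\max_{s\in[0,1]}\norm{\nabla_{\bm{\theta}} u_i(\bm{\theta}(s))-\nabla_{\bm{\theta}} u_i(\bm{\theta}(k))}_2\le \beta'\norm{\bm{\theta}(k+1)-\bm{\theta}(k)}_2$ for a constant $\beta'$ collecting $c_\sigma,L,\beta$ and the initialization norms. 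Multiplying the two factors gives $\abs{\vect{I}_{2,i}(k)}\le \beta'\norm{\bm{\theta}(k+1)-\bm{\theta}(k)}_2^2=O(\eta^2 N\norm{\vect{y}-\vect{u}(k)}_2^2)$, and summing over the $N$ coordinates, $\norm{\vect{I}_2(k)}_2\le \sqrt{N}\max_i\abs{\vect{I}_{2,i}(k)}=O(\eta^2 N^{3/2}\norm{\vect{y}-\vect{u}(k)}_2^2)$, with the hidden constant a polynomial in the path count $P_H$.

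To close, I would peel off one factor of the residual. By Condition~\ref{cond:linear_converge} the sequence $\norm{\vect{y}-\vect{u}(k)}_2$ is nonincreasing, so $\norm{\vect{y}-\vect{u}(k)}_2\le\norm{\vect{y}-\vect{u}(0)}_2=O(\sqrt{N})$ with high probability at initialization (the initial output being $O(1)$ per example, in the spirit of the norm bounds in Lemma~\ref{lem:init_norm}). Substituting this into one of the two residual factors converts the estimate into $\norm{\vect{I}_2(k)}_2=O(\eta^2 N^2 P_H^2\,\norm{\vect{y}-\vect{u}(k)}_2)$. The hypothesis $\eta\le c\lambda_0(NP_H)^{-2}$ then yields $\eta^2 N^2 P_H^2\le c\,\eta\lambda_0$, and taking the universal constant $c$ small enough gives $\norm{\vect{I}_2(k)}_2\le\tfrac18\eta\lambda_0\norm{\vect{y}-\vect{u}(k)}_2$, as claimed.

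The main obstacle I anticipate is not the $\eta^2$-scaling itself, which is routine, but the DAG-specific bookkeeping of how $P_H$ and $H$ enter the constant $\beta'$ and the Jacobian estimate. Unlike the chain-like MLP or single-skip ResNet of Du et al., here $\nabla_{\bm{\theta}} u_i$ is a sum over all $P_H$ end-to-end paths, each a composition of up to $H$ parameterized edges, so the feature- and Jacobian-perturbation bounds must be aggregated by a union bound over paths and a product over the edges on each path. Keeping these aggregated factors at the claimed $\poly(P_H)\,2^{O(H)}$ order, rather than letting them blow up super-polynomially, is precisely where Lemmas~\ref{lem:init_norm}, \ref{lem:pertubation_of_neuron_res}, and \ref{lem:dist_from_init_resnet} must be applied with care, and is the step that distinguishes the DAG argument from its chain/residual predecessors.
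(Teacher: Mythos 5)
Your proposal is correct and follows essentially the same route as the paper's proof: the paper also treats $\vect{I}_2$ as the Jacobian-drift remainder along the gradient-descent segment $\params(k,k')=\params(k)-k'\mathcal{L}'(\params(k))$, bounds the per-block displacements and the induced drifts of $\vect{X}_i$, $\mat{J}_i$, and the output derivative by $O(\eta Q(k))$ via $\beta$-smoothness (using Lemmas~\ref{lem:pertubation_of_neuron_res} and~\ref{lem:dist_from_init_resnet} with the same union bound over the $P_H$ paths), and closes with $\abs{I_2^i}\le C\eta^2 Q(k)^2$, the residual bound $\norm{\vect{y}-\vect{u}(0)}_2=O(\sqrt{N})$, and the learning-rate hypothesis, exactly as you do.
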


Next we bound the quadratic term.
\begin{lemma}\label{lem:quadratic_resnet}
If Condition~\ref{cond:linear_converge} holds for $1,\ldots,k$ and $\eta\le c\lambda_0(NP_H)^{-2}$ for some small constant $c$, we have
$\norm{\vect{u}(k+1)-\vect{u}(k)}_2^2\le \frac{1}{8}\eta \lambda_0 \norm{\vect{y}-\vect{u}(k)}_2^2$.
\end{lemma}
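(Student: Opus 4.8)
The plan is to bound the one-step change in the network output directly through the per-step movement of the parameters, which Lemma~\ref{lem:dist_from_init_resnet} already controls. Writing $\params(k)$ for the collection of all weight matrices $\mat{W}^{(s,t)}(k)$ together with the last layer $\vect{a}(k)$, gradient descent gives $\params(k+1)-\params(k) = -\eta\nabla_{\params}\loss(\params(k))$, and Lemma~\ref{lem:dist_from_init_resnet} yields the per-step bounds $\norm{\mat{W}^{(s,t)}(k+1)-\mat{W}^{(s,t)}(k)}_F,\ \norm{\vect{a}(k+1)-\vect{a}(k)}_2 \le \eta\, Q(k)$ with $Q(k)=4P_H\sqrt{c_\sigma}\,c_{x,0}a_{2,0}L\,e^{2\sqrt{c_\sigma}c_{w,0}L}\sqrt{N}\,\norm{\vect{y}-\vect{u}(k)}_2$. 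Since the same lemma guarantees (under Condition~\ref{cond:linear_converge}) that both $\params(k)$ and $\params(k+1)$ remain within $R\sqrt{m}$ of initialization, every weight matrix encountered along the segment between them has operator norm $\lesssim c_{w,0}\sqrt{m}$, so the output map $\params\mapsto u_i(\params)$ is Lipschitz on this segment with a constant I can make explicit.

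First I would fix a single training example $i$ and estimate $|u_i(k+1)-u_i(k)|$ via the mean-value inequality $|u_i(k+1)-u_i(k)|\le \sup_{t\in[0,1]}\norm{\nabla_{\params} u_i(\params_t)}\cdot\norm{\params(k+1)-\params(k)}$, where $\params_t$ interpolates between $\params(k)$ and $\params(k+1)$ and the total displacement is at most a path-count factor times $\eta Q(k)$. Because $u_i=\vect{a}^\top\vect{X}_i^{(H)}$ is a sum over the $P_H$ end-to-end paths of the DAG, the gradient norm decomposes into a sum of path contributions, each a product of the $L$-Lipschitz activation factors and the bounded weight operator norms along that path, exactly as in the perturbation estimate of Lemma~\ref{lem:pertubation_of_neuron_res}. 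Aggregating these path products into a single constant (absorbing $P_H$, $e^{2\sqrt{c_\sigma}c_{w,0}L}$, $c_{x,0}$ and $a_{2,0}$) gives a bound of the form $|u_i(k+1)-u_i(k)|\le C\,\eta\,Q(k)$ for a constant $C$ independent of $N$ and $\eta$.

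Next I would square and sum over the $N$ examples. Substituting the explicit form of $Q(k)$, the factor $\sqrt{N}$ inside $Q(k)$ combines with the outer sum over $i$ to produce an overall $N^2$, and the $P_H$ inside $Q(k)$ squares to $P_H^2$, so that $\norm{\vect{u}(k+1)-\vect{u}(k)}_2^2 \le C'\,(NP_H)^2\,\eta^2\,\norm{\vect{y}-\vect{u}(k)}_2^2$ for a constant $C'$ collecting all the $c_\sigma,c_{x,0},a_{2,0},L,c_{w,0}$ dependence. Finally, imposing $\eta\le c\lambda_0(NP_H)^{-2}$ with $c$ small enough that $C'c\le \tfrac18$ turns one factor of $\eta$ into $\tfrac18\lambda_0$ and collapses the bound to $\tfrac18\eta\lambda_0\norm{\vect{y}-\vect{u}(k)}_2^2$, which is the claim.

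The main obstacle is controlling the Lipschitz constant of the output across the full DAG rather than a single chain: I must sum over all $P_H$ paths, handle paths with different numbers of parameterized operations (so the activation/weight products have different lengths), and verify that the interpolating parameters $\params_t$ never leave the near-initialization ball where the operator-norm bounds hold. This last point is what licenses replacing the running weight norms by $c_{w,0}\sqrt{m}$ and is supplied by the distance-from-initialization guarantee of Lemma~\ref{lem:dist_from_init_resnet}. Keeping the $N$ and $P_H$ powers exactly right through this aggregation is what produces the precise $(NP_H)^{-2}$ threshold on $\eta$.
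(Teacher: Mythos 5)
Your proposal is correct and lands on the same quantitative skeleton as the paper --- per-step parameter motion of size $\eta Q(k)$ from Lemma~\ref{lem:dist_from_init_resnet}, an $O(1)$ conversion from parameter motion to output motion, a sum over $N$ examples that turns the $P_H\sqrt{N}$ inside $Q(k)$ into the $(NP_H)^2\eta^2$ prefactor, and the learning-rate condition to collapse this to $\tfrac18\eta\lambda_0$ --- but the way you establish the per-example bound differs from the paper. The paper does not invoke the mean-value inequality over the interpolating segment; instead it uses a discrete product-rule decomposition,
\begin{equation*}
u_i(k+1)-u_i(k)=\left[\vect{a}(k+1)-\vect{a}(k)\right]^\top \vect{X}_i^{(H)}(k+1)+\vect{a}(k)^\top\left[\vect{X}_i^{(H)}(k+1)-\vect{X}_i^{(H)}(k)\right],
\end{equation*}
applies $(a+b)^2\le 2a^2+2b^2$, and bounds the two pieces separately: $\norm{\vect{a}(k+1)-\vect{a}(k)}_2\le \eta Q(k)$ against $\norm{\vect{X}_i^{(H)}(k+1)}_2=O(c_{x,0})$, and $\norm{\vect{a}(k)}_2\le a_{2,0}\sqrt{m}$ against the feature perturbation $\norm{\vect{X}_i^{(H)}(k+1)-\vect{X}_i^{(H)}(k)}_2\le \eta c_x Q(k)/\sqrt{m}$, so the $\sqrt{m}$ factors cancel explicitly. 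Your route buys a cleaner conceptual statement (the output map is $O(1)$-Lipschitz in the parameters near initialization, so one step moves the output by at most $O(\eta Q(k))$), and the worry you raise about the segment leaving the near-initialization ball is actually a non-issue: the ball is convex and Lemma~\ref{lem:dist_from_init_resnet} covers both endpoints $k$ and $k+1$, so the whole segment stays inside it. What the paper's more elementary algebra buys in exchange is that it needs only Lipschitz estimates on the features, never a derivative of the output with respect to the parameters at interior points of the segment; your mean-value step formally requires differentiability along the segment, which is fine under the $\beta$-smoothness assumed in the supporting lemmas but would need to be replaced by a direct Lipschitz telescoping argument if one insists on literal ReLU. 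Either way, your accounting of the $N$ and $P_H$ powers is exactly right and the proof goes through.
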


\subsection{Proofs of Lemmas}

\begin{proof}[Proof of Lemma~\ref{lem:init_norm}]
	We will bound $\norm{\bm{X}_i^{(t)}(0)}_2$ by induction on layers. 
	For the first layer, when $\vect{W}^{(0,1)}$ is a parameterized linear operator, we can calculate:
\begin{equation*}
\begin{aligned}
\expect\left[\norm{\vect{X}_i^{(1)}(0)}_2^2\right] = &c_{\sigma} \expect\left[
\relu{\vect{W}^{(0,1)}(0)^\top \vect{X}_i}^2
\right] \\
= & c_{\sigma}\expect_{X\sim N(0,1)} \sigma(X)^2\\
=& 1.\\
\variance\left[\norm{\vect{X}_i^{(1)}(0)}_2^2\right] = &\frac{c_{\sigma}^2}{m} \variance\left[\relu{\vect{W}^{(0,1)}(0)^\top \vect{X}_i(0)}^2\right] \\
\le & \frac{c_{\sigma}^2}{m} \expect_{X\sim N(0,1)} \sigma(X)^4 \\
\le & \frac{c_{\sigma}^2}{m} \expect\left[\left(\abs{\sigma(0)}+
L\abs{\vect{W}^{(0,1)}(0)^\top \vect{X}_i}\right)^4\right] \\
\le &\frac{C_2}{m},
\end{aligned}
\end{equation*}
where $C_2\triangleq \sigma(0)^4+4\abs{\sigma(0)}^3L\sqrt{2/\pi}+6\sigma(0)^2L^2+8\abs{\sigma(0)}L^3\sqrt{2/\pi}+32L^4$.
Considering $\vect{W}^{(0,1)}$ could be a zero or an identity mapping, We have with probability at least $1-\frac{\delta}{n}$,
\begin{align*}
0 \le \norm{\vect{X}_i^{(1)}(0)}_2 \le 2.
\end{align*}

By definition we have for $2\le t\le H$,
\begin{align*}
0 \le \norm{\vect{X}_i^{(t)}(0)}_2 \le \sum_{s=0}^{P_t} \norm{\sqrt{\frac{c_\sigma}{m}}\relu{\mat{W}^{(s,t)}(0)\vect{X}_i^{(s)}(0) }}_2.
\end{align*}

Thus we have:
\begin{equation*}
\begin{aligned}
\variance\left[\norm{\vect{X}_i^{(t)}(0)}_2^2\right]
&= \variance\left[\norm{\sum_{s=0}^{P_t} \sqrt{\frac{c_\sigma}{m}}\relu{\mat{W}^{(s,t)}(0)\vect{X}_i^{(s)}(0) }}_2^2\right] \\
&\leq \sum_{s_1=0}^{P_t} \sum_{s_2=0}^{P_t} \frac{C_2}{m} = P_t^2 \frac{C_2}{m}.
\end{aligned}
\end{equation*}

Applying Chebyshev's inequality and plugging in our assumption on $m$, we have with probability $1-\frac{\delta}{NH}$ over $\mat{W}^{(s, t)}$,
\[ \abs{ \norm{\vect{X}_i^{(t)}(0)}_2^2-\expect \norm{\vect{X}_i^{(t)}(0)}_2^2 } \le \frac{1}{2g_C(H)}. \]

Since $\mat{W}^{(s,t)}$ could be linear, identity, or zero mapping, we have:
\[\norm{\sqrt{\frac{c_\sigma}{m}}\relu{\mat{W}^{(s,t)}(0)\vect{X}_i^{(s)}(0) }}_2 \le \norm{\vect{X}_i^{(s)}(0)}_2 \max(1, c_{w,0}L). \]
Thus
\[ 0 \le \norm{\vect{X}_i^{(t)}(0)}_2 \le \max(1, c_{w,0}L) \sum_{s=0}^{P_t} \norm{\vect{X}_i^{(s)}(0)}_2,\]
which implies
\[ 0 \le \norm{\vect{X}^{(t)}(0)}_2 \le 2 P_H \max(1, c_{w,0}L).\]
Choosing $c_{x,0}=P_H c_{w,0}L$ and using union bounds over $[n]$, we prove the lemma.

\end{proof}

\begin{proof}[Proof of Lemma~\ref{lem:mlp_least_eigen}]
The purpose of this proof is to study how large $m$ is needed to ensure the randomly generated Gram matrices ($\bm{X}^\top \bm{X}$) is close to the population Gram matrices ($\bm{K}$) in networks of general DAG connectivity patterns, via extending the general framework in Section E in~\cite{du2019gradient}.
For complete definitions, we refer readers to Section E.1 in~\cite{du2019gradient}.

\begin{theorem}\label{thm:main_general_framework}
With probability $1- \delta$ over the $\left\{\mat{W}^{(s,t)}\right\}_{s,t}$, suppose $m=\Omega\left(\frac{(NP_H)^{2} \log \left(\frac{HN}{\delta}\right)2^{O(H)}}{\lambda_{\min}^{2}\left(\bm{K}^{(H)}\right)}\right)$,
for any $ 1\le t \le H-1, 1\le i,j\le N,$
\begin{equation}
    \label{eqn:K_bound}
    \norm{\frac{1}{m}\sum\limits_{\alpha=1}^m(\vect{X}^{(t),(\alpha)}_i)^\top \vect{X}^{(t),(\alpha)}_j-\mat{K}^{(t)}_{ij}}_\infty\le \mathcal{E}\sqrt{\frac{\log (HN/\delta)}{m}}
\end{equation}
and any $t \in [H-1], \forall 1\le i\le N,$
\begin{equation}
    \label{eqn:b_bound}
    \norm{\frac{1}{m}\sum_{\alpha=1}^m \vect{X}^{(t),(\alpha)}_i-\vect{b}^{(t)}_i}_\infty\le \mathcal{E} \sqrt{\frac{\log (HN/\delta)}{m}}
\end{equation}
The error constant $\error$ satisfies there exists an absolute constant $C>0$ such that

\resizebox{1.\textwidth}{!}{
\begin{minipage}{\linewidth}
\begin{align*}
    \error\le C \left(\prod_{t=1}^{H-1} \left(
            |P_0^{(t)}| + \sum_{h' \in P_1^{(t)}} \left(\rhobound\wbound_{(t)}+2CB\wbound_{(t)}+2C \sqrt{\wbound_{(t)}M}\right)
        \right) \right)\times
    \max\{\wbound_{(t)}\sqrt{(1+ C^2)M^2},\sqrt{C^2M}\}
\end{align*}
\end{minipage}
}

where
$P_0^{(t)}$ and $P_1^{(t)}$ are sets of indices of nodes with skip-connection and parameterized layers into $\vect{X}^{(s)}$, respectively.
We have $M = O(1), B = O(1), C = O(1), \Lambda = O(1), \wbound_{(t)} = O(1)$, and they are defined by:
\begin{itemize}[leftmargin=*]
    \item $\linsub^{(s,t)}\left(\mat{K}^{(s)}\right) = \expect_{\mat{W}}\left[ \mat{W}^{(s,t)}\mat{K}^{(s)}(\mat{W}^{(s,t)})^\top \right]$.
    \item $M=1+100\max_{i,j,s,t}|\linsub^{(s,t)}(\mat{K}^{(s)}_{ij})|$,
    \item $B=1+100\max_{i,t}|\vect{b}^{(t)}_{i}|$,
    \item $C=\abs{\activate(0)}+\sup_{x\in \mathbb{R}}\abs{\activate'(x)}$,
    \item $\rhobound$ is a constant that only depends on $\rho$,
    \item $\wbound_{(t)}=1+\max_{s}\|\mathcal{W}^{(s,t)}\|_{L^\infty\rightarrow L^\infty}$.
\end{itemize}
\end{theorem}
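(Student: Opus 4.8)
The plan is to prove the two concentration bounds (\ref{eqn:K_bound}) and (\ref{eqn:b_bound}) simultaneously by induction on the layer index $t$, generalizing the framework of Section E in~\cite{du2019gradient} from the purely sequential MLP/ResNet case to an arbitrary DAG. The object to control at node $t$ is its feature vector, which now aggregates contributions from \emph{all} incoming edges: $\vect{X}^{(t)} = \sum_{s\in P_0^{(t)}}\vect{X}^{(s)} + \sum_{s\in P_1^{(t)}}\sqrt{c_\sigma/m}\,\sigma(\mat{W}^{(s,t)}\vect{X}^{(s)})$, where $P_0^{(t)}$ and $P_1^{(t)}$ index the skip-connection and parameterized parents, respectively. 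The base case $t=1$ has a single incoming edge and reduces to the standard first-layer concentration (the variance computation already carried out in the proof of Lemma~\ref{lem:init_norm}), which also supplies the uniform norm bounds $\|\vect{X}^{(s)}\|\le c_{x,0}$ used throughout.

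For the inductive step I would assume (\ref{eqn:K_bound}) and (\ref{eqn:b_bound}) hold at every parent layer $s<t$ and expand the empirical Gram matrix of (\ref{eqn:K_bound}) over the incoming edges. This yields (i) self-terms, one per parameterized parent $s\in P_1^{(t)}$, of the form (up to the $c_\sigma/m$ scaling) $\sigma(\mat{W}^{(s,t)}\vect{X}^{(s)}_i)^\top\sigma(\mat{W}^{(s,t)}\vect{X}^{(s)}_j)$; (ii) cross-terms between distinct parameterized parents $s\neq s'$; and (iii) skip-connection terms in $\vect{X}^{(s)}$, $s\in P_0^{(t)}$. Conditioned on the parent features, each self-term is an average over the $m$ independent rows of $\mat{W}^{(s,t)}$, so I would apply a Bernstein-type bound for sub-exponential variables---legitimate because $\sigma$ is $L$-Lipschitz and the parent norms are bounded---to show it concentrates at rate $\sqrt{\log(HN/\delta)/m}$ around its conditional mean $\linsub^{(s,t)}(\mat{K}^{(s)}_{ij})$ evaluated at the \emph{empirical} parent Gram. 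The cross-terms exploit independence of $\mat{W}^{(s,t)}$ and $\mat{W}^{(s',t)}$: their conditional expectation factorizes into a product of per-edge means, concentrating at the same rate and controlled by the mean bound $B$, which is simultaneously why (\ref{eqn:b_bound}) must be propagated in lockstep. A union bound over the $O(HN^2)$ triples $(t,i,j)$ and over the at most $(\max_t|P_1^{(t)}|)^2$ edge pairs per node gives the high-probability statement, the union cost being absorbed into the $\log(HN/\delta)$ factor.

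The error constant $\mathcal{E}$ then emerges by propagating the inductive error forward. At layer $t$ the total deviation is the sum of the freshly incurred conditional-concentration error---contributing the $\max\{\wbound_{(t)}\sqrt{(1+C^2)M^2},\sqrt{C^2M}\}$ factor, where $M$ bounds the pre-activation variances $\linsub^{(s,t)}(\mat{K}^{(s)})$---plus the inherited parent errors, each \emph{amplified} by the Lipschitz constant of the map taking the parent Gram through $\linsub^{(s,t)}$ and the activation. Summing the amplified contributions over the incoming edges of node $t$ produces exactly the per-layer factor $|P_0^{(t)}| + \sum_{h'\in P_1^{(t)}}(\rhobound\wbound_{(t)} + 2CB\wbound_{(t)} + 2C\sqrt{\wbound_{(t)}M})$, and taking the product across $t=1,\dots,H-1$ yields the stated bound on $\mathcal{E}$; the compounding of these per-layer factors is also what produces the $2^{O(H)}$ dependence appearing in the requirement on $m$.

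The hard part will be the bookkeeping of how errors from \emph{multiple} parents combine at a single node. In the sequential case each node has one parent and the error simply Lipschitz-propagates; in a DAG the summation over $P_0^{(t)}\cup P_1^{(t)}$ makes the per-layer amplification additive in the number of incoming edges, and one must verify that this additive growth, compounded multiplicatively over $H$ layers, yields the claimed $2^{O(H)}$ rate rather than something worse. The most delicate technical point is ensuring that the per-edge sub-exponential tail bounds remain uniform when conditioning on the \emph{random} parent features, so that the conditional concentration can be combined cleanly with the inductive hypothesis on the empirical parent Gram without circularity.
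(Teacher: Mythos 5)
Your proposal follows essentially the same route as the paper's proof: an induction over layers that propagates the Gram bound \eqref{eqn:K_bound} and the mean bound \eqref{eqn:b_bound} in lockstep, a decomposition of each node's empirical Gram into skip-connection and parameterized contributions (with cross-terms factorizing via independence of the edge weights and hence controlled by $B$), and Lipschitz propagation of the inherited parent error through the conditional Gaussian functionals built from $\linsub^{(h',t)}$ applied to the empirical parent Gram, yielding the per-layer amplification factor $|P_0^{(t)}| + \sum_{h' \in P_1^{(t)}}\left(\rhobound\wbound_{(t)}+2CB\wbound_{(t)}+2C\sqrt{\wbound_{(t)}M}\right)$ whose product over layers gives $\error$ and the $2^{O(H)}$ dependence. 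The only notable difference is that you spell out the per-edge sub-exponential concentration and union-bound step explicitly, whereas the paper inherits that machinery wholesale from the general framework of Section E of Du et al.; this is a matter of detail rather than of approach.
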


\begin{proof}[Proof of Theorem~\ref{thm:main_general_framework}]
The proof is by induction.
Suppose that Equation~\eqref{eqn:K_bound} and~\eqref{eqn:b_bound} hold for $1\le s\le t$ with probability at least $1- \frac{t}{H}\delta$, now we want to show the equations holds for $t+1$ with probability at least $1- \delta/H$ conditioned on previous layers satisfying Equation~\eqref{eqn:K_bound} and~\eqref{eqn:b_bound}.
Let $s=t+1$.
recall
\begin{align*}
\vect{X}^{(s),(\alpha)}_i
=
\sum_{h' \in P_0^{(s)}} \vect{X}^{(h')}
+
\sum_{h' \in P_1^{(s)}} \activate^{(h')}\left(\frac{\sum_\beta \mat{W}^{(h',h),(\alpha)}_{(\beta)}\vect{X}^{(h'),(\beta)}_i}{\sqrt{m}}\right),
\end{align*}
Now we get the following formula for the expectation:

\resizebox{0.95\textwidth}{!}{
\begin{minipage}{\linewidth}
\begin{align*}
\expect^{(s)}[\hat{\mat{K}}^{(s)}_{ij}]=& \sum_{h' \in P_0^{(s)}} \hat{\mat{K}}^{(h')}_{ij} +\sum_{h' \in P_1^{(s)}} \expect_{(\vect{U},\vect{V})}\left(\rho(\vect{U})^\top (\hat{\vect{b}}^{(h')}_j)+(\hat{\vect{b}}^{(h')}_i)^\top \rho(\vect{V})+\rho(\vect{U})^\top\rho(\vect{V}))\right)\\
\expect^{(s)}\hat{\vect{b}}_i^{(s)} = & \sum_{h' \in P_0^{(s)}} \hat{\vect{b}}_i^{(h')} + \sum_{h' \in P_1^{(s)}} \expect_{\vect{U}}\rho(\vect{U})
\end{align*}
\end{minipage}
}

with
\begin{align*}
    (\vect{U},\vect{V})\sim N\left(\vect{0},\left(\begin{array}{ccc}
    \linsub^{(h',s)}(\hat{\mat{K}}_{ii}^{(h')}) & \linsub^{(h',s)}( \hat{\mat{K}}_{ij}^{(h')})\\
    \linsub^{(h',s)}( \hat{\mat{K}}_{ji}^{(h')}) & \linsub^{(h',s)}( \hat{\mat{K}}_{jj}^{(h')})\\
    \end{array}\right)\right)
\end{align*}
To bound the differences that determine how the error propagates through layers:
\begin{align*}
\max_{ij}\norm{\expect^{(s)}\hat{\mat{K}}_{ij}^{(s)}- \mat{K}_{ij}^{(s)}}_{\infty} \text{ and }
\max_{i}\norm{\expect^{(s)}\hat{\vect{b}}^{(s)}_{ij}- \vect{b}^{(s)}_{ij}}_{\infty},
\end{align*}
we analyze the error:
\begin{align*}
&\norm{\expect^{(s)}\hat{\mat K}_{ij}^{(s)}- \mat K_{ij}^{(s)}}_{\infty}\\
\le & \sum_{h' \in P_0^{(s)}} \norm{\hat{\mat K}^{(h')}_{ij}-\mat{K}^{(h')}_{ij}}_{\infty}\\
& + \sum_{h' \in P_1^{(s)}} \Large( \norm{\expect_{(\vect{U},\vect{V})\sim \hat{\mat A}}\rho(\vect{U})^\top (\hat{\vect{b}}^{(h')}_j)-\expect_{(\vect{U},\vect{V})\sim \hat{\mat A}}\rho(\vect{U})^\top (\vect{b}^{(h')}_j)}_{\infty}\\
& + \norm{\expect_{(\vect{U},\vect{V})\sim \hat{\mat A}}\rho(\vect{U})^\top (\vect{b}^{(h')}_j)-\expect_{(\vect{U},\vect{V})\sim \mat{A}}\rho(\vect{U})^\top (\vect{b}^{(h')}_j)}_{\infty}\\
&+\norm{\expect_{(\vect{U},\vect{V})\sim \hat{\mat A}}(\hat{\vect{b}}^{(h')}_i)^\top \rho(\vect{V})-\expect_{(\vect{U},\vect{V})\sim \hat{\mat{A}}}((\vect{b}^{(h')}_i)^\top \rho(\vect{V})}_{\infty}\\
&+\norm{\expect_{(\vect{U},\vect{V})\sim \hat{\mat{A}}}(\vect{b}^{(h')}_i)^\top \rho(\vect{V})-\expect_{(\vect{U},\vect{V})\sim \mat{A}}(\vect{b}^{(h')}_i)^\top \rho(\vect{V})}_{\infty}\\
&+\norm{\expect_{(\vect{U},\vect{V})\sim \hat{\mat{A}}}\rho(\vect{U})^\top\rho(\vect{V}))-\expect_{(\vect{U},\vect{V})\sim \mat{A}}\rho(\vect{U})^\top\rho(\vect{V}))}_{\infty} \Large)
\end{align*}
where we define
\begin{align*}
\hat{\mat A}=\left(\begin{array}{ccc}
\linsub^{(h',s)}(\hat{\mat
K}_{ii}^{(h')}) & \linsub^{(h',s)}( \hat{\mat K}_{ij}^{(h')})\\
\linsub^{(h',s)}( \hat{\mat K}_{ji}^{(h')}) & \linsub^{(h',s)}( \hat{\mat K}_{jj}^{(h')})\\
\end{array}\right)
\text{and }
\mat{A}=\left(\begin{array}{ccc}
\linsub^{(h',s)}(\mat
K_{ii}^{(h')}) & \linsub^{(h',s)}(\mat K_{ij}^{(h')})\\
\linsub^{(h',s)}( \mat K_{ji}^{(h')}) & \linsub^{(h',s)}( \mat K_{jj}^{(h')})\\
\end{array}\right)
\end{align*}

By definition, we have
\begin{align*}
\|\mat A-\hat{\mat A}\|_\infty\le &\wbound_{(s)}\max\limits_{ij}\|\hat{\mat K}^{(h')}_{ij}-\mat K^{(h')}_{ij}\|_{\infty}
\end{align*}
We can also estimate other terms
\begin{align*}
&\norm{\expect_{(\vect{U},\vect{V})\sim \hat{\mat A}}\rho(\vect{U})^\top (\hat{\vect{b}}^{(h')}_j)-\expect_{(\vect{U},\vect{V})\sim \hat{ \mat A}}\rho(\vect{U})^\top (\vect{b}^{(h')}_j)}_{\infty}\\
\le &\norm{\expect_{(\vect{U},\vect{V})\sim \hat{\mat A}}\rho(\vect{U})^\top \left(\hat{\vect{b}}^{(h')}_j-\vect{b}^{(h')}_j\right)}_{\infty}\\
\le &C\sqrt{\wbound_{(s)} \max_{ij}\|\hat{\mat K}_{ij}^{(s)}\|_\infty}\max_{i}\norm{\hat{\vect{b}}^{(h')}_{ij}- \vect{b}^{(h')}_{ij}}_{\infty}\\
\le &C \sqrt{\wbound_{(s)}M}\max_{i}\norm{\hat{\vect{b}}^{(h')}_{ij} - \vect{b}^{(h')}_{ij}}_{\infty},
\end{align*}

\begin{align*}
&\norm{\expect_{(\vect{U},\vect{V})\sim \hat{\mat A}}\rho(\vect{U})^\top (\vect b^{(h')}_j) - \expect_{(\vect U,\vect V)\sim \mat A}\rho(\vect U)^\top (\vect{b}^{(h')}_j)}_{\infty}\\
\le &B C\|\mat A-\hat{\mat A}\|_\infty \\
\le &BC \wbound_{(s)} \max\limits_{ij}\|\hat{\mat K}^{(h')}_{ij}-\mat K^{(h')}_{ij}\|_{\infty},
\end{align*}
and
\begin{align*}
&\norm{\expect_{(\vect{U},\vect{V})\sim \hat{\mat A}}\rho(\vect{U})^\top\rho(\vect{V})-\expect_{(\vect{U},\vect{V})\sim \mat A}\rho(\vect{U})^\top\rho(\vect{V})}_{\infty}\\
\le &\rhobound\|\mat A-\hat{\mat A}\|_{\infty}\\
\le &\rhobound\wbound_{(s)}\max\limits_{ij}\|\hat{\mat K}^{(h')}_{ij}-\mat K^{(h')}_{ij}\|_{\infty}.
\end{align*}

Putting these estimates together, we have

\resizebox{1.\textwidth}{!}{
\begin{minipage}{\linewidth}
\begin{align*}
&\max_{ij}\|\expect^{(s)}\hat{\mat K}_{ij}^{(s)}- \mat K_{ij}^{(s)}\|_\infty\\
& \le \sum_{h' \in P_0^{(s)}} \left(\max\limits_{ij}\|\hat{\mat K}^{(h')}_{ij}-\mat K^{(h')}_{ij}\|_{\infty}\vee\max_{i}\|\hat{\vect b}^{(h')}_{ij}- \vect b^{(h')}_{ij}\|_\infty\right) \\
&+ \sum_{h' \in P_1^{(s)}} \left(\rhobound\wbound_{(s)}+2CB\wbound_{(s)}+2C \sqrt{\wbound_{(s)}M}\right)\left(\max\limits_{ij}\|\hat{\mat K}^{(h')}_{ij}-\mat K^{(h')}_{ij}\|_{\infty}\vee\max_{i}\|\hat{\vect b}^{(h')}_{ij}- \vect b^{(h')}_{ij}\|_\infty\right)
\end{align*}
\end{minipage}
}

and

\resizebox{1.\textwidth}{!}{
\begin{minipage}{\linewidth}
\begin{align*}
\max_{i}\norm{\expect^{(s)}\hat{\vect b}^{(s)}_{ij}- \vect b^{(s)}_{ij}}_{\infty}
& \le \sum_{h' \in P_0^{(s)}} \left(\max_{ij}\|\hat{\mat K}^{(h')}_{ij}-\mat K^{(h')}_{ij}\|_\infty\vee\max_{i}\|\hat{\vect b}^{(h')}_{ij}- \vect{b}^{(h')}_{ij}\|_\infty\right) \\
& + \sum_{h' \in P_1^{(s)}} (\rhobound\wbound_{(s)})\left(\max_{ij}\|\hat{\mat K}^{(h')}_{ij}-\mat K^{(h')}_{ij}\|_\infty\vee\max_{i}\|\hat{\vect b}^{(h')}_{ij}- \vect{b}^{(h')}_{ij}\|_\infty\right).
\end{align*}
\end{minipage}
}

These two bounds imply the theorem.

\end{proof}

\begin{remark}
Combing Theorem~\ref{thm:main_general_framework} and Lemma E.1 in~\cite{du2019gradient} and standard matrix perturbation bound directly have Lemma~\ref{lem:mlp_least_eigen}.
\end{remark}

\end{proof}

\begin{proof}[Proof of Lemma~\ref{lem:pertubation_of_neuron_res}]
	We prove this lemma by induction.
	Our induction hypothesis is
\begin{align*}
	\norm{\vect{X}^{(t)}(k)-\vect{X}^{(t)}(0)}_2 \le g(t)  ,
\end{align*}
where
\begin{align*}
    g(t) =
    \sqrt{c_\sigma} P_t \left( 2 g(t-1) c_{w,0}L + L c_{x,0}R \right).
\end{align*}
For $h=1$, we have
\begin{align*}
	\norm{\vect{X}^{(1)}(k)-\vect{X}^{(1)}(0)}_2&\le \sqrt{\frac{c_{\sigma}}{m}} \norm{\relu{\mat{W}^{(0,1)}(k) \vect{X}} -\relu{\mat{W}^{(0,1)}(0) \vect{X}}}_2\\
&	\le \sqrt{\frac{c_{\sigma}}{m}}\norm{\mat{W}^{(0,1)}(k)-\mat{W}^{(0,1)}(0)}_F \le \sqrt{c_{\sigma}}LR,
\end{align*}
which implies $g(1)=\sqrt{c_{\sigma}}LR$.
For $2\le t\le H$, following the proof of Lemma C.3. in~\cite{du2019gradient}, we have
\begin{align*}
\norm{\vect{X}^{(t)}(k)-\vect{X}^{(t)}(0)}_2 &= \sqrt{\frac{c_\sigma}{m}} \norm{ \sum_{s=0}^{P_t} \relu{\mat{W}^{(s,t)}(k) \vect{X}^{(s)}(k)} -\relu{\mat{W}^{(s,t)}(0)\vect{x}^{(s)}(0) }}_2 \\
&\leq \sqrt{\frac{c_\sigma}{m}} \sum_{s=0}^{P_t} \norm{ \relu{\mat{W}^{(s,t)}(k) \vect{X}^{(s)}(k)} -\relu{\mat{W}^{(s,t)}(0)\vect{x}^{(s)}(0) }}_2 \\
&\leq P_t \left( 2 \sqrt{c_\sigma} c_{w,0}Lg(t-1) + \sqrt{c_\sigma} L c_{x,0}R \right).
\end{align*}
Lastly, simple calculations show $g(t) \leq 2 \prod_{s=1}^t P_{s} \sqrt{c_\sigma}c_{w,0}L + \sum_{s=1}^t P_{s} \sqrt{c_\sigma}Lc_{x,0}R$.
\end{proof}

\begin{proof}[Proof of Lemma~\ref{lem:close_to_init_small_perturbation_res_smooth}]
	Following the proof of Lemma C.4. in~\cite{du2019gradient}, we can obtain
	\begin{align*}
	&\abs{\mat{G}_{i,j}^{(H)}(k)-\mat{G}_{i,j}^{(H)}(0)} \leq  P_H\left(I_1^{i,j} + I_2^{i,j}+I_3^{i,j}\right) .
	\end{align*}
Therefore we can bound the perturbation
\begin{align*}
	&\norm{\mat{G}^{(H)}(k) - \mat{G}^{(H)}(0)}_F = \sqrt{\sum_{(i,j)}^{{N,N}} \abs{\mat{G}_{i,j}^{(H)}(k)-\mat{G}_{i,j}^{(H)}(0)}^2} \\
	&\leq P_H L^2NR \left[3 c_{x,0} c_xa_{2,0}^2+2\beta c_{x,0}^2 \left(2c_xc_{w,0}+c_{x,0}\right)a_{4,0}^2+12c_{x,0}^2 a_{2,0}\right], \\
\end{align*}
where $c_x \triangleq 2 \prod_{s=1}^t P_{s} \sqrt{c_\sigma} c_{w,0} \frac{L}{R} + \sum_{s=1}^t P_{s} \sqrt{c_\sigma} Lc_{x,0}$.
Plugging in the bound on $R$, we have the desired result.
	
\end{proof}

\begin{proof}[Proof of Lemma~\ref{lem:dist_from_init_resnet}]
	We will prove this corollary by induction. The induction hypothesis is
	\begin{align*}
	\norm{\mat{W}^{(s,t)}(k')-\mat{W}^{(s,t)}(0)}_F &\le \sum_{k''=0}^{k'-1} (1-\frac{\eta \lambda_0}{2})^{k''/2}\frac{1}{4}\eta \lambda_0 R\sqrt{m}\le R\sqrt{m}, k'\in [k+1],\\
	\norm{\vect{a}(k')-\vect{a}(0)}_2 &\le \sum_{k''=0}^{k'-1} (1-\frac{\eta \lambda_0}{2})^{k''/2}\frac{1}{4}\eta \lambda_0 R\sqrt{m}\le R\sqrt{m}, k'\in [k+1].
	\end{align*}
	
	First it is easy to see it holds for $k''=0$. Now suppose it holds for $k''=0,\ldots,k'$, we consider $k''=k'+1$. Applying union bound over $P_H$ number of end-to-end paths, we have
	\begin{align*} 
	&\norm{\mat{W}^{(s,t)}(k'+1)-\mat{W}^{(s,t)}(k')}_F\\
	\le &\eta P_t L \sqrt{\frac{c_\sigma}{m}} \norm{\vect{a}}_2 \sum_{i=1}^{N}\abs{y_i-u_i(k')}\norm{\vect{X}^{(s-1)}_i(k')}_2 \prod_{s'=t+1}^H \norm{\mat{I}+\lambda_0^{3/2}\sqrt{\frac{c_\sigma}{m}}\mat{J}_i^{(s',s)}(k')\mat{W}^{(s,s')}(k') }_2\\
	\le&  2\eta P_t \sqrt{c_\sigma} c_{x,0}La_{2,0}e^{2 \sqrt{c_\sigma} c_{w,0}L}\sqrt{N} \norm{\vect{y}-\vect{u}(k')}_2\\
	=& \eta P_t Q(k')\\
	\le& P_H (1-\frac{\eta \lambda_0}{2})^{k'/2}\frac{1}{4}\eta \lambda_0 R\sqrt{m},\\
	\end{align*}
	where
	\begin{equation*}
	    \mat{J}_i^{(s',s)} \triangleq \diag\left(
\rho' \left((\vect{W}_{1}^{(s',s)})^\top \vect{X}_i^{(s')}\right) , \ldots,
\rho' \left((\vect{W}_{m}^{(s',s)})^\top \vect{X}_i^{(s')}\right)
\right) \in \mathbb{R}^{m \times m}
	\end{equation*}
are the derivative matrices induced by the activation function.
Similarly, we have
\begin{align*}
	\norm{\vect{a}(k'+1)-\vect{a}(k')}_2 \le& 2\eta c_{x,0} \sum_{i=1}^{N}\abs{y_i-u(k')}\\
	\le& \eta Q(k')\\
	\le& (1-\frac{\eta \lambda_0}{2})^{k'/2}\frac{1}{4}\eta \lambda_0 R\sqrt{m}.
	\end{align*}
	Thus
	\begin{align*}
	&\norm{\mat{W}^{(s,t)}(k'+1)-\mat{W}^{(s,t)}(0)}_F\\
	\le& \norm{\mat{W}^{(s,t)}(k'+1)-\mat{W}^{(s,t)}(k')}_F+\norm{\mat{W}^{(s,t)}(k')-\mat{W}^{(s,t)}(0)}_F\\
	\le&\sum_{k''=0}^{k'} \eta (1-\frac{\eta \lambda_0}{2})^{k''/2}\frac{1}{4}\eta \lambda_0 R\sqrt{m}.\\
	\end{align*}
		Similarly,
	\begin{align*}
	&\norm{\vect{a}(k'+1)-\vect{a}(0)}_2\\
	\le&\sum_{k''=0}^{k'} \eta (1-\frac{\eta \lambda_0}{2})^{k''/2}\frac{1}{4}\eta \lambda_0 R\sqrt{m}.\\
	\end{align*}
\end{proof}

\begin{proof}[Proof of Lemma~\ref{lem:resnet_I2}]

We apply union bound over $P_H$ number of end-to-end paths.
\begin{align*}
& \norm{\mathcal{L}'^{(s,t)}(k)}_F \triangleq \norm{ \frac{\partial \mathcal{L}}{\partial \vect{W}^{(s,t)}(k)} }_F \\
= & P_t
\norm{\sqrt{\frac{c_\sigma}{m}}
	\sum_{i=1}^{N}(y_i-u_i(k))\vect{X}_i^{(s)}(k)  \cdot
	\left[\vect{a}(k)^\top \prod_{l=t+1}^{H}\left(\mat{I}+\sqrt{\frac{c_\sigma}{m}}\mat{J}_i^{(t,l)}(k)\mat{W}^{(t,l)}(k) \right)\mat{J}_i^{(t,l)}(k) \right]}_F \\
\le & P_t L \sqrt{\frac{c_\sigma}{m}} \norm{\vect{a}(k)}_2 \sum_{i=1}^{N}\abs{y_i-u_i(k)}\norm{\vect{X}^{(s)}(k)}_2 \prod_{l=t+1}^H \norm{\mat{I}+\sqrt{\frac{c_\sigma}{m}}\mat{J}_i^{(t,l)}(k)\mat{W}^{(t,l)}(k) }_2.
\end{align*}
We have bounded the RHS in the proof for Lemma~\ref{lem:dist_from_init_resnet}, thus
\begin{align*}
	\norm{\mathcal{L}'^{(s,t)}(k)}_F \le \lambda_0 Q(k).
\end{align*}
Denote $\params(k,k')=\params(k)-k' \mathcal{L}'(\params(k))$, we have
\begin{align*}
	&\norm{  u'^{(s,t)}_i\left(\params(k)\right) - u'^{(s,t)}_i\left(\params(k,k')\right)}_F=\\
	& P_t \sqrt{\frac{c_\sigma}{m}}\norm{ \vect{X}_i^{(s)}(k)\vect{a}(k)^\top \prod_{l=t+1}^{H}\left(\mat{I}+\sqrt{\frac{c_\sigma}{m}}\mat{J}_i^{(s,l)}(k)\mat{W}^{(s,l)}(k) \right)\mat{J}_i^{(s,t)}(k) \right.\\
	&\left.-\vect{X}_i^{(s)}(k,k')\vect{a}(k,k')^\top	\prod_{l=t+1}^{H}\left(\mat{I}+\sqrt{\frac{c_\sigma}{m}}\mat{J}_i^{(s,l)}(k,k')\mat{W}^{(s,l)}(k,k') \right)\mat{J}_i^{(s,t)}(k,k')}_F.
\end{align*}
Through standard calculations, we have
\begin{align*}
	\norm{	\mat{W}^{(s,l)}(k)-\mat{W}^{(s,l)}(k,s)}_F \le &\eta Q(k),\\
		\norm{	\vect{a}(k)-\vect{a}(k,s)}_F \le &\eta Q(k),\\
	\norm{	\vect{X}_i^{(s)}(k)-\vect{X}_i^{(s)}(k,s)}_F \le &\eta c_x \frac{Q(k)}{\sqrt{m}},\\
	\norm{	\mat{J}^{(s,l)}(k)-\mat{J}^{(s,l)}(k,s)}_F \le &2\left(c_{x,0}+c_{w,0}c_x\right)\eta \beta   Q(k).
\end{align*}
Thus we have

\resizebox{1.\textwidth}{!}{
\begin{minipage}{\linewidth}
\begin{align*}
	&\norm{  u'^{(t)}_i\left(\params(k)\right) - u'^{(t)}_i\left(\params(k,k')\right)}_F \\
	\le & 4 c_{x,0}La_{2,0}e^{2Lc_{w,0}}\eta Q(k) \sqrt{\frac{c_\sigma}{m}}\left(\frac{c_x}{c_{x,0}}+\frac{2}{L}\left(c_{x,0}+c_{w,0}c_x\right)\beta \sqrt{m}+4c_{w,0}\left(c_{x,0}+c_{w,0}c_x\right)\beta+L+1 \right)\\
	\le & 32 c_\sigma c_{x,0}a_{2,0}e^{2Lc_{w,0}} \left(c_{x,0}+c_{w,0}c_x\right) \beta \eta Q(k).
\end{align*}
\end{minipage}
}

Thus:
\begin{align*}
	\abs{I^i_2}\le 32 c_\sigma c_{x,0}a_{2,0}e^{2Lc_{w,0}} \left(c_{x,0}+c_{w,0}c_x\right) \beta \eta^2  Q(k)^2 \le \frac{1}{8}\eta \lambda_0 \norm{\vect{y}-\vect{u}(k)}_2,
\end{align*}
where we used the bound of $\eta$ and that $\norm{\vect{y}-\vect{u}(0)}_2=O(\sqrt{N})$,.
\end{proof}

\begin{proof}[Proof of Lemma~\ref{lem:quadratic_resnet}]
	\begin{align*}
& \norm{\vect{u}(k+1)-\vect{u}(k)}_2^2 = \sum_{i=1}^{N}\left(\vect{a}(k+1)^\top \vect{X}_i^{(H)}(k+1)-\vect{a}(k)^\top \vect{X}_i^{(H)}(k)\right)^2 \\
&= \sum_{i=1}^{N}\left(\left[\vect{a}(k+1)-\vect{a}(k)\right]^\top \vect{X}_i^{(H)}(k+1)+\vect{a}(k)^\top \left[\vect{X}_i^{(H)}(k+1)-\vect{X}_i^{(H)}(k)\right] \right)^2 \\
&\le 2\norm{\vect{a}(k+1)-\vect{a}(k)}_2^2\sum_{i=1}^{N}\norm{\vect{X}_i^{(H)}(k+1)}_2^2+2\norm{\vect{a}(k)}_2^2\sum_{i=1}^{N}\norm{\vect{X}_i^{(H)}(k+1)-\vect{X}_i^{(H)}(k)}_2^2\\
&\le 8N\eta^2c_{x,0}^2Q(k)^2 + 4 N \left(\eta  a_{2,0}c_xQ(k)\right)^2\\
&\le \frac{1}{8}\eta \lambda_0 \norm{\vect{y}-\vect{u}(k)}_2^2.
\end{align*}
\end{proof}

\section{Proof of Lemma~\ref{lem:full_rank}}

\begin{customlemma}{\ref{lem:full_rank}}[Full Rankness of $\mathbf{K}^{(H)}$]
    Suppose $\sigma(\cdot)$ is analytic and not a polynomial function. If no parallel data points, then $\lambda_{\min} \left(\mathbf{K}^{(H)}\right) > 0$.
\end{customlemma}

\begin{proof}[Proof of Lemma~\ref{lem:full_rank}]
First, we know that $\bm{K}^{(0)}$ is positive definite by our assumption on the data point. The next step is to prove propagation of either skip-connection or linear operation will keep the positive definite property. For a linear operation, the NNGP variance matrix can be expressed as,
$$
\bm{K}_{ij} = \mathbb{E}_{\mat{W} \sim \mathcal{N}(0,\bm{I})}[ \sigma(\mat{W}^\top \mat{X}_i) \sigma(\mat{W}^\top \mat{X}_j)] 
$$
Define the feature map induced by NNGP kernel as $\phi_{\mat{X}_i}(\mat{W}) = \sigma(\mat{W}^\top \mat{X}_i)\mat{X}_i $. Then we will show that $\phi_{\mat{X}_1}(\mat{W}), \ldots \phi_{\mat{X}_N}(\mat{W}) $ are linearly independent. Assume that there are $a_i \in \mathbb{R}$ such that,
$$
\sum_{i=1}^N a_i \phi_{\mat{X}_i} = \sum_{i=1}^N a_i \sigma(\mat{W}^\top \mat{X}_i)\mat{X}_i = 0 
$$
Differentiating the above equation $(N-2)$ times with respect to $\mat{W}$, we then have,
$$
\sum_{i=1}^N (a_i \sigma^{(N)}(\mat{W}^\top \mat{X}_i)) \mat{X}_i^{ \otimes (N-1)} = 0
$$
The linearly independents of $\mat{X}_i^{ \otimes (N-1)}$ can be proven by induction. For $N = 2$, $\mat{X}_1 \mat{X}^\top_1  $ and $\mat{X}_2 \mat{X}^\top_2  $ are linearly independent under the non-parallel assumption. Suppose $\{{\rm vec}({\mat{X}^{\otimes (N-1)}_1} ), \ldots, {\rm vec}({\mat{X}^{\otimes (N-1)}_{N-1}}) \} $ are linearly independent. Then we prove the linearly independent for case of ${\mat{X}}_i^{\otimes N}$ by contradiction. Suppose there exists $\alpha_1, \ldots, \alpha_N \in \mathbb{R}$ not identically 0, such that
$$
\sum_{i=1}^N \alpha_i {\rm vec}(\mat{X}^{\otimes N}_i) = 0
$$
which implies for $j = 1,\dots, d$
$$
\sum_{i=1}^N \alpha_i \mat{X}_{ij} {\rm vec}(\mat{X}^{\otimes (N-1)}_i) = 0
$$
Because we have supposed the linearly independence of $\mat{X}^{\otimes (N-1)}_i$, we know there must exist $j \in [d]$ that $\mat{X}_{ij} \neq 0$ for all $i \in [N]$. However, this is contradict with the assumption that $\alpha_1, \ldots, \alpha_N \neq 0$. Thus we can conclude that $\{\mat{X}^{\otimes (N-1)}_i \}_{i=1}^N$ 
are linearly independent. Finally, we have 
$$
\lambda_{\min}(\bm{K}) > 0
$$

Then we consider the NNGP for node $t$: 

$$
 \bm{K}^{(t)} = \sum_{s \in \mathcal{S}} {\bm{ K}}^{(s,t)} + \sum_{s_1, s_2 \in \mathcal{S}}  {\bm{b}}^{(s_1,t)}  {\bm{b}}^{(s_2,t)}
$$
where $\mathcal{S}$ is the set of nodes in the DAG except $t$. When $\mat{W}^{(s,t)}$ is the skip-connection operator, then we have $\bm{K}^{(s,t)} = \bm{K}^{(s)} $. On the other hand, when $\mat{W}^{(s,t)}$ corresponds to linear transformation, we have $\bm{K}_{ij}^{(s,t)} = \mathbb{E}_{\mat{W} \sim \mathcal{N}(0,\bm{I})}[ \sigma(\mat{W}^\top \bm{z}_i) \sigma(\mat{W}^\top \bm{z}_j)] $, with $\bm{z} \in  \bm{Z}$ and $\bm{Z} = \bm{D}^{1/2} \bm{U}^\top$, where $\bm{U}\bm{D}\bm{U}^\top = \bm{K}^{(m)}$. Thus for each $\bm{K}_{ij}^{(s,t)}$ we can apply the result above and conclude each matrix is positive definite. 

In addition, in each matrix ${\bm{b}}^{(s_1,t)}  {\bm{b}}^{(s_2,t)}$, all entries are the same. Because
$$
\lambda_{\min} \begin{bmatrix}
     \bm{K}_{ii}  & \bm{K}_{ij}\\
     \bm{K}_{ji} & \bm{K}_{jj} 
 \end{bmatrix} = \lambda_{\min} \bigg( \begin{bmatrix} 
     \bm{K}_{ii}  & \bm{K}_{ij}\\
     \bm{K}_{ji} & \bm{K}_{jj} 
 \end{bmatrix} + \begin{bmatrix} 
     \bm{b}_{i} (\bm{b}_{i})^\top  & \bm{b}_{i} (\bm{b}_{j})^\top\\
     \bm{b}_{j} (\bm{b}_{i})^\top & \bm{b}_{j} (\bm{b}_{j})^\top 
 \end{bmatrix} \bigg)
$$
Thus we know that $ \bm{K}^{(t)}$ is positive definite.

Repeat the argument above, we can conclude that $\bm{K}^{(H)}$ is full rankness.

\end{proof}

\section{Convergence vs. DAG Topology}

In Figure~\ref{fig:dag_convergence}, we sample 408 networks from our DAG space ($H = 3$).
We empirically measure how many epochs each network requires to reach 80\% training accuracy as its convergence rate, and plot these epochs with networks' topologies.
We observe that wide and shallow networks show faster convergence.
This observation is aligned with Eq.~\ref{eq:rule}, which states that networks with larger widths ($P_H$, the number of end-to-end paths) and smaller depths ($d_p$, the number of ``Linear + ReLU'' layers on each path) will have large $\lambda_0$.
However, as convergence only partially explains the final test accuracy (with missing factors like generalization and model complexity), Figure~\ref{fig:dag_convergence} does not conflict with Figure~\ref{fig:dag_performance}.

\begin{figure*}[h!]
\includegraphics[scale=0.4]{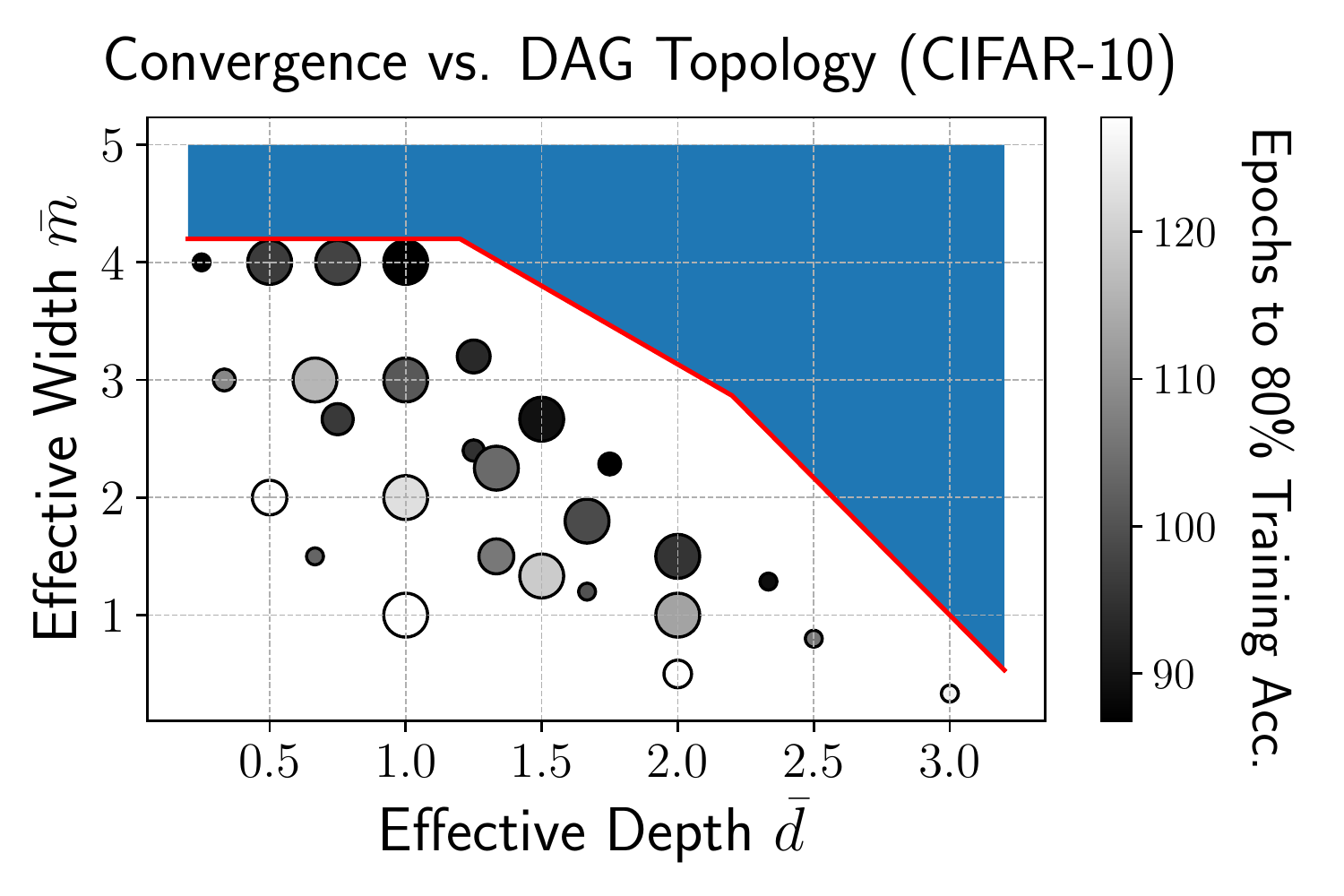}
\centering
\captionsetup{font=small}
\caption{In a complete architecture space, being extremely large or small on either the effective depth ($\bar{d}$) or the effective width ($\bar{m}$) leads a connectivity pattern to have slow convergence (white circles). Each dot represents a subset of architectures of the same $\bar{d}$ and $\bar{m}$. Blue areas indicate invalid DAG regions: a DAG cannot be both deep and wide at the same time.}
\label{fig:dag_convergence}
\end{figure*}

\end{document}